\definecolor{royalblue}{rgb}{0.0, 0.22, 0.66}
\definecolor{RoyalBlue}{cmyk}{1, 0.90, 0, 0}
\def\Real{\mathop{\mathbb{R}}\nolimits}
\newcommand{\supp}{\text{supp}}
\newcommand{\mmd}{\text{MMD}}
\newcommand{\pdim}{\text{Pdim}}
\newcommand{\prob}{\mathbb{P}}
\newcommand{\ba}{\boldsymbol{a}}
\newcommand{\bi}{\boldsymbol{i}}
\newcommand{\bj}{\boldsymbol{j}}
\newcommand{\bs}{\boldsymbol{s}}
\newcommand{\bv}{\boldsymbol{v}}
\newcommand{\bx}{\boldsymbol{x}}
\newcommand{\by}{\boldsymbol{y}}
\newcommand{\bdelta}{\boldsymbol{\delta}}
\newcommand{\bepsilon}{\boldsymbol{\epsilon}}
\newcommand{\bsigma}{\boldsymbol{\sigma}}
\newcommand{\btheta}{\boldsymbol{\theta}}
\newcommand{\cA}{ \mathcal{A}}
\newcommand{\cC}{ \mathcal{C}}
\newcommand{\cF}{ \mathcal{F}}
\newcommand{\cG}{ \mathcal{G}}
\newcommand{\cH}{ \mathcal{H}}
\newcommand{\cI}{ \mathcal{I}}
\newcommand{\cL}{ \mathcal{L}}
\newcommand{\cM}{ \mathcal{M}}
\newcommand{\cN}{ \mathcal{N}}
\newcommand{\cO}{ \mathcal{O}}
\newcommand{\cR}{ \mathcal{R}}
\newcommand{\cS}{ \mathcal{S}}
\newcommand{\cT}{ \mathcal{T}}
\newcommand{\cW}{ \mathcal{W}}
\newcommand{\cX}{ \mathcal{X}}
\newcommand{\cZ}{ \mathcal{Z}}
\newcommand{\sA}{ \mathscr{A}}
\newcommand{\sB}{ \mathscr{B}}
\newcommand{\sE}{ \mathscr{E}}
\newcommand{\sF}{ \mathscr{F}}
\newcommand{\sG}{ \mathscr{G}}
\newcommand{\sH}{ \mathscr{H}}
\newcommand{\sK}{ \mathscr{K}}
\newcommand{\sM}{ \mathscr{M}}
\newcommand{\sP}{ \mathscr{P}}
\newcommand{\sR}{ \mathscr{R}}
\newcommand{\sW}{ \mathscr{W}}
\newcommand{\hE}{ \hat{E}}
\newcommand{\hG}{ \hat{G}}
\newcommand{\hX}{ \hat{X}}
\newcommand{\hnu}{\hat{\nu}}
\newcommand{\hmu}{\hat{\mu}}
\newcommand{\E}{\mathbb{E}}
\newcommand{\hs}{\text{HS}}
\newcommand{\relu}{\text{ReLU}}
\newcommand{\fH}{\mathbb{H}}
\newcommand{\fL}{\mathbb{L}}
\newcommand{\dopt}{\Delta_{\text{opt}}}
\newcommand{\dmis}{\Delta_{\text{miss}}}
\newcommand{\mdim}{{\overline{\text{dim}}_M}}
\newtheorem{defn}{Definition}
\newtheorem{thm}[defn]{Theorem}
\newtheorem{lem}[defn]{Lemma}
\newtheorem{ass}{A\hspace{-4pt}}
\newtheorem{rem}[defn]{Remark}
\title{\vspace{-50pt}A Statistical Analysis of Wasserstein Autoencoders for Intrinsically Low-dimensional Data}
\author[1]{Saptarshi Chakraborty\thanks{email: \texttt{saptarshic@berkeley.edu}}}
 \author[1,2,3]{Peter L.~Bartlett\thanks{email: \texttt{peter@berkeley.edu}}}
 \affil[1]{Department of Statistics, UC Berkeley}
  \affil[2]{Department of Electrical Engineering and Computer Sciences, UC Berkeley}
 \affil[3]{Google DeepMind}
\date{\vspace{-5ex}}
\begin{document}
\maketitle
\begin{abstract}
Variational Autoencoders (VAEs) have gained significant popularity among researchers as a powerful tool for understanding unknown distributions based on limited samples. This popularity stems partly from their impressive performance and partly from their ability to provide meaningful feature representations in the latent space. Wasserstein Autoencoders (WAEs), a variant of VAEs, aim to not only improve model efficiency but also interpretability. However, there has been limited focus on analyzing their statistical guarantees. The matter is further complicated by the fact that the data distributions to which WAEs are applied - such as natural images - are often presumed to possess an underlying low-dimensional structure within a high-dimensional feature space, which current theory does not adequately account for, rendering known bounds inefficient. To bridge the gap between the theory and practice of WAEs, in this paper, we show that WAEs can learn the data distributions when the network architectures are properly chosen. We show that the convergence rates of the expected excess risk in the number of samples for WAEs are independent of the high feature dimension, instead relying only on the intrinsic dimension of the data distribution.
\end{abstract}

\section{Introduction}
The problem of understanding and possibly simulating samples from an unknown distribution only through some independent realization of the same is a key question for the machine learning community. Parallelly with the appearance of Generative Adversarial Networks (GANs) \citep{NIPS2014_5ca3e9b1}, Variational Autoencoders \citep{kingma2014autoencoding} have also gained much attention not only due to their useful feature representation properties in the latent space but also for data generation capabilities. It is important to note that in GANs, the generator network learns to create new samples that are similar to the training data by fooling the discriminator network. However, GANs and their popular variants do not directly provide a way to manipulate the generated data or explore the latent space of the generator. On the other hand, a VAE learns a latent space representation of the input data and allows for interpolation between the representations of different samples. Several variants of VAEs have been proposed to improve their generative performance. One popular variant is the conditional VAE (CVAE) \citep{sohn2015learning}, which adds a conditioning variable to the generative model and has shown remarkable empirical success. Other variants include InfoVAE \citep{zhao2017infovae}, $\beta$-VAE \citep{higgins2017beta}, and VQ-VAE \citep{van2017neural}, etc., which address issues such as disentanglement, interpretability, scalability, etc.  Recent works have shown the effectiveness of VAEs and their variants in a variety of applications, including image \citep{gregor2015draw} and text generation \citep{yang2017improved}, speech synthesis \citep{tachibana2018efficiently}, and drug discovery \citep{gomez2018automatic}. A notable example is the DALL-E model \citep{ramesh2021zero}, which uses a VAE to generate images from textual descriptions.

However, despite their effectiveness in unsupervised representation learning, VAEs have been heavily criticized for their poor performance in approximating multi-modal distributions. Influenced by the superior performance of GANs, researchers have attempted to leverage this advantage of adversarial losses by incorporating them into VAE objective \citep{makhzani2015adversarial,mescheder2017adversarial}.  Wasserstein Autoencoder (WAEs) \citep{tolstikhin2017wasserstein} tackles the problem from an optimal transport viewpoint. Incorporating such a GAN-like architecture, not only preserves the latent space representation that is unavailable in GANs but also enhances data generation capabilities. Both VAEs and WAEs attempt to minimize the sum of a reconstruction cost and a regularizer that penalizes the difference between the distribution induced by the encoder and the prior distribution on the latent space. While VAEs force the encoder to match the prior distribution for each input example, which can lead to overlapping latent codes and reconstruction issues, WAEs force the continuous mixture of the encoder distribution over all input examples to match the prior distribution, allowing different examples to have more distant latent codes and better reconstruction results. Furthermore, the use of the Wasserstein distance allows WAEs to incorporate domain-specific constraints into the learning process. For example, if the data is known to have a certain structure or topology, this information can be used to guide the learning process and improve the quality of generated samples. This results in a more robust model that can handle a wider range of distributions, including multimodal and heavy-tailed distributions.

While VAE and its variants have demonstrated empirical success, little attention has been given to analyzing their statistical properties. Recent developments from an optimization viewpoint include \cite{rolinek2019variational}, who showed VAEs pursue Principal Component Analysis (PCA) embedding under certain situations, and \cite{koehler2022variational}, who analyzed the implicit bias of VAEs under linear activation with two layers. For explaining generalization, \cite{tang2021empirical} proposed a framework for analyzing excess risk for vanilla VAEs through M-estimation. When having access to $n$ i.i.d. samples from the target distribution, \cite{chakrabarty2021statistical} derived a bound based on the Vapnik-Chervonenkis (VC) dimension, providing a guarantee of $\cO(n^{-1/2})$-convergence with a non-zero margin of error, even under model specification. However, their analysis is limited to a parametric regime under restricted assumptions and only considers a theoretical variant of WAEs, known as $f$-WAEs \citep{fwae}, which is typically not implemented in practice.

Despite recent advancements in the understanding of VAEs and their variants, existing analyses fail to account for the fundamental goal of these models, i.e. to understand the data generation mechanism where one can expect the data to have an intrinsically low-dimensional structure. For instance, a key application of WAEs is to understand natural image generation mechanisms and it is believed that natural images have a low-dimensional structure, despite their high-dimensional pixel-wise representation \citep{pope2020intrinsic}. Furthermore, the current state-of-the-art views the problem only through a classical learning theory approach to derive $\cO(n^{-1/2})$ or faster rates (under additional assumptions) ignoring the model misspecification error. Thus, such rates do not align with the well-known rates for classical non-parametric density estimation approaches \citep{kim2019uniform}. Additionally, these approaches only consider the scenario where the network architecture is fixed, but in practice, larger models are often employed for big datasets.

In this paper, we aim to address the aforementioned shortcomings in the current literature and bridge the gap between the theory and practice of WAEs. Our contributions include:

\begin{itemize}
    \item We propose a framework to provide an error analysis of Wasserstein Autoencoders (WAEs) when the data lies in a low-dimensional structure in the high-dimensional representative feature space.  \item Informally, our results indicate that if one has $n$ independent and identically distributed (i.i.d.) samples from the target distribution, then under the assumption of Lipschitz-smoothness of the true model, if the corresponding networks are properly chosen, the error rate for the problem scales as $\tilde{\cO} \left(n^{-\frac{1}{2+d_\mu}}\right)$, where, $d_\mu$ is the upper Minkowski dimension of the support of the target distribution.  
    \item The networks can be chosen as having $\cO(n^{\gamma_e})$ many weights for the encoder and $\cO(n^{\gamma_g})$  for the generator, where, $\gamma_e, \gamma_g \le 1$ and only depend on $d_\mu$ and $\ell$ (dimension of the latent space), respectively. Furthermore, the values of $\gamma_e$ and $\gamma_g$ decrease as the true model becomes smoother. 
    \item We show that one can ensure encoding and decoding guarantees, i.e. the encoded distribution is close enough to the target latent distribution, and the generator maps back the encoded points close to the original points.  Under additional regularity assumptions, we show that the approximating push-forward measure, induced by the generator, is close to the target distribution, in the Wasserstein sense, almost surely.  
\end{itemize}

\vspace{-5pt}
\section{A Proof of Concept}
\vspace{-5pt}
Before we theoretically explore the problem, we discuss an experiment to demonstrate that the error rates for WAEs depend primarily only on the intrinsic dimension of the data. Since it is difficult to assess the intrinsic dimensionality of natural images, we follow the prescription of \cite{pope2020intrinsic} to generate low-dimensional synthetic images. We use a pre-trained Bi-directional GAN \citep{donahue2017adversarial} with $128$ latent entries and outputs of size $128 \times 128 \times 3$, trained on the ImageNet dataset \citep{deng_imagenet}. Using the decoder of this pre-trained BiGAN, we generate $11,000$ images, from the class, \texttt{soap-bubble} where we fix most entries of the latent vectors to zero leaving only \texttt{d\_int} free
entries. We take \texttt{d\_int} to be $2$ and $16$, respectively.  We reduce the image sizes to $28 \times 28$ for computational ease. We train a WAE model with the standard architecture as proposed by \cite{tolstikhin2017wasserstein} with the number of training samples varying in $\{2000, 4000, \ldots, 10000\}$ and keep the last $1000$ images for testing. For the latent distribution, we use the standard Gaussian distribution on the latent space $\Real^8$ and use the Adam optimizer \citep{kingma2015adam} with a learning rate of $0.0001$. We also take $\lambda = 10$ for the penalty on the dissimilarity in objective \eqref{emp_obj_2}. After training for $10$ epochs, we generate $1000$ sample images from the distribution $\hat{G}_\sharp \nu$ (see Section~\ref{sec_back} for notations) and compute the Frechet Inception Distance (FID) \citep{heusel2017gans} to assess the quality of the generated samples with respect to the target distribution. We also compute the reconstruction error for these test images. We repeat the experiment $10$ times and report the average.  The experimental results for both variants of WAE, i.e. the GAN and MMD are shown in Fig.~\ref{fig:1}. It is clear from Fig.~\ref{fig:1} that the error rates for \texttt{d\_int} = 2 is lower than for the case \texttt{d\_int} = 16. The codes for this experimental study can be found at \url{https://github.com/SaptarshiC98/WAE}.
\begin{figure}
     \centering
     \begin{subfigure}[t]{0.23\textwidth}
         \centering
         \includegraphics[width=\textwidth]{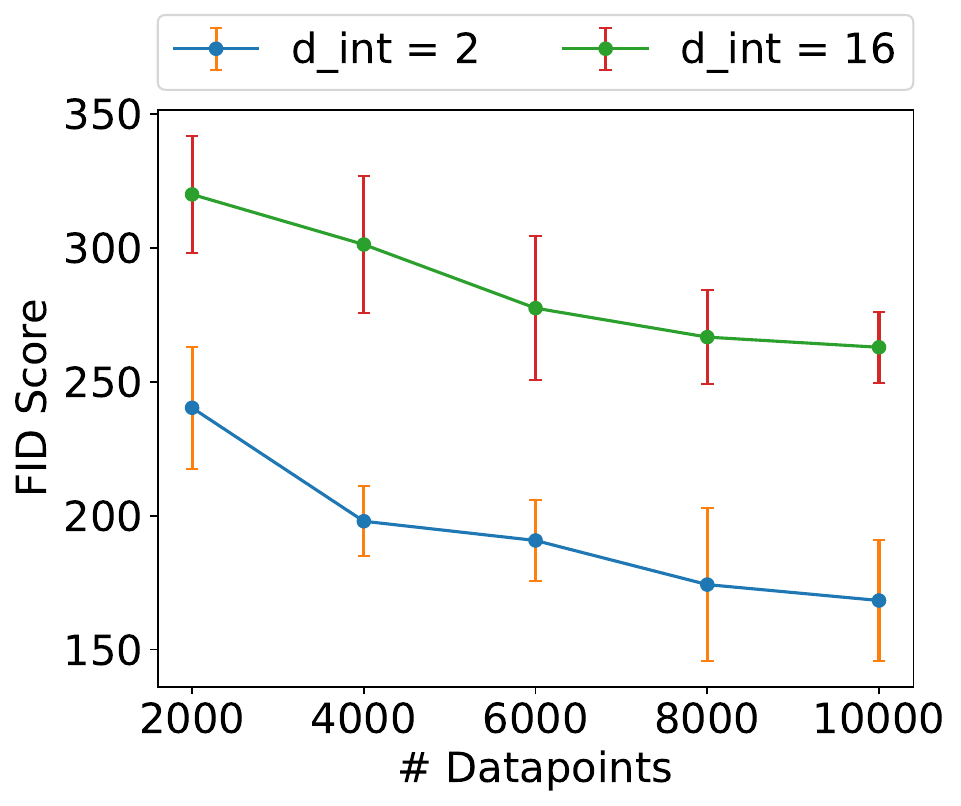}
         \caption{FID scores for GAN-WAE}
     \end{subfigure}
     ~
     \begin{subfigure}[t]{0.23\textwidth}
         \centering
         \includegraphics[width=\textwidth]{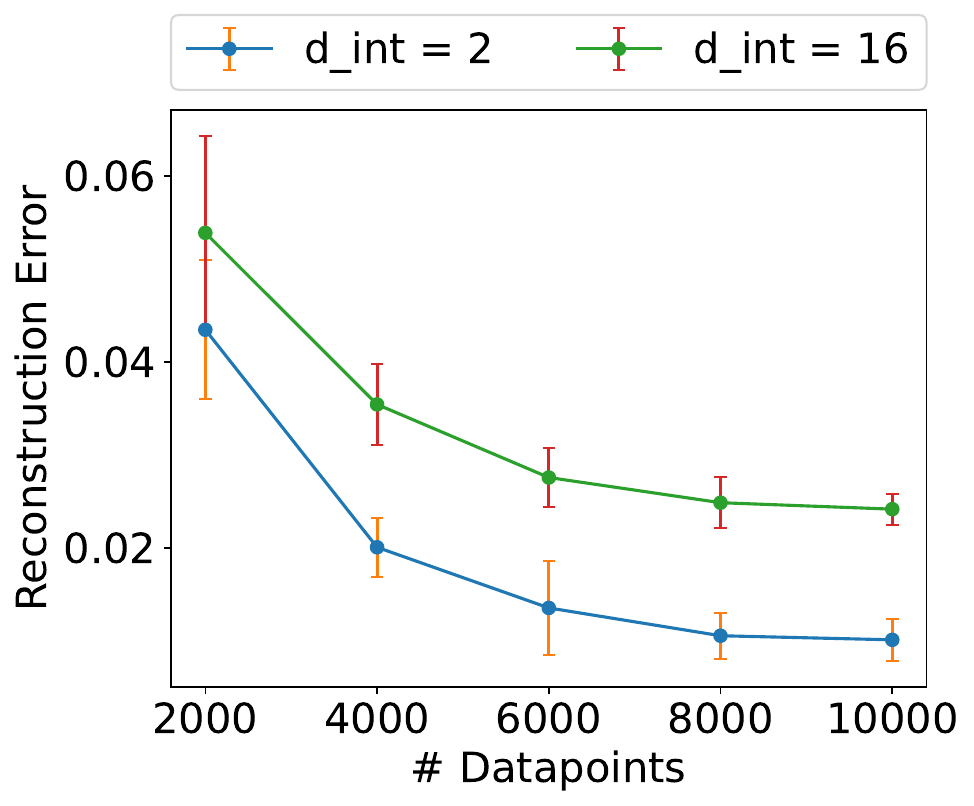}
         \caption{Reconstruction error for GAN-WAE}
     \end{subfigure}
     ~
     \begin{subfigure}[t]{0.23\textwidth}
         \centering
         \includegraphics[width=\textwidth]{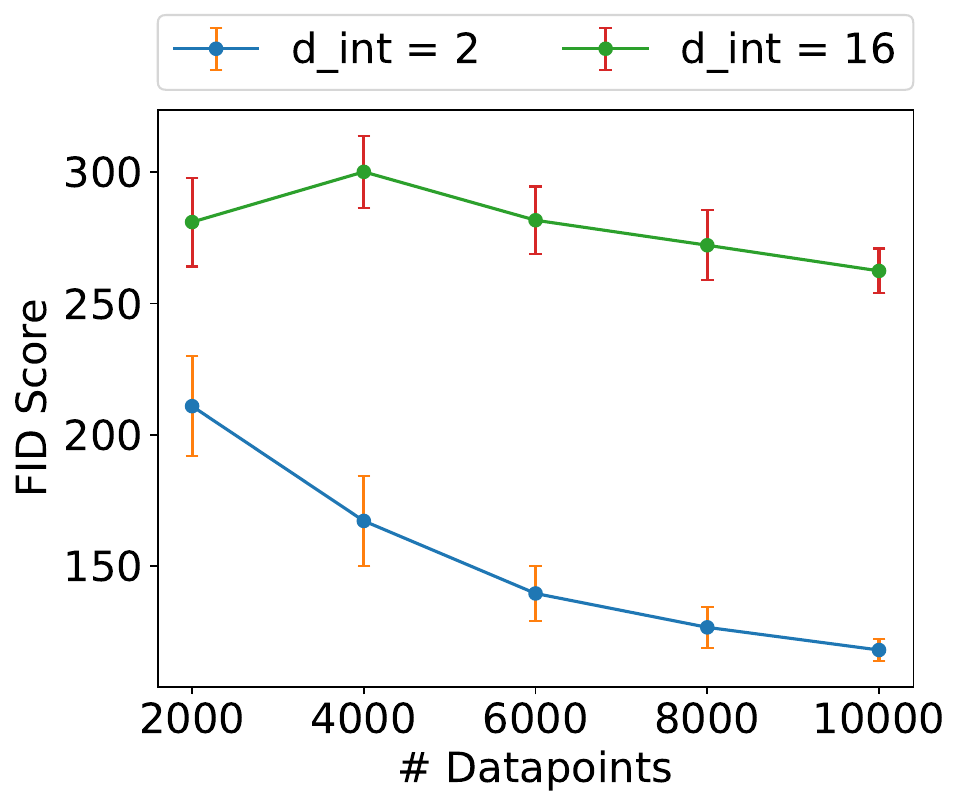}
         \caption{FID scores for MMD-WAE}
     \end{subfigure}
     ~
     \begin{subfigure}[t]{0.24\textwidth}
         \centering
         \includegraphics[width=\textwidth]{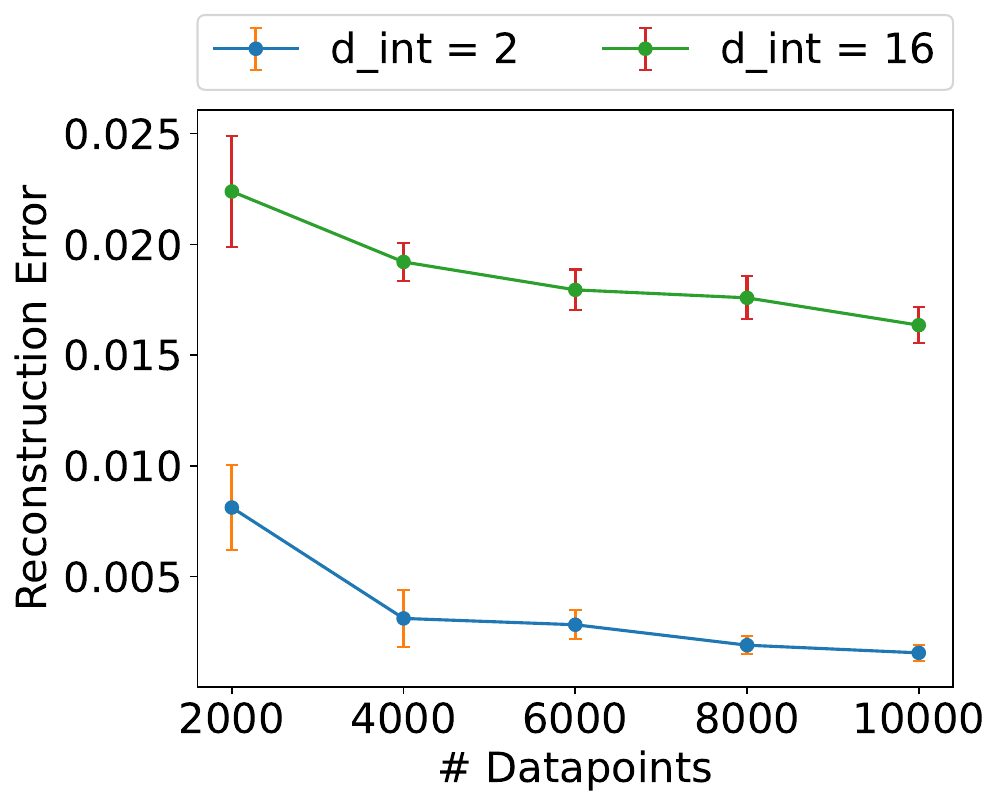}
         \caption{Reconstruction error for MMD-WAE}
     \end{subfigure}
        \caption{Average generalization error (in terms of FID scores) and reconstruction test errors for different values of $n$ for GAN and MMD variants of WAE. The error bars denote the standard deviation out of $10$ replications.}
        \label{fig:1}
        \vspace{-10pt}
\end{figure}
\vspace{-5pt}
\section{Background}
\label{sec_back}

\vspace{-5pt}\subsection{Notations and some Preliminary Concepts}
\vspace{-5pt}
This section introduces preliminary notation and concepts for theoretical analyses.
\paragraph{Notation}
We use notations $x \vee y : = \max\{x,y\}$ and $x \wedge y : = \min\{x,y\}$.  $T_\sharp \mu$ denotes the push-forward of measure $\mu$ by the map $T$. For function $f: \cS \to \Real$, and probability measure $\gamma$ on $\cS$, let $\|f\|_{\fL_p(\gamma)} : = \left(\int_\cS |f(x)|^p d \gamma(x) \right)^{1/p}$. Similarly, $\|f\|_{\fL_\infty(\sA)} := \sup_{x \in \sA} |f(x)|$. For any function class $\cF$, and distributions $P$ and $Q$, $\|P - Q\|_\cF = \sup_{f \in \cF} |\int f dP - \int f dQ|$ denotes the Integral Probability Metric (IPM) w.r.t. $\cF$. We say $A_n \lesssim B_n$ (also written as $A_n = \cO(B_n)$) if there exists $C>0$, independent of $n$, such that $A_n \le C B_n$. Similarly, we use the notation, $A_n \precsim B_n$ (also written as $A_n = \tilde{\cO}(B_n)$) if $A_n \le C B_n \log^C(en)$, for some $C>0$. We say $A_n \asymp B_n$, if $A_n \lesssim B_n$ and $B_n \lesssim A_n$. For a function $f:\Real^{d_1} \to \Real^{d_2}$, we write, $\|f\|_{\text{Lip}} = \sup_{x \neq y} \frac{\|f(x) - f(y)\|_2}{\|x-y\|_2}$.


\begin{defn}[Neural networks]\normalfont
Let $L \in \mathbb{N}$ and $ \{N_i\}_{i \in [L]} \subset \mathbb{N}$. Then a $L$-layer neural network $f: \Real^d \to \Real^{N_L}$ is defined as,
\begin{equation}
\label{ee1}
f(x) = A_L \circ \sigma_{L-1} \circ A_{L-1} \circ \dots \circ \sigma_1 \circ A_1 (x)    
\end{equation}
Here, $A_i(y) = W_i y + b_i$, with $W_i \in \Real^{N_{i} \times N_{i-1}}$ and $b_i \in \Real^{N_{i-1}}$, with $N_0 = d$. $\sigma_j$ is applied component-wise.  Here, $\{W_i\}_{1 \le i \le L}$ are known as weights, and $\{b_i\}_{1 \le i \le L}$ are known as biases. $\{\sigma_i\}_{1 \le i \le L-1}$ are known as the activation functions. Without loss of generality, one can take $\sigma_\ell(0) = 0, \, \forall \, \ell \in [L-1]$. We define the following quantities:  
(Depth) $\cL(f) : = L$ is known as the depth of the network; (Number of weights) The number of weights of the network $f$ is denoted as $\cW(f)$.
\begin{align*}
    \cN \cN_{\{\sigma_i\}_{i \in [L-1]}} (L, W, R) = \{ & f \text{ of the form \eqref{ee1}}: \cL(f) \le L , \, \cW(f) \le W, \sup_{\bx \in \Real^d}\|f(\bx)\|_\infty \le R  \}.
\end{align*}
 If $\sigma_j(x) = x \vee 0$, for all $j=1,\dots, L-1$, we denote $\cN \cN_{\{\sigma_i\}_{1 \le i \le L-1}} (L, W, R)$ as $\cR \cN (L, W, R)$. We often omit $R$ in cases where it is clear that $R$ is bounded by a constant.
 \end{defn}

 \begin{defn}[H\"older functions]\normalfont
Let $f: \mathcal{S} \to \Real$ be a function, where $\mathcal{S} \subseteq \Real^d$. For a multi-index $\bs = (s_1,\dots,s_d)$, let, $\partial^{\bs} f = \frac{\partial^{|\bs|} f}{\partial x_1^{s_1} \dots \partial x_d^{s_d}}$, where, $|\bs| = \sum_{\ell = 1}^d s_\ell $. We say that a function $f: \cS \to \Real$ is $\beta$-H\"{o}lder (for $\beta >0$) if
\[ \|f\|_{\sH^\beta}: =\sum_{\bs: 0 \le |\bs| < \lfloor \beta \rfloor} \|\partial^{\bs} f\|_\infty  + \sum_{\bs: |\bs| =\lfloor \beta \rfloor} \sup_{x \neq y}\frac{\|\partial^{\bs} f(x)  - \partial^{\bs} f(y)\|}{\|x - y\|^{\beta - \lfloor \beta \rfloor}} < \infty. \]
If $f: \Real^{d_1} \to \Real^{d_2}$, then we define $\|f\|_{\sH^{\beta}} = \sum_{j = 1}^{d_2}\|f_j\|_{\sH^{\beta}}$. 
 
For notational simplicity, let, $\sH^\beta(\cS_1, \cS_2,C) = \{f: \cS_1 \to \cS_2: \|f\|_{\sH^\beta} \le C\}$. Here, both $\cS_1$ and $\cS_2$ are both subsets of a real vector spaces. 
\end{defn}

\begin{defn}[Maximum Mean Discrepancy (MMD)]
    \normalfont
    Let $\fH_{\sK}$ be the Reproducible Kernel Hilbert Space (RKHS) corresponding to the reproducing kernel $\sK(\cdot, \cdot)$, defined on $\Real^d$. Let the corresponding norm in this RKHS be $\|\cdot\|_{\fH_{\sK}}$. The Maximum Mean Discrepancy between two distributions $P$ and $Q$ is defined as:
    \(\mmd_{\sK}(P,Q) = \sup_{f: \|f\|_{\fH_{\sK}} \le 1}\left( \int f dP - \int f dQ\right).\)
\end{defn}
\vspace{-5pt}
\subsection{Wasserstein Autoencoders}
\vspace{-5pt}
Let $\mu$ be a distribution in the data-space $\cX = [0,1]^d$ and $\cZ = [0,1]^\ell$ be the latent space. In Wasserstein Autoencoders \citep{tolstikhin2017wasserstein}, one tries to learn a generator map, $G: \cZ \to \cX$ and an encoder map $E: \cX \to \cZ$ by minimizing the following objective,
\begin{equation}
    \label{obj1}
    V(\mu, \nu, G,E) =  \int c(x, G \circ E(x) ) d\mu(x) + \lambda \text{diss}(E_\sharp \mu, \nu).
\end{equation}
Here, $\lambda>0$ is a hyper-parameter, often tuned based on the data. The first term in \eqref{obj1} aims to minimize a reconstruction error, i.e. the decoded value of the encoding should approximately result in the same value. The second term ensures that the encoded distribution is close to a known distribution $\nu$ that is easy to sample from. The function $c(\cdot, \cdot)$-is a loss function on the data space. For example, \cite{tolstikhin2017wasserstein} took $c(x,y) = \|x-y\|_2^2$. $\text{diss}(\cdot, \cdot)$ is a dissimilarity measure between probability distributions defined on the latent space. \cite{tolstikhin2017wasserstein} recommended either a GAN-based dissimilarity measure or a Maximum Mean Discrepancy (MMD)-based measure \citep{JMLR:v13:gretton12a}. In this paper, we will consider the special cases, where this dissimilarity measure is taken to be the Wasserstein-1 metric, which is the dissimilarity measure for WGANs \citep{arjovsky2017wasserstein,gulrajani2017improved} or the squared MMD-metric.

In practice, however, one does not have access to $\mu$ but only a sample $\{X_i\}_{i \in [n]}$, assumed to be independently generated from $\mu$. Let $\hmu_n$ be the empirical measure based on the data. One then minimizes the following empirical objective to estimate $E$ and $G$.
\begin{equation}
    \label{emp_obj}
     V(\hmu_n , \nu, G,E) =  \int c(x, G \circ E(x) ) d\hmu_n(x) + \lambda \widehat{\text{diss}}(E_\sharp \hmu_n, \nu).
\end{equation}
 Here, $\widehat{\text{diss}}(\cdot,\cdot)$ is an estimate of $\text{diss}(\cdot, \cdot)$, based only on the data, $\{X_i\}_{i \in [n]}$. For example, if $\text{diss}(\cdot, \cdot)$ is taken to be the Wasserstein-1 metric, then, $\widehat{\text{diss}}(E_\sharp \hmu_n, \nu) = \sW_1(E_\sharp \hmu_n, \nu)$. On the other hand, if $\text{diss}(\cdot, \cdot)$ is taken to be the $\mmd^2_{\sK}$-measure, one can take, 
 \[\widehat{\text{diss}}(E_\sharp \hmu_n, \nu) = \frac{1}{n(n-1)}\sum_{i \neq j} \sK(E(X_i), E(X_j))  + \E_{Z, Z^\prime \sim \nu}  \sK(Z, Z^\prime) -  \frac{2}{n}\sum_{i=1}^n \int \sK(E(X_i), z) d\nu(z).\]
 Of course, in practice, one does a further estimation of the involved dissimilarity measure through taking an estimate $\hnu_m$, based on $m$ i.i.d samples $\{Z_j\}_{j \in [m]}$ from $\nu$, i.e. $\hnu_m = \frac{1}{m}\sum_{j=1}^m \delta_{Z_j}$.
 In this case the estimate of $V$ in \eqref{obj1} is given by,
 \begin{equation}
    \label{emp_obj_2}
     V(\hmu_n, \hnu_m, G,E) =  \int c(x, G \circ E(x) ) d\hmu_n(x) + \lambda \widehat{\text{diss}}(E_\sharp \hmu_n, \nu_m).
\end{equation}
If $\text{diss}(\cdot, \cdot)$ is taken to be the Wasserstein-1 metric, then, $\widehat{\text{diss}}(E_\sharp \hmu_n, \hnu_m) = \sW_1(E_\sharp \hmu_n, \hnu_m)$. On the other hand, if $\text{diss}(\cdot, \cdot)$ is taken to be the $\mmd^2_{\sK}$-measure, one can take, \[\widehat{\text{diss}}(E_\sharp \hmu_n, \hnu_m) = \frac{1}{n(n-1)}\sum_{i \neq j} \sK(E(X_i), E(X_j))  + \frac{1}{m(m-1)}\sum_{i \neq j} \sK(Z_i, Z_j)  -  \frac{2}{nm}\sum_{i=1}^n \sum_{j=1}^m \sK(E(X_j), Z_j).\]

Suppose that $\dopt>0$ is the optimization error. The empirical WAE estimates satisfy the following properties: 
{\small
\begin{align}
   &  (\hG^n, \hE^n) \in \left\{ G \in \sG, \, E \in \sE: V(\hmu_n, \nu, G,E) \le \inf_{G \in \sG, E \in \sE} V(\hmu_n, \nu, G,E) + \dopt \right\} \label{est1}\\
    &  (\hG^{n,m}, \hE^{n,m}) \in \left\{ G \in \sG, \, E \in \sE: V(\hmu_n, \hnu_n, G,E) \le \inf_{G \in \sG, E \in \sE}V(\hmu_n, \hnu_m, G,E) + \dopt\right\}. \label{est2}
\end{align}
}%
The functions in $\sG$ and $\sE$ are implemented through neural networks with ReLU activation $\cR\cN(L_g, W_g)$ and $\cR\cN(L_e, W_e)$, respectively.  
\section{Intrinsic Dimension of Data Distribution}
\label{intrinsic}
Real data is often assumed to have a lower-dimensional structure within the high-dimensional feature space. Various approaches have been proposed to characterize this low dimensionality, with many using some form of covering number to measure the effective dimension of the underlying measure. Recall that the $\epsilon$-covering number of $\cS$ w.r.t. the metic $\varrho$ is defined as \(\cN(\epsilon; \cS, \varrho) = \inf\{n \in \mathbb{N}: \exists \, x_1, \dots x_n \text{ such that } \cup_{i=1}^nB_\varrho(x_i, \epsilon) \supseteq \cS\},\) with $B_\varrho(x,\epsilon) = \{y: \varrho(x,y) < \epsilon\}$. We characterize this low-dimensional nature of the data, through the (upper) Minkowski dimension of the support of $\mu$. We recall the definition of Minkowski dimensions, 
\begin{defn}[Upper Minkowski dimension]
For a bounded metric space $(\cS, \varrho)$, the upper Minkwoski dimension of $\cS$ is defined as $\mdim(\cS, \varrho) = \limsup_{\epsilon \downarrow 0} \frac{\log \cN(\epsilon; \, \cS,\, \varrho )}{\log(1/\epsilon)}$.
\end{defn}
Throughout this analysis, we will assume that $\varrho$ is the $\ell_\infty$-norm and simplify the notation to $\mdim(\cS)$. $\mdim(\cS,\varrho)$ essentially measures how the covering number of $\cS$ is affected by the radius of balls covering that set. As the concept of dimensionality relies solely on covering numbers and doesn't require a smooth mapping to a lower-dimensional Euclidean space, it encompasses both smooth manifolds and even highly irregular sets like fractals. In the literature,  \cite{kolmogorov1961} provided a comprehensive study on the dependence of the covering number of different function classes on the underlying Minkowski dimension of the support. \cite{JMLR:v21:20-002} showed how deep regression learners can incorporate this low-dimensionality of the data that is also reflected in their convergence rates. Recently, \cite{huang2022error} showed that WGANs can also adapt to this low-dimensionality of the data. For any measure $\mu$ on $[0,1]^d$, we use the notation $d_\mu : = \mdim(\text{supp}(\mu))$. When the data distribution is supported on a low-dimensional structure in the nominal high-dimensional feature space, one can expect $d_\mu \ll d$.

 It can be observed that the image of a unit hypercube under a H\"{o}lder map has a Minkowski dimension that is no more than the dimension of the pre-image divided by the exponent of the H\"{o}lder map.
 \begin{restatable}{lem}{lemminone}\label{lem min 1}
Let, $f \in \sH^\gamma\left( \sA , [0,1]^{d_2}, C\right)$, with $\sA \subseteq [0,1]^{d_1}$. Then, $\overline{\text{dim}}_M \left(f\left(\sA \right)\right) \le \overline{\text{dim}}_M(\sA)/(\gamma \wedge 1) $.
\end{restatable}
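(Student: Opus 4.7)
The plan is to reduce the statement to the standard fact that an $\alpha$-Hölder map (with $\alpha \le 1$) between metric spaces multiplies the Minkowski dimension by at most $1/\alpha$, and then to verify that the effective Hölder exponent of $f$ is $\gamma \wedge 1$.

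First I would show that there exists a constant $C_1 = C_1(C, d_2)$ such that for every $x, y \in \sA$,
\begin{equation*}
\|f(x) - f(y)\|_\infty \le C_1 \|x - y\|_\infty^{\gamma \wedge 1}.
\end{equation*}
When $\gamma \le 1$, this is immediate from the definition of $\|f\|_{\sH^\gamma}$ applied coordinatewise. When $\gamma > 1$, the definition of $\|f\|_{\sH^\gamma}$ forces each first-order partial derivative of each coordinate $f_j$ to be bounded by $C$, so a mean value argument on $[0,1]^{d_1}$ yields Lipschitz continuity with constant depending only on $C$ and $d_1, d_2$; since $\sA$ and $f(\sA)$ both lie in unit cubes, Lipschitz continuity is stronger than $(\gamma \wedge 1) = 1$ Hölder continuity here.

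Next I would convert covers under $f$. Given $\epsilon > 0$ small, set $\delta = (\epsilon/C_1)^{1/(\gamma \wedge 1)}$ and take an optimal $\delta$-cover $\{B_\infty(x_i, \delta)\}_{i=1}^{N}$ of $\sA$, where $N = \cN(\delta; \sA, \|\cdot\|_\infty)$. For any $z \in f(\sA)$, write $z = f(x)$ with $x \in B_\infty(x_i, \delta) \cap \sA$ for some $i$, and use the Hölder bound to conclude
\begin{equation*}
\|z - f(x_i)\|_\infty \le C_1 \delta^{\gamma \wedge 1} = \epsilon,
\end{equation*}
which gives $\cN(\epsilon; f(\sA), \|\cdot\|_\infty) \le \cN(\delta; \sA, \|\cdot\|_\infty)$.

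Finally I would take the $\limsup$. Taking logarithms, since $\log(1/\epsilon) = (\gamma \wedge 1)\log(1/\delta) + \log C_1$, I get
\begin{equation*}
\frac{\log \cN(\epsilon; f(\sA))}{\log(1/\epsilon)} \le \frac{1}{\gamma \wedge 1} \cdot \frac{\log \cN(\delta; \sA)}{\log(1/\delta) + (\log C_1)/(\gamma \wedge 1)},
\end{equation*}
and letting $\epsilon \downarrow 0$ (equivalently $\delta \downarrow 0$) yields $\mdim(f(\sA)) \le \mdim(\sA)/(\gamma \wedge 1)$, as claimed. The only mildly subtle step is the $\gamma > 1$ reduction to Lipschitz behavior; everything else is a direct covering-number manipulation.
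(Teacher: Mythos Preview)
Your proposal is correct and follows essentially the same route as the paper: establish that $f$ is $(\gamma \wedge 1)$-H\"older in the $\ell_\infty$ metric, push a $\delta$-cover of $\sA$ forward to an $\epsilon$-cover of $f(\sA)$, and pass to the $\limsup$. The paper's argument is terser (it simply asserts the H\"older bound $\|f(x)-f(y)\|_2 \le L\|x-y\|_2^{\gamma \wedge 1}$ without separating the cases $\gamma\le 1$ and $\gamma>1$), whereas you spell out the mean-value reduction for $\gamma>1$; otherwise the two proofs are interchangeable.
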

\vspace{-5pt}
\section{Theoretical Analyses}\label{theo}
\vspace{-5pt}
\subsection{Assumptions and Error Decomposition}\label{sec_4_1}
To facilitate theoretical analysis, we assume that the data distributions are realizable, meaning that a ``true" generator and a ``true" encoder exist. Specifically, we make the assumption that there is a true smooth encoder that maps the $\mu$ to $\nu$, and the left inverse of this true encoder exists and is also smooth. Formally,
  \begin{ass}\label{a1}
There exists $\tilde{G} \in \sH^{\alpha_g}([0,1]^d, [0,1]^\ell, C)$ and $\tilde{E} \in \sH^{\alpha_e}([0,1]^\ell, [0,1]^d, C)$, such that, $\tilde{E}_\sharp \mu = \nu$ and $(\tilde{G} \circ \tilde{E}) (\cdot) = id(\cdot), \, \text{a.e. } [\mu]$.
\end{ass}

    It is also important to note that A\ref{a1} entails that the manifold has a single chart, in a probabilistic sense, which is a strong assumption. Naturally, when it comes to GANs, one can work with a weaker assumption as the learning task becomes notably much simpler as one does not have to learn an inverse map to the latent space. A similar problem, while analyzing autoencoders, was faced by \cite{liu2023deep} where they tackled the problem by considering chart-autoencoders, which have additional components in the network architecture, compared to regular autoencoders. A similar approach of employing chart-based WAEs could be proposed and subjected to rigorous analysis. This potential avenue could be an intriguing direction for future research.

One immediate consequence of assumption A\ref{a1} ensures that the generator maps $\nu$ to the target $\mu$. We can also ensure that the latent distribution remains unchanged if one passes it through the generator and maps it back through the encoder. Furthermore, the objective function \eqref{obj1} at this true encoder-generator pair takes the value, zero, as expected.
\begin{restatable}{proposition}{propoone}\label{prop1}
    Under assumption A\ref{a1}, the following holds: $(a) \, \tilde{G}_\sharp \nu = \mu, \, (b) \, (\tilde{E} \circ \tilde{G})_\sharp \nu = \nu , (c) \, V(\mu, \nu, \tilde{G}, \tilde{E}) = 0$.
\end{restatable}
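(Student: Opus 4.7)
The plan is to treat the three conclusions in order, with (a) being the crucial pushforward identity, (b) being a one-line consequence of (a), and (c) being immediate from the defining properties of $c$ and $\text{diss}$.

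For part (a), I would unfold the definition of a pushforward measure. For any Borel set $A \subseteq [0,1]^d$, write
\[
\tilde{G}_\sharp \nu(A) \;=\; \nu\bigl(\tilde{G}^{-1}(A)\bigr) \;=\; \tilde{E}_\sharp \mu\bigl(\tilde{G}^{-1}(A)\bigr) \;=\; \mu\bigl((\tilde{G}\circ \tilde{E})^{-1}(A)\bigr),
\]
where the second equality uses $\tilde{E}_\sharp \mu = \nu$ from A\ref{a1}. Now let $N = \{x \in [0,1]^d : (\tilde{G} \circ \tilde{E})(x) \neq x\}$; by A\ref{a1} we have $\mu(N) = 0$. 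On $N^c$ the map $\tilde{G}\circ\tilde{E}$ agrees with the identity, so $(\tilde{G}\circ \tilde{E})^{-1}(A) \cap N^c = A \cap N^c$, and hence $\mu((\tilde{G}\circ \tilde{E})^{-1}(A)) = \mu(A \cap N^c) = \mu(A)$. This gives $\tilde{G}_\sharp \nu = \mu$.

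Part (b) then follows by the functoriality of pushforwards: $(\tilde{E} \circ \tilde{G})_\sharp \nu = \tilde{E}_\sharp (\tilde{G}_\sharp \nu) = \tilde{E}_\sharp \mu = \nu$, using part (a) and A\ref{a1}. For part (c), I would split $V$ into its two terms. The reconstruction term satisfies
\[
\int c\bigl(x, (\tilde{G}\circ\tilde{E})(x)\bigr)\, d\mu(x) \;=\; \int_{N^c} c(x,x)\, d\mu(x) \;=\; 0,
\]
since $c(x,y) = \|x-y\|_2^2$ vanishes on the diagonal and $\mu(N)=0$. The regularization term satisfies $\text{diss}(\tilde{E}_\sharp \mu, \nu) = \text{diss}(\nu, \nu) = 0$ because $\tilde{E}_\sharp \mu = \nu$ and both candidate dissimilarities ($\sW_1$ and $\mmd_{\sK}^2$) vanish when their arguments coincide. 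Adding the two gives $V(\mu,\nu,\tilde{G},\tilde{E}) = 0$.

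The only subtlety worth being careful about is the handling of the $\mu$-null set $N$ in part (a): one might be tempted to write $(\tilde{G}\circ\tilde{E})^{-1}(A) = A$ outright, but this equality can fail on $N$, so the argument needs to pass to $N^c$ and use $\mu(N)=0$. All other steps are straightforward applications of the pushforward definition and the basic properties of the cost and dissimilarity functions, so I do not anticipate any other obstacle.
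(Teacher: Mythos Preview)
Your proof is correct and follows essentially the same route as the paper's: both establish (a) by unwinding the pushforward and invoking the two halves of A\ref{a1}, then obtain (b) and (c) as immediate consequences. The only cosmetic differences are that the paper verifies equality of measures by integrating against bounded continuous test functions rather than evaluating on Borel sets, and it repeats the integral computation for (b) instead of using your cleaner functoriality step $(\tilde E\circ\tilde G)_\sharp\nu=\tilde E_\sharp(\tilde G_\sharp\nu)=\tilde E_\sharp\mu=\nu$.
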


 From Lemma \ref{lem min 1}, It is clear that \(d_\mu = \overline{\text{dim}}_M \left(\text{supp}(\mu)\right) \le \overline{\text{dim}}_M \left(\tilde{G}\left( [0,1]^\ell \right)\right) \le \max \left\{\ell / (\alpha_g \wedge 1), d \right\}.\)
If $\ell \ll d$ and $\alpha_g$ is not very small, then, $d_\mu = (\alpha_g \wedge 1)^{-1}\ell \ll d$. Thus, the usual conjecture of $d_\mu \ll d$ can be modeled through assumption A\ref{a1} when the latent space has a much smaller dimension and the true generator is well-behaved, i.e. $\alpha_g$ is not too small.

A key step in the theoretical analysis is the following oracle inequality that bounds the excess risk in terms of the optimization error, misspecification error, and generalization error. 
 \begin{restatable}[Oracle Inequality]{lem}{oraclein} \label{oracle_thm} 
  Suppose that, $\sF = \{f(x) = c(x, G\circ E (x)) : G \in \sG, \, E \in \sE\}$. Then the following hold:
  {\small
  \begin{align}
      & V(\mu, \nu, \hG^n, \hE^n)  \le   \dmis + \dopt + 2 \|\hmu_n - \mu\|_\sF + 2 \lambda \sup_{E \in \sE} |\widehat{\text{diss}}(E_\sharp \hmu_n, \nu) - \text{diss}(E_\sharp \mu, \nu)|. \label{oracle1}\\
      & V(\mu, \nu, \hG^{n,m}, \hE^{n,m})  \le  \dmis + \dopt + 2 \|\hmu_n - \mu\|_\sF + 2 \lambda \sup_{E \in \sE} |\widehat{\text{diss}}(E_\sharp \hmu_n, \hnu_m) - \text{diss}(E_\sharp \mu, \nu)|. \label{oracle2}
  \end{align}
  }%
Here,  $\dmis = \inf_{G \in \sG, E \in \sE} V(\mu, \nu, G,E)$ denotes the misspecification error for the problem.
 \end{restatable}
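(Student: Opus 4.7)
The plan is a standard add-and-subtract argument that splits the excess risk into (i) the approximation/misspecification floor, (ii) the optimization slack, and (iii) two uniform deviation terms, one per summand in $V$. Both inequalities follow the same template; only the dissimilarity bookkeeping changes.

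First I would fix, for any $\eta>0$, a near-optimal pair $(G^\ast,E^\ast)\in\sG\times\sE$ with $V(\mu,\nu,G^\ast,E^\ast)\le \dmis+\eta$; at the end I let $\eta\downarrow 0$. Then I would write the telescoping decomposition
\begin{align*}
V(\mu,\nu,\hG^n,\hE^n)
&= \bigl[V(\mu,\nu,\hG^n,\hE^n)-V(\hmu_n,\nu,\hG^n,\hE^n)\bigr] \\
&\quad + \bigl[V(\hmu_n,\nu,\hG^n,\hE^n)-V(\hmu_n,\nu,G^\ast,E^\ast)\bigr] \\
&\quad + \bigl[V(\hmu_n,\nu,G^\ast,E^\ast)-V(\mu,\nu,G^\ast,E^\ast)\bigr] \\
&\quad + V(\mu,\nu,G^\ast,E^\ast).
\end{align*}
The middle bracket is at most $\dopt$ by the definition \eqref{est1} of $(\hG^n,\hE^n)$, and the last term is at most $\dmis+\eta$ by construction of $(G^\ast,E^\ast)$.

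Next I would control the first and third brackets by the single uniform deviation
\[
\Delta_n := \sup_{G\in\sG,\,E\in\sE} \bigl|V(\mu,\nu,G,E)-V(\hmu_n,\nu,G,E)\bigr|.
\]
Recalling that on the empirical argument $V$ uses $\widehat{\text{diss}}(E_\sharp\hmu_n,\nu)$ whereas on the population argument it uses $\text{diss}(E_\sharp\mu,\nu)$, the triangle inequality yields
\[
\Delta_n \le \sup_{G\in\sG,E\in\sE}\Bigl|\int c(x,G\circ E(x))\,d(\mu-\hmu_n)(x)\Bigr|
+ \lambda\sup_{E\in\sE}\bigl|\widehat{\text{diss}}(E_\sharp\hmu_n,\nu)-\text{diss}(E_\sharp\mu,\nu)\bigr|.
\]
The first term is exactly $\|\hmu_n-\mu\|_{\sF}$ by the definition of $\sF$ and of the IPM. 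Combining, the first bracket plus the third bracket is at most $2\Delta_n$, which delivers \eqref{oracle1} after letting $\eta\downarrow 0$.

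For \eqref{oracle2} I would repeat the same telescoping but with $V(\hmu_n,\hnu_m,\cdot,\cdot)$ replacing $V(\hmu_n,\nu,\cdot,\cdot)$ in the middle two brackets, invoke \eqref{est2} for the optimization slack, and observe that in the resulting deviation the dissimilarity gap is now $|\widehat{\text{diss}}(E_\sharp\hmu_n,\hnu_m)-\text{diss}(E_\sharp\mu,\nu)|$, which produces precisely the supremum on the right-hand side of \eqref{oracle2}; the reconstruction part is untouched, so the $\|\hmu_n-\mu\|_{\sF}$ term carries through unchanged. I do not foresee a genuine obstacle here: the argument is purely algebraic once one is careful that the symbol $V(\hmu_n,\nu,\cdot,\cdot)$ abbreviates the empirical functional with $\widehat{\text{diss}}$ (as introduced in \eqref{emp_obj}). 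The only mild subtlety is avoiding an unattained infimum in $\dmis$, which is why I introduce the auxiliary $\eta$ and send it to zero at the end.
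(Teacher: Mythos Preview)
Your proposal is correct and follows essentially the same approach as the paper: a telescoping decomposition that isolates the optimization slack via \eqref{est1} (resp.\ \eqref{est2}), then bounds the remaining empirical-to-population fluctuations uniformly over $\sG\times\sE$ and splits them into the IPM term $\|\hmu_n-\mu\|_{\sF}$ and the dissimilarity deviation. The only cosmetic difference is that you introduce an auxiliary $\eta$ to handle the infimum in $\dmis$, whereas the paper simply takes the infimum over $G,E$ at the end of an inequality valid for all pairs.
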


For our theoretical analysis, we need to ensure that the used kernel in the MMD and the loss function $c(\cdot,\cdot)$ are regular enough. To impose such regularity, we assume the following:
\begin{ass}\label{a2}
     We assume that, (a) for some $B>0$, $\sK(x,y) \le B^2$, for all $x, y \in [0,1]^\ell$; (b) For some $\tau_k$, $|\sK(x,y) - \sK(x^\prime, y^\prime)| \le \tau_k(\|x-x^\prime\|_2 + \|y - y^\prime\|_2)$.
\end{ass}

\begin{ass}\label{a3}
   The loss function $c(\cdot, \cdot)$ is Lipschitz on $[0,1]^d \times [0,1]^d$, i.e. $|c(x, y) - c(x^\prime, y^\prime)| \le \tau_c (\|x-x^\prime\|_2 + \|y - y^\prime\|_2)$ and $c(x,y) \le B_c$, for all, $x,y \in [0,1]^d$.
\end{ass}

\subsection{Main Result}
\label{sec_excess_risk}
Under assumptions A\ref{a1}--\ref{a3}, one can control the expected excess risk of the WAE problem for both the $\sW_1$ and $\mmd$ dissimilarities. The main idea is to select appropriate sizes for the encoder and generator networks, that minimize both the misspecification errors and generalization errors to bound the expected excess risk using Lemma~\ref{oracle_thm}. Theorem~\ref{main} shows that one can appropriately select the network size in terms of the number of samples available, i.e $n$, to achieve a trade-off between the generalization and misspecification errors as selecting a larger network facilitates better approximation but makes the generalization gap wider and vice-versa.   The main result of this paper is stated as follows.
\begin{restatable}{thm}{thmmain}\label{main}
    Suppose that assumptions A\ref{a1}--\ref{a3} hold and $\dopt \le \Delta$ for some fixed non-negative threshold $\Delta$. Furthermore, suppose that $s > d_\mu$. Then we can find $n_0 \in \mathbb{N}$ and $\beta>0$, that might depend on $d, \ell, \alpha_g, \alpha_e, \tilde{G}$ and $\tilde{E}$, such that if $n \ge n_0$, we can choose $\sG = \cR \cN (L_g, W_g) $ and $\sE = \cR \cN (L_e, W_e)$, with,  
    \(L_e \le \beta \log n, \, W_e \le \beta n^{\frac{s}{2 \alpha_e + s}} \log n, \, L_g \le \beta \log n \text{ and } W_g \le \beta n^{\frac{\ell}{\alpha_e (\alpha_g \wedge 1) + \ell}} \log n,\)
    then, for the estimation problem \eqref{est1}, 
    \begin{itemize}
        \item[(a)] $\E V( \mu, \nu, \hG^n, \hE^n) 
    \lesssim  \Delta + n^{-\frac{1}{\max\left\{2 + \frac{\ell}{\alpha_g}, 2  + \frac{s}{\alpha_e (\alpha_g\wedge 1) }, \ell \right\}}} \log^2 n $, for $\text{diss}(\cdot, \cdot) = \sW_1(\cdot, \cdot)$,
    \vspace{-5pt}
    \item[(b)] $\E V( \mu, \nu, \hG^n, \hE^n) 
    \lesssim \Delta +  n^{-\frac{1}{2 + \max\left\{\frac{\ell}{\alpha_g},  \frac{s}{\alpha_e (\alpha_g\wedge 1) }\right\}}} \log^2 n $, for $\text{diss}(\cdot, \cdot) = \mmd^2_{\sK}(\cdot, \cdot)$.
    \end{itemize}
    \vspace{-5pt}
    Furthermore, for the estimation problem \eqref{est2}, if $m \ge n \vee n^{ \left(\max\left\{2 + \frac{\ell}{\alpha_g}, 2  + \frac{d_\mu}{\alpha_e (\alpha_g\wedge 1) }, \ell \right\}\right)^{-1}(\ell \vee 2)}$
    \begin{itemize}
        \item[(c)] $\E V( \mu, \nu, \hG^n, \hE^n) 
    \lesssim \Delta +  n^{-\frac{1}{\max\left\{2 + \frac{\ell}{\alpha_g}, 2  + \frac{s}{\alpha_e (\alpha_g\wedge 1) }, \ell \right\}}} \log^2 n $, for $\text{diss}(\cdot, \cdot) = \sW_1(\cdot, \cdot)$,
    \vspace{-5pt}
    \item[(d)] $\E V( \mu, \nu, \hG^{n,m}, \hE^{n,m}) 
    \lesssim  \Delta + n^{-\frac{1}{2 + \max\left\{\frac{\ell}{\alpha_g},  \frac{s}{\alpha_e (\alpha_g\wedge 1) } \right\}}} \log^2 n $, for $\text{diss}(\cdot, \cdot) = \mmd^2_{\sK}(\cdot, \cdot)$.
    \end{itemize}
\end{restatable}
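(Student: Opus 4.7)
The plan is to apply the oracle inequality of Lemma~\ref{oracle_thm}, which reduces the excess risk to (i) the misspecification error $\dmis$, (ii) the generalization gap $\|\hmu_n - \mu\|_\sF$ over $\sF = \{c(x, G \circ E(x)) : G \in \sG, E \in \sE\}$, and (iii) the dissimilarity generalization $\sup_{E \in \sE} |\widehat{\text{diss}}(E_\sharp \hmu_n, \nu) - \text{diss}(E_\sharp \mu, \nu)|$, plus an additional $\hnu_m - \nu$ contribution for parts (c)--(d). I would bound each of these in terms of the network sizes $W_e, W_g$ and then optimize those sizes to balance the approximation and stochastic contributions. The three exponents in the $\max$ appearing in the theorem will correspond respectively to the generator approximation-versus-generalization trade-off, the encoder approximation-versus-generalization trade-off, and (for $\sW_1$ only) the intrinsic Wasserstein convergence rate on the latent space.

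For $\dmis$, Proposition~\ref{prop1} gives $V(\mu, \nu, \tilde G, \tilde E) = 0$, so the task is to approximate the H\"older pair $(\tilde G, \tilde E)$ by elements of $(\sG, \sE)$. I would invoke a Yarotsky-type ReLU approximation for $\tilde G: [0,1]^\ell \to [0,1]^d$ giving $\|G - \tilde G\|_\infty \precsim W_g^{-\alpha_g/\ell}$, and a low-intrinsic-dimension variant (using $\mdim(\supp(\mu)) = d_\mu < s$ to build a cover of $\supp(\mu)$ of size $\epsilon^{-s}$) for $\tilde E$ on $\supp(\mu)$ giving $\|E - \tilde E\|_{\fL_\infty(\supp(\mu))} \precsim W_e^{-\alpha_e/s}$. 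Writing $G\circ E(x) - x = (G - \tilde G)(E(x)) + (\tilde G(E(x)) - \tilde G(\tilde E(x)))$, the first piece is $\lesssim W_g^{-\alpha_g/\ell}$; because $\tilde G$ has $(\alpha_g \wedge 1)$-H\"older modulus (bounded gradient when $\alpha_g > 1$, genuine $\alpha_g$-H\"older otherwise), the second piece is $\lesssim \|E - \tilde E\|^{\alpha_g \wedge 1} \lesssim W_e^{-\alpha_e (\alpha_g \wedge 1)/s}$, which is where the exponent $\alpha_e(\alpha_g \wedge 1)$ in the theorem originates. The dissimilarity contribution $\text{diss}(E_\sharp \mu, \tilde E_\sharp \mu)$ is controlled by $\|E - \tilde E\|_{\fL_1(\mu)}$ via Kantorovich duality for $\sW_1$ and via the kernel Lipschitz assumption A\ref{a2} for $\mmd^2_\sK$.

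For the stochastic terms, the pseudo-dimension of $\sF$ is $\tilde{\cO}(W_e L_e + W_g L_g)$ by standard ReLU pseudo-dimension bounds, and a chaining argument with covers taken on $\supp(\mu)$ (so that only $d_\mu$, and not the ambient $d$, enters the entropy) yields $\E\|\hmu_n - \mu\|_\sF \precsim \sqrt{(W_e + W_g)/n}$ up to logs. For the $\sW_1$ dissimilarity, $|\sW_1(E_\sharp \hmu_n, \nu) - \sW_1(E_\sharp \mu, \nu)| \le \sW_1(E_\sharp \hmu_n, E_\sharp \mu)$; since the push-forwards live on $[0,1]^\ell$, a Fournier--Guillin style bound on empirical Wasserstein distance gives $\E \sup_{E \in \sE} \sW_1(E_\sharp \hmu_n, E_\sharp \mu) \precsim n^{-1/\ell}$ (for $\ell \ge 3$), which is the source of the $\ell$ term in (a) and (c). For $\mmd^2_\sK$, kernel boundedness turns the squared MMD into a bounded $U$-statistic, and a one-step Rademacher symmetrization combined with the complexity of $\sE$ gives a dimension-free $\tilde{\cO}(\sqrt{W_e/n})$ rate, explaining the absence of the $\ell$ term in (b) and (d).

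Putting these pieces together yields a bound of the form
\[
\Delta + W_g^{-\alpha_g/\ell} + W_e^{-\alpha_e(\alpha_g \wedge 1)/s} + \tilde{\cO}\bigl(\sqrt{(W_e + W_g)/n}\bigr) + \lambda \cdot (\text{diss.\ generalization}),
\]
and setting $W_g \asymp n^{\ell/(\alpha_e(\alpha_g \wedge 1) + \ell)}$ and $W_e \asymp n^{s/(2\alpha_e + s)}$ (up to logs) balances the approximation and generalization contributions to produce the rates in (a) and (b). For (c)--(d), the extra $|\widehat{\text{diss}}(E_\sharp \hmu_n, \hnu_m) - \widehat{\text{diss}}(E_\sharp \hmu_n, \nu)|$ is bounded by the convergence of $\hnu_m$ to $\nu$ on $[0,1]^\ell$, which is $\precsim m^{-1/(\ell \vee 2)}$ for $\sW_1$ and $\precsim m^{-1/2}$ for $\mmd^2_\sK$; the stated lower bound on $m$ is exactly what is needed for this term not to dominate the $n$-rate. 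I expect the main technical obstacles to be (i) upgrading the network-approximation results so that the ambient $d$ is replaced by $s > d_\mu$ for the encoder (requiring a careful partition-of-unity argument on $\supp(\mu)$ and exploiting its Minkowski entropy) and (ii) obtaining a uniform-in-$E$ control of the $\sW_1$ dissimilarity generalization, since a naive Lipschitz-constant bound on the encoder class $\sE$ would blow up with $W_e$ and destroy the rate.
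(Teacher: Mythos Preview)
Your proposal is correct and follows essentially the same route as the paper: oracle inequality (Lemma~\ref{oracle_thm}), intrinsic-dimension ReLU approximation for $\dmis$ (Theorem~\ref{approx} and Lemma~\ref{lem_4.2}), pseudo-dimension/chaining for $\|\hmu_n-\mu\|_\sF$ (Lemmas~\ref{lem_apr_18}--\ref{lem_4.4}), and balancing $\epsilon_g,\epsilon_e$ exactly as you describe. Two small clarifications: (1) the ambient $d$ is avoided in the generalization gap not by covering $\supp(\mu)$ but simply because the chaining bound is parametric in $(W_e+W_g)(L_e+L_g)$, with $d$ entering only through a harmless $\log(nd)$; (2) your obstacle~(ii) is resolved in the paper not via Fournier--Guillin but by writing $\sup_{E\in\sE}\sW_1(E_\sharp\hmu_n,E_\sharp\mu)=\|\hmu_n-\mu\|_{\cF_1\circ\sE}$ with $\cF_1$ the $1$-Lipschitz class on $[0,1]^\ell$ and Dudley-chaining the composite class using $\log\cN(\epsilon;\cF_1,\ell_\infty)\lesssim\epsilon^{-\ell}$ together with the encoder covering number (Lemma~\ref{lem_w1}), which yields the $n^{-1/\ell}$ term plus an extra $\sqrt{W_e/n}$ term already absorbed in your stochastic bound.
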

Before we proceed, we observe some key consequences of Theorem~\ref{main}.
\begin{rem}[Number of Weights]\normalfont
    We note that Theorem~\ref{main} suggests that one can choose the networks to have number of weights to be an exponent of $n$, which is smaller than 1. Moreover, this exponent only depends on the dimensions of the latent space and the intrinsic dimension of the data. Furthermore, for smooth models i.e. $\alpha_e$ and $\alpha_g$ are large, one can choose smaller networks that require less many parameters as opposed to non-smooth models as also observed in practice since easier problems require less complicated networks. 
\end{rem}
\begin{rem}[Rates for Lipschitz models]
    \normalfont
    For all practical purposes, one can assume that the dimension of the latent space is at least $2$. If the true models are Lipschitz, i.e. if $\alpha_e = \alpha_g =1$, then we can conclude that $\ell = d_\mu$. Hence, for both models, we observe that the excess risk scales as $\tilde{\cO}(n^{-\frac{1}{2 + d_\mu}})$, barring the poly-log factors. This closely matches rates for the excess risks for GANs \citep{huang2022error}.
\end{rem}

\begin{rem}[Inference for Data on Manifolds]\normalfont
We recall that we call a set $\mathcal{M}$ is $\tilde{d}$-regular w.r.t. the $\tilde{d}$-dimensional Hausdorff measure $\mathbb{H}^{\tilde{d}}$ if $\mathbb{H}(B_\varrho(x, r)) \asymp r^{\tilde{d}},$
for all $x \in \mathcal{M}$ (see Definition 6 of Weed and Bach (2019)). It is known (Mattila, 1999) that if $\mathcal{M}$ is $\tilde{d}$-regular, then the Minkowski dimension of $\mathcal{M}$ is $\tilde{d}$. Thus, when $\text{supp}(\mu)$ is $\tilde{d}$-regular, $d_\mu = \tilde{d}$. Since compact $\tilde{d}$-dimensional differentiable manifolds are $\tilde{d}$-regular (Proposition 9 of Weed and Bach (2019)), this implies that for when $\operatorname{supp}(\mu)$ is a compact differentiable $\tilde{d}$-dimensional manifold, the error rates for the sample estimates scale as in Theorem 8, with $d_\mu$ replaced with  $\tilde{d}$. A similar result holds when $\text{supp}(\mu)$ is a nonempty, compact convex set spanned by an affine space of dimension $\tilde{d}$; the relative boundary of a nonempty, compact convex set of dimension $\tilde{d} + 1$; or self-similar set with similarity dimension $\tilde{d}$.
\end{rem}

\subsection{Related work on GANs}
To contextualize our contributions, we conduct a qualitative comparison with existing GAN literature. Notably, \cite{chen2020distribution} expressed the generalization rates for GAN when the data is restricted to an affine subspace or has a mixture representation with smooth push-forward measures; while \cite{dahal2022deep} derived the convergence rates under the Wasserstein-1 distance in terms of the manifold dimension. Both \cite{liang2021well} and \cite{schreuder2021statistical} study the expected excess risk of GANs for smooth generator and discriminator function classes. \cite{liu2021non} studied the properties of Bidirectional GANs, expressing the rates in terms of the number of data points, where the exponents depend on the full data and latent space dimensions. It is important to note that both \cite{dahal2022deep} and \cite{liang2021well} assume that the densities of the target distribution (either w.r.t  Hausdorff or the Lebesgue measure) are bounded and smooth. In comparison, we do not make any assumption of the existence of density (or its smoothness) for the target distribution and consider the practical case where the generator is realized through neural networks as opposed to smooth functions as done by \cite{liang2021well} and \cite{schreuder2021statistical}. Diverging from the hypotheses of \cite{chen2020distribution}, we do not presuppose that the support of the target measure forms an affine subspace. Furthermore, the analysis by \cite{liu2021non} derives rates that depend on the dimension of the entire space and not the manifold dimension of the support of the data as done in this analysis. It is important to emphasize that \cite{huang2022error} arrived at a rate comparable to ours concerning WGANs \citep{arjovsky2017wasserstein}. While both studies share a common overarching approach in addressing the problem by bounding the error using an oracle inequality and managing individual terms, our method necessitates extra assumptions to guarantee the generative capability of WAEs, which does not apply to WGANs due to their simpler structure. Interestingly, our derived rates closely resemble those found in GAN literature. This suggests limited room for substantial improvement. However, demonstrating minimaxity remains a significant challenge and a promising avenue for future research.

\subsection{Proof Overview}
From Lemma~\ref{oracle_thm}, it is clear that the expected excess risk can be bounded by the misspecification error $\Delta_{\text{miss}}$ and the generalization gap, $\|\hmu_n - \mu\|_\sF + \lambda \sup_{E \in \sE} |\widehat{\text{diss}}(E_\sharp \hmu_n, \nu) - \text{diss}(E_\sharp \mu, \nu)|$. To control $\Delta_{\text{miss}}$, we first show that if the generator and encoders are chosen as $\sG = \cR \cN (W_g, L_g)$ and $\sE = \cR \cN (W_e, L_e)$, with 
    \( L_e \le \alpha_0 \log(1/\epsilon_g), \, L_g \le \alpha_0 \log (1/\alpha_g), \, W_e \le \alpha_0 \epsilon_e^{-s/\alpha_e} \log(1/\epsilon_e) \text{ and } W_g \le \alpha_0 \epsilon_g^{-\ell/\alpha_g} \log(1/\epsilon_g) \)
     then,  \(\dmis \lesssim \epsilon_g + \epsilon_e^{\alpha_g \wedge 1}. \) On the other hand, we show that the generalization error is roughly $\sqrt{n^{-1}W_e L_e \log W_e \log n} +  \sqrt{n^{-1}(W_e+W_g)(L_e+L_g) \log(W_e + W_g) \log n}$, with additional terms depending on the estimator. Thus, the bounds in Lemma~\ref{oracle_thm}, leads to a bound roughly,
     {\small 
     \begin{equation}\label{e_19}
         \Delta + \epsilon_g + \epsilon_e^{\alpha_g \wedge 1} + \sqrt{n^{-1}W_e L_e \log W_e \log n} +  \sqrt{n^{-1}(W_e+W_g)(L_e+L_g) \log(W_e + W_g) \log n}.
     \end{equation}
     }%
     By the choice of the networks, we can upper bound the above as a function of $\epsilon_g$ and $\epsilon_e$ and then minimize the expression w.r.t. these two variables to arrive at the bounds of Theorem~\ref{main}. Of course, the bound in \eqref{e_19} changes slightly based on the estimates and the dissimilarity measure. We refer the reader to the appendix, which contains the details of the proof. 
\subsection{Implications of the Theoretical Results}\label{implications}
Apart from finding the error rates for the excess risk for the WAE problem, in what follows, we also ensure a few desirable properties of the obtained estimates. For simplicity, we ignore the optimization error and set $\dopt = 0$.
\vspace{-5pt}
\paragraph{Encoding Guarantee}
Suppose we fix $\lambda >0$, then, it is clear from Theorem \ref{main} that $\E \sW_1(\hE_\sharp \mu, \nu) \lesssim n^{-\frac{1}{\max\left\{2 + \frac{\ell}{\alpha_g}, 2  + \frac{s}{\alpha_e (\alpha_g\wedge 1) }, \ell \right\}}} \log^2 n $ and $\E \mmd^2_{\sK}(\hE_\sharp \mu, \nu) \lesssim n^{-\frac{1}{2 +\max\left\{ \frac{\ell}{\alpha_g}, \frac{s}{\alpha_e (\alpha_g\wedge 1) }\right\}}} \log^2 n $. We can not only characterize the expected rate of convergence of $\hE_\sharp \mu$ to $\nu$ but also can say that $\hE_\sharp \mu$ converges in distribution to $\nu$, almost surely. This is formally stated in the following proposition.
\begin{restatable}{proposition}{cortweenty}\label{prop2}
Suppose that assumptions A\ref{a1}--\ref{a3}  hold. Then, for both the dissimilarity measures $\sW_1(\cdot, \cdot)$ and $\mmd_{\sK}^2(\cdot, \cdot)$ and the estimates \eqref{est1} and \eqref{est2},  $\hE_\sharp \mu \xrightarrow{d} \nu$, almost surely. 
\end{restatable}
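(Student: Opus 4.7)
The starting point is Theorem~\ref{main}: with $\dopt = 0$, we have $\E V(\mu,\nu,\hG^n,\hE^n) \le C n^{-\gamma} \log^2 n$ for some $\gamma > 0$ depending only on $\alpha_g, \alpha_e, \ell, s, d_\mu$, and analogously for the $(n,m)$ estimator. Since $c \ge 0$ by A\ref{a3}, we have $\lambda\,\text{diss}(\hE^n_\sharp \mu, \nu) \le V(\mu,\nu,\hG^n,\hE^n)$, which immediately yields $\E\,\text{diss}(\hE^n_\sharp \mu,\nu) \to 0$ at the same polynomial rate. This is an $L^1$ statement; the task is to promote it to the stated almost-sure convergence in distribution.

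To lift the expectation bound to an a.s.\ bound I would combine the oracle inequality of Lemma~\ref{oracle_thm} with McDiarmid's bounded-differences inequality. The deterministic pieces $\dmis + \dopt$ are fixed, while the random pieces $\|\hmu_n-\mu\|_\sF$ and $\sup_{E\in\sE}|\widehat{\text{diss}}(E_\sharp\hmu_n,\nu)-\text{diss}(E_\sharp\mu,\nu)|$ are symmetric functionals of the i.i.d.\ sample $X_1,\dots,X_n$. Swapping a single coordinate changes $\hmu_n$ by total mass $2/n$, so each functional changes by at most $O(1/n)$: the reconstruction supremum because $c \le B_c$ (A\ref{a3}); the MMD supremum because $\sK \le B^2$ (A\ref{a2}); and the $\sW_1$ supremum because on the bounded latent space $\sW_1$ is controlled by the diameter times the total-variation distance. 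McDiarmid therefore yields $\prob(V(\mu,\nu,\hG^n,\hE^n) > \E V(\mu,\nu,\hG^n,\hE^n) + t) \le 2\exp(-c n t^2)$. Choosing $t_n$ with $nt_n^2/\log n \to \infty$ but $t_n \to 0$ makes $\sum_n \prob(\text{diss}(\hE^n_\sharp \mu,\nu) > C n^{-\gamma}\log^2 n + t_n/\lambda)$ summable, so Borel--Cantelli gives $\text{diss}(\hE^n_\sharp \mu,\nu) \to 0$ almost surely. For the $(n,m)$ estimator, the functional now depends additionally on the i.i.d.\ latent sample $Z_1,\dots,Z_m$ with per-coordinate sensitivity $O(1/m)$, and the growth hypothesis on $m$ in Theorem~\ref{main} ensures that the two-sample McDiarmid variance $n\cdot O(1/n)^2 + m \cdot O(1/m)^2 = O(1/n) + O(1/m)$ is still summable after the exponential concentration.

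Finally, I translate convergence of $\text{diss}$ into weak convergence. On the compact latent cube $[0,1]^\ell$, $\sW_1$ metrizes the weak topology, so $\sW_1(\hE^n_\sharp \mu,\nu) \to 0$ a.s.\ is exactly $\hE^n_\sharp \mu \xrightarrow{d} \nu$ a.s. For the MMD case, a bounded continuous characteristic kernel metrizes weak convergence on a compact metric space (e.g.\ Sriperumbudur et al., 2010), and the kernels used throughout the paper and its experiments (Gaussian, IMQ) are characteristic on $\Real^\ell$, hence on $[0,1]^\ell$; so $\mmd^2_\sK(\hE^n_\sharp \mu,\nu) \to 0$ a.s.\ again gives weak convergence.

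The main obstacle is the concentration step: even though $\hG^n$ and $\hE^n$ depend on the data in an arbitrary way, the McDiarmid argument is applied not to $V(\mu,\nu,\hG^n,\hE^n)$ itself but to the empirical-process suprema produced by the oracle inequality, which are symmetric in the data and admit clean bounded-differences constants. A secondary subtlety is ensuring that in the MMD case the quadratic term $\mmd^2_\sK \to 0$ a.s.\ transfers to $\mmd_\sK \to 0$ a.s., which is automatic by continuity of the square root.
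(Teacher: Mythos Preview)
Your proposal is correct and follows essentially the same route as the paper: the paper first proves (Theorem~\ref{them_E1}) that $V(\mu,\nu,\hG,\hE)\to 0$ almost surely by applying the bounded-differences inequality to the two empirical-process suprema in the oracle inequality (Lemma~\ref{oracle_thm}) and invoking Borel--Cantelli, then deduces $\text{diss}(\hE_\sharp\mu,\nu)\to 0$ a.s.\ and concludes weak convergence (citing Schreuder et al.\ for the $\mmd$ case rather than Sriperumbudur et al., but to the same effect). Your write-up is in fact slightly more explicit than the paper's about the bounded-differences constants and about the two-sample $(n,m)$ case.
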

Therefore, if the number of data points is large, i.e., $n$ is large, then the estimated encoded distribution $\hE_\sharp \mu$ will converge to the true target latent distribution $\nu$ almost surely, indicating that the latent distribution can be approximated through encoding with a high degree of accuracy. 

\paragraph{Decoding Guarantee}
One can also show that $\E \int c(x, \hG \circ \hE(x)) d\mu(x) \le \E V(\mu, \nu, \hG, \hE) $, for both the estimates in \eqref{est1} and \eqref{est2}. For simplicity, if we let $c(x,y) = \|x-y\|^2_2$, then, it can easily seen that, $\E \|id(\cdot) - \hG \circ \hE(\cdot)\|_{\fL_2(\mu)}^2 \to 0$ as $n \to \infty$, where $id(x) = x$ is the identity map from $\Real^d \to \Real^d$. Furthermore, it can be shown that, $\|id(\cdot) - \hG \circ \hE(\cdot)\|_{\fL_2(\mu)}^2 \xrightarrow{a.s.} 0$ as stated in Corollary \ref{prop3}
\begin{restatable}{proposition}{cortweentyone}\label{prop3}
Suppose that assumptions A\ref{a1}--\ref{a3}  hold. Then, for both the dissimilarity measures $\sW_1(\cdot, \cdot)$ and $\mmd^2_{\sK}(\cdot, \cdot)$ and the estimates \eqref{est1} and \eqref{est2}, $\|id(\cdot) - \hG \circ \hE(\cdot)\|_{\fL_2(\mu)}^2 \xrightarrow{a.s.} 0$.
\end{restatable}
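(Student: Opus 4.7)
My starting point is that for both estimators, since $V$ is a sum of the (non-negative) reconstruction term $\int c(x, \hG \circ \hE(x)) d\mu(x) = \|id(\cdot) - \hG \circ \hE(\cdot)\|_{\fL_2(\mu)}^2$ and a non-negative dissimilarity, we obtain the deterministic bound $\|id(\cdot) - \hG \circ \hE(\cdot)\|_{\fL_2(\mu)}^2 \le V(\mu, \nu, \hG, \hE)$. Hence it suffices to prove that $V(\mu, \nu, \hG^n, \hE^n) \to 0$ almost surely (and analogously for the $(\hG^{n,m}, \hE^{n,m})$ estimator). Theorem~\ref{main}, applied with $\dopt = 0$, already delivers the polynomial-rate bound $\E V(\mu, \nu, \hG^n, \hE^n) \lesssim n^{-\kappa} \log^2 n$ for some $\kappa > 0$; what remains is to upgrade convergence in expectation to almost sure convergence.

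The plan is to invoke the oracle inequality of Lemma~\ref{oracle_thm} pointwise in the sample, yielding
\[
V(\mu, \nu, \hG^n, \hE^n) \;\le\; \dmis + 2\|\hmu_n - \mu\|_\sF + 2\lambda \sup_{E \in \sE} |\widehat{\text{diss}}(E_\sharp \hmu_n, \nu) - \text{diss}(E_\sharp \mu, \nu)|,
\]
and then to concentrate each random term on the right. Both suprema are bounded-differences functions of the i.i.d.\ sample $\{X_i\}$ (and, for estimator~\eqref{est2}, of $\{Z_j\}$ as well), since $c$ is bounded by $B_c$ on $[0,1]^d\times[0,1]^d$ by A\ref{a3} and $\sK$ is bounded by $B^2$ by A\ref{a2}: swapping a single $X_i$ changes each supremum by at most $O(1/n)$, and swapping a single $Z_j$ by at most $O(1/m)$. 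McDiarmid's inequality then yields sub-Gaussian tails $\exp(-c n t^2)$ (and $\exp(-c m t^2)$ for the $Z$-randomness of~\eqref{est2}) about the expectation. The expectations themselves are precisely the quantities that are shown to be $o(1)$ in the proof of Theorem~\ref{main}.

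Putting these ingredients together, for any threshold $\eta>0$,
\[
P\bigl(V(\mu, \nu, \hG^n, \hE^n) > \dmis + B_n + \eta\bigr) \;\le\; 2\exp(-c_1 n \eta^2) + 2\exp(-c_2 m \eta^2),
\]
where $B_n$ is the sum of the expectations of the two empirical-process terms and tends to $0$ at polynomial rate (and the $m$-term is absent for estimator~\eqref{est1}). Choosing $\eta_n = n^{-\kappa/2}\log^3 n$ forces $\eta_n \downarrow 0$ while keeping $\sum_n \exp(-c_1 n \eta_n^2) < \infty$; the hypothesis on $m$ in Theorem~\ref{main} likewise renders $\sum_n \exp(-c_2 m \eta_n^2)$ summable. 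Borel--Cantelli then yields $V(\mu, \nu, \hG^n, \hE^n) \le \dmis + B_n + \eta_n \to 0$ for all sufficiently large $n$ on an event of probability one, which together with the opening inequality completes the proof.

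The principal technical obstacle I anticipate is verifying the $O(1/n + 1/m)$ bounded-differences constant for the squared-$\mmd$ empirical-process term, because $\widehat{\text{diss}}(E_\sharp \hmu_n, \hnu_m)$ is a two-sample U-statistic and the effect of swapping one sample on $\sup_{E \in \sE} |\widehat{\text{diss}} - \text{diss}|$ must be tracked through both the $E$-dependence and the U-statistic structure; however, the uniform bound on $\sK$ from A\ref{a2}(a) reduces this to a routine calculation parallel to what is already required in the proof of Theorem~\ref{main}. All remaining ingredients---the oracle inequality, the expected-risk bound, McDiarmid's inequality, and Borel--Cantelli---are either established or standard.
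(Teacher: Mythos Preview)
Your proposal is correct and follows essentially the same route as the paper. The paper packages the almost-sure convergence $V(\mu,\nu,\hG,\hE)\to 0$ into a separate auxiliary result whose proof is exactly the oracle inequality plus bounded-differences concentration plus Borel--Cantelli that you outline; Proposition~\ref{prop3} is then deduced in one line from the same pointwise inequality $\|id - \hG\circ\hE\|_{\fL_2(\mu)}^2 \le V(\mu,\nu,\hG,\hE)$ you begin with.
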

Proposition \ref{prop3} guarantees that the generator is able to map back the encoded points to the original data if a sufficiently large amount of data is available. In other words, if one has access to a large number of samples from the data distribution, then the generator is able to learn a mapping, from the encoded points to the original data, that is accurate enough to be useful. 

\paragraph{Data Generation Guarantees}
A key interest in this theoretical exploration is whether one can guarantee that one can generate samples from the unknown target distribution $\mu$,  through the generator, i.e. whether $\hG_\sharp \nu$ is close enough to $\mu$ in some sense. However, one requires some additional assumptions \citep{chakrabarty2021statistical, tang2021empirical} on $\hG$ or the nature of convergence of $\hE_\sharp \mu $ to $\nu$ to ensure this. We present the corresponding results subsequently as follows. Before proceeding, we recall the definition of Total Variation (TV) distance between two measures $\gamma_1$ and $\gamma_2$, defined on $\Omega$, as, $TV(\gamma_1, \gamma_2) = \sup_{B \in \sB(\Omega)} |\gamma_1(B) - \gamma_2(B)|$, where, $\sB(\Omega)$ denotes the Borel $\sigma$-algebra on $\Omega$. 
\begin{restatable}{thm}{thtweentytwo}\label{th1}
    Suppose that assumptions A\ref{a1}--\ref{a3} hold and  $TV(\hE_\sharp \mu, \nu) \to 0$, almost surely. Then, $\hG_\sharp \nu \xrightarrow{d} \mu$, almost surely.
\end{restatable}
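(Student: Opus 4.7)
The plan is to introduce the intermediate push-forward measure $(\hG \circ \hE)_\sharp \mu$ and use a triangle inequality that cleanly decouples the encoder error (which we control via total variation) from the decoder/reconstruction error (which we control via $\fL_2(\mu)$ using Proposition~\ref{prop3}). Concretely, for any bounded Lipschitz test function $f:[0,1]^d \to \Real$ with Lipschitz constant $L_f$,
\begin{align*}
    \Bigl|\int f\, d\hG_\sharp \nu - \int f\, d\mu\Bigr|
    \le \Bigl|\int f\, d\hG_\sharp \nu - \int f\, d(\hG \circ \hE)_\sharp \mu\Bigr| + \Bigl|\int f\, d(\hG \circ \hE)_\sharp \mu - \int f\, d\mu\Bigr|.
\end{align*}

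For the first (encoder) term, note that $(\hG \circ \hE)_\sharp \mu = \hG_\sharp (\hE_\sharp \mu)$ and the total variation distance is non-increasing under measurable push-forwards, so $TV(\hG_\sharp \nu,\,(\hG \circ \hE)_\sharp \mu) \le TV(\nu,\,\hE_\sharp \mu)$. Consequently the first term is at most $2\|f\|_\infty \, TV(\nu,\hE_\sharp \mu)$, which tends to zero almost surely by hypothesis on some event $\Omega_1$ with $\prob(\Omega_1) = 1$; crucially $\Omega_1$ does not depend on the choice of $f$. For the second (decoder) term, using the Lipschitz property of $f$ and Cauchy--Schwarz,
\begin{align*}
    \Bigl|\int f\, d(\hG \circ \hE)_\sharp \mu - \int f\, d\mu\Bigr|
    = \Bigl|\int \bigl[f(\hG \circ \hE(x)) - f(x)\bigr] d\mu(x)\Bigr|
    \le L_f\, \|\hG \circ \hE - id\|_{\fL_2(\mu)},
\end{align*}
and the right-hand side tends to zero almost surely by Proposition~\ref{prop3} on an event $\Omega_2$ with $\prob(\Omega_2) = 1$, again not depending on $f$.

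Setting $\Omega_0 := \Omega_1 \cap \Omega_2$, we have $\prob(\Omega_0) = 1$, and on $\Omega_0$ the triangle inequality above yields $\int f\, d\hG_\sharp\nu \to \int f\, d\mu$ \emph{simultaneously} for every bounded Lipschitz $f$. Since on a compact metric space such as $[0,1]^d$ convergence of integrals against all bounded Lipschitz functions characterizes weak convergence (Portmanteau theorem), we conclude that $\hG_\sharp \nu \xrightarrow{d} \mu$ on $\Omega_0$, i.e.\ almost surely.

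The main obstacle is conceptual rather than computational: identifying the correct pivot measure. The choice $(\hG \circ \hE)_\sharp \mu$ is what allows the encoder error to be handled through the push-forward contraction of $TV$ (which requires \emph{no} regularity of $\hG$, a key point since $\hG$ is only a neural network without prior smoothness estimates), while simultaneously allowing the decoder error to be handled through the Lipschitz--$\fL_2$ inequality, directly accessible from the already-established reconstruction guarantee in Proposition~\ref{prop3}. A minor technical point is that Proposition~\ref{prop3} is stated for $c(x,y)=\|x-y\|_2^2$; the argument above therefore implicitly invokes this choice of loss (or any loss whose expectation under $\mu$ dominates $\|\hG \circ \hE - id\|_{\fL_2(\mu)}^2$).
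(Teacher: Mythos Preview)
Your proof is correct and follows essentially the same route as the paper: both arguments pivot through $(\hG\circ\hE)_\sharp\mu$, control the encoder term via the push-forward contraction of total variation, and control the decoder term via the $\fL_2(\mu)$ reconstruction guarantee of Proposition~\ref{prop3}. The only cosmetic difference is that the paper phrases the triangle inequality in $\sW_1$ and then passes through convergence in distribution, whereas you work directly with bounded Lipschitz test functions and Portmanteau; the substance is the same.
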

We note that convergence in TV is a much stronger assumption than convergence in $\sW_1$ or $\mmd$ in the sense that TV convergence implies weak convergence but not the other way around. 

Another way to ensure that $\hG_\sharp \nu$ converges to $\mu$ is to put some sort of regularity on the generator estimates. \cite{tang2021empirical} imposed a Lipschitz assumption to ensure this, but one can also work with something weaker, such as uniform equicontinuity of the generators. Recall that we say a family of functions, $\cF$ is uniformly equicontinuous if, for any$f \in \cF$ and  for all $\epsilon>0$, there exists a $\delta>0$ such that, $|f(x) - f(y)| \le \epsilon$, whenever, $\|x-y\| \le \delta$.
\begin{restatable}{thm}{thtwo}\label{th2}
        Suppose that assumptions A\ref{a1}--\ref{a3} hold  and let the family of estimated generators $\{\hG^n\}_{n \in \mathbb{N}}$ be uniformly equicontinuous, almost surely. Then, $\hG^n_\sharp \nu \xrightarrow{d} \mu$, almost surely.  
\end{restatable}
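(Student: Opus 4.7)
The plan is to test $\hG^n_\sharp \nu \xrightarrow{d} \mu$ against bounded continuous $\phi:[0,1]^d \to \Real$. I would work on the probability-one event on which both Propositions~\ref{prop2} and~\ref{prop3} hold and $\{\hG^n\}$ is uniformly equicontinuous. Using $\int (\phi \circ \hG^n)\,d(\hE^n_\sharp \mu) = \int \phi(\hG^n \circ \hE^n(x))\,d\mu(x)$, I would decompose
\begin{align*}
    \int \phi \, d(\hG^n_\sharp \nu) - \int \phi \, d\mu
    &= \underbrace{\left[\int (\phi\circ \hG^n)\,d\nu - \int (\phi\circ \hG^n)\,d(\hE^n_\sharp \mu)\right]}_{T_1^n} \\
    &\quad + \underbrace{\left[\int \phi(\hG^n \circ \hE^n (x))\,d\mu(x) - \int \phi(x)\,d\mu(x)\right]}_{T_2^n},
\end{align*}
and show that each piece vanishes.

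The term $T_2^n$ is the easier one: Proposition~\ref{prop3} yields $\|id - \hG^n \circ \hE^n\|_{\fL_2(\mu)}^2 \to 0$, so $\hG^n \circ \hE^n$ converges to the identity in $\mu$-probability; since $\phi$ is bounded and uniformly continuous on the compact cube $[0,1]^d$, the continuous mapping theorem together with bounded convergence gives $T_2^n \to 0$.

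The term $T_1^n$ is the main obstacle because $\phi \circ \hG^n$ depends on $n$, so weak convergence of $\hE^n_\sharp \mu$ cannot be applied directly. Here I would invoke the uniform equicontinuity assumption: since $\{\hG^n\}$ is uniformly equicontinuous on $[0,1]^\ell$, the composite family $\{\phi \circ \hG^n\}$ is uniformly equicontinuous and uniformly bounded on $[0,1]^\ell$. Proposition~\ref{prop2} gives $\hE^n_\sharp \mu \xrightarrow{d} \nu$, so by Skorohod's representation theorem on the Polish space $[0,1]^\ell$ there exist random variables $Y_n \sim \hE^n_\sharp \mu$ and $Y \sim \nu$ (on some auxiliary space) with $Y_n \to Y$ almost surely. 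Then $T_1^n = \E[\phi(\hG^n(Y)) - \phi(\hG^n(Y_n))]$; given $\epsilon>0$, uniform equicontinuity supplies a single $\delta>0$ so that $\|Y_n - Y\|_\infty < \delta$ forces $|\phi(\hG^n(Y)) - \phi(\hG^n(Y_n))| < \epsilon$ for every $n$, yielding $|T_1^n| \le \epsilon + 2\|\phi\|_\infty\, \prob(\|Y_n - Y\|_\infty \ge \delta) \to \epsilon$; letting $\epsilon \downarrow 0$ gives $T_1^n \to 0$.

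Finally, to upgrade from a single $\phi$ to weak convergence almost surely, I would fix a countable dense subset $\{\phi_k\}$ of $C([0,1]^d)$ (available because $[0,1]^d$ is a compact metric space), apply the preceding argument to each $\phi_k$, and intersect the resulting probability-one events; on this intersection, a standard sup-norm approximation extends convergence to every bounded continuous test function, delivering $\hG^n_\sharp \nu \xrightarrow{d} \mu$ almost surely. The crux, as noted, is controlling $T_1^n$: without some equicontinuity of $\{\hG^n\}$, weak convergence of $\hE^n_\sharp \mu$ alone cannot tame integrals of $n$-dependent integrands, which is exactly why the extra regularity on the generator family is essential.
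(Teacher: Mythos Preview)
Your proof is correct and follows essentially the same route as the paper: both decompose through the intermediate law $(\hG^n\circ\hE^n)_\sharp\mu$, use Proposition~\ref{prop3} for the reconstruction piece, and invoke Skorohod's representation together with uniform equicontinuity of $\{\hG^n\}$ to handle the remaining term; the paper just phrases the decomposition as a $\sW_1$ triangle inequality rather than at the level of test functions. One minor remark: your final countable-dense-subset step is unnecessary, since the probability-one event on which you work (from Propositions~\ref{prop2}, \ref{prop3}, and equicontinuity) does not depend on $\phi$, so once fixed it already supports the argument for every bounded continuous $\phi$ simultaneously.
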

\paragraph{Uniformly Lipschitz Generators}
Suppose that  $\text{diss}(\cdot, \cdot) = \sW_1(\cdot, \cdot)$. If one assumes that the estimated generators are uniformly Lipschitz, then, one can say that $\sW_1(\hG_\sharp \nu, \mu)$ is upper bounded by $V(\mu, \nu, \hG, \hE)$, disregarding some constants.
    Thus, the same rate of convergence as in Theorem~\ref{main} holds for uniformly Lipschitz generator. We state this result formally as a corollary as follows.
\begin{restatable}{cor}{corsixteen}\label{cor_16}
    Let $\text{diss}(\cdot, \cdot) = \sW_1(\cdot, \cdot)$ and suppose that the assumptions of Theorem~\ref{main} are satisfied and $s>d_\mu$. Also let $\sup_{n \in \mathbb{N}}\|\hat{G}^n\|_{\text{Lip}}, \sup_{m,n \in \mathbb{N}}\|\hat{G}^{n,m}\|_{\text{Lip}} \le L$, almost surely, for some $L>0$. $\sW_1(\hG_\sharp \nu, \mu) \lesssim V(\mu, \nu, \hG, \hE)$ for both estimators \eqref{est1} and \eqref{est2}.
\end{restatable}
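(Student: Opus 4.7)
The plan is to bound $\sW_1(\hG_\sharp \nu, \mu)$ by inserting the intermediate measure $\hG_\sharp \hE_\sharp \mu$ via the triangle inequality for $\sW_1$,
\[
\sW_1(\hG_\sharp \nu, \mu) \;\le\; \sW_1(\hG_\sharp \nu,\, \hG_\sharp \hE_\sharp \mu) \;+\; \sW_1(\hG_\sharp \hE_\sharp \mu,\, \mu),
\]
and then to control each summand by a single idea: Lipschitz transport for the first, and the explicit diagonal coupling for the second. The whole argument is deterministic in the pair $(\hG, \hE)$, so once the almost-sure uniform bound $\|\hG\|_{\text{Lip}} \le L$ is in hand it applies simultaneously to both $(\hG^n, \hE^n)$ and $(\hG^{n,m}, \hE^{n,m})$, yielding a single proof that covers the two estimators in \eqref{est1} and \eqref{est2}.

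For the first summand, I would fix any coupling $\gamma \in \Pi(\nu, \hE_\sharp \mu)$ and observe that $(\hG, \hG)_\sharp \gamma$ is a valid coupling of $\hG_\sharp \nu$ and $\hG_\sharp \hE_\sharp \mu$, so the uniform Lipschitz hypothesis gives
\[
\sW_1(\hG_\sharp \nu, \hG_\sharp \hE_\sharp \mu) \;\le\; \int \|\hG(y) - \hG(z)\|_2 \, d\gamma(y,z) \;\le\; L \int \|y-z\|_2 \, d\gamma(y,z),
\]
and taking the infimum over $\gamma$ yields $\sW_1(\hG_\sharp \nu, \hG_\sharp \hE_\sharp \mu) \le L\, \sW_1(\nu, \hE_\sharp \mu)$. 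For the second summand, I would use the natural coupling $(X, \hG \circ \hE(X))$ with $X \sim \mu$ to obtain
\[
\sW_1(\hG_\sharp \hE_\sharp \mu, \mu) \;\le\; \int \|x - \hG \circ \hE(x)\|_2 \, d\mu(x).
\]
Under the canonical choice $c(x,y) = \|x-y\|_2$ (or, more generally, any $c$ obeying $c(x,y) \ge c_0 \|x-y\|_2$), this last integral is at most $c_0^{-1} \int c(x, \hG \circ \hE(x))\, d\mu(x)$. Combining the three displays and recalling $V(\mu, \nu, \hG, \hE) = \int c(x, \hG \circ \hE(x))\, d\mu(x) + \lambda\, \sW_1(\hE_\sharp \mu, \nu)$, I arrive at
\[
\sW_1(\hG_\sharp \nu, \mu) \;\le\; \max\!\left\{L/\lambda,\, c_0^{-1}\right\} \cdot V(\mu, \nu, \hG, \hE),
\]
which is exactly the asserted $\sW_1(\hG_\sharp \nu, \mu) \lesssim V(\mu, \nu, \hG, \hE)$.

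The only delicate point is the final identification step: assumption A\ref{a3} provides an upper bound on $c$ in terms of $\|x-y\|_2$ but not a lower one, so some structural control of $c$ is needed to pass from $\int \|x - \hG\circ\hE(x)\|_2 \, d\mu$ to $\int c(x, \hG\circ\hE(x))\, d\mu$. For $c(x,y) = \|x-y\|_2$ the identification is immediate with $c_0 = 1$; for a squared-distance loss $c(x,y) = \|x-y\|_2^2$ only Jensen's inequality is available, which would weaken the conclusion to $\sW_1(\hG_\sharp \nu, \mu) \lesssim \sW_1(\hE_\sharp \mu, \nu) + \sqrt{\int c(x, \hG\circ\hE(x))\, d\mu(x)}$. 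The clean linear statement of the corollary therefore tacitly adopts a loss linearly lower bounded by the Euclidean distance, and this is the sole step in the proof where the choice of $c$ genuinely enters; the rest is elementary transport-of-measure manipulation combined with the uniform Lipschitz hypothesis.
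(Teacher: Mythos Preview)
Your proposal is correct and follows essentially the same route as the paper: triangle inequality through $(\hG\circ\hE)_\sharp\mu$, the Lipschitz pushforward bound $\sW_1(\hG_\sharp\nu,\hG_\sharp\hE_\sharp\mu)\le L\,\sW_1(\nu,\hE_\sharp\mu)$, and the diagonal coupling $(X,\hG\circ\hE(X))$ for the second term. Your discussion of the role of the loss $c$ is in fact more careful than the paper's own proof, which passes directly to $\int\|\hG\circ\hE(x)-x\|_2^2\,d\mu(x)$ without flagging the Jensen/square-root issue you identify.
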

It is important to note that although assumptions A\ref{a1}--\ref{a3} do not directly guarantee either of these two conditions, it is reasonable to expect the assumptions made in Theorems~\ref{th1} and \ref{th2} to hold in practice. This is because regularization techniques are commonly used to ensure the learned networks $\hE$ and $\hG$ are sufficiently well-behaved. These techniques can impose various constraints, such as weight decay or dropout, that encourage the networks to have desirable properties, such as smoothness or sparsity. Therefore, while the assumptions made in the theorems cannot be directly ensured by A\ref{a1}--\ref{a3}, they are likely to hold in practice with appropriate regularization techniques applied to the network training. It would be a key step in furthering our understanding to develop a similar error analysis for such regularized networks and we leave this as a promising direction for future research.

\section{Discussions and Conclusion}
In this paper, we developed a framework to analyze error rates for learning unknown distributions using Wasserstein Autoencoders, especially when data points exhibit an intrinsically low-dimensional structure in the representative high-dimensional feature space. We characterized this low dimensionality with the so-called Minkowski dimension of the support of the target distribution. We developed an oracle inequality to characterize excess risk in terms of misspecification, generalization, and optimization errors for the problem. The excess risk bounds are obtained by balancing model-misspecification and stochastic errors to find proper network architectures in terms of the number of samples that achieve this tradeoff. Our framework allows us to analyze the accuracy of encoding and decoding guarantees, i.e., how well the encoded distribution approximates the target latent distribution, and how well the generator maps back the latent codes close to the original data points. Furthermore, with additional regularity assumptions, we establish that the approximating push-forward measure can effectively approximate the target distribution.

While our findings provide valuable insights into the theoretical characteristics of Wasserstein Autoencoders (WAEs), it's crucial to acknowledge that achieving accurate estimates of the overall error in practical applications necessitates the consideration of an optimization error term. However, the precise estimation of this term poses a significant challenge due to the non-convex and intricate nature of the optimization process. Importantly, our error analysis remains independent of this optimization error and can seamlessly integrate with analyses involving such optimization complexities. Future work in this direction might involve attempting to improve the derived bounds by replacing the Minkowski dimension with the Wasserstein \citep{weed2019sharp} or entropic dimension \citep{chakraborty2024statistical}. Furthermore, the minimax optimality of the upper bounds derived remains an open question, offering opportunities for fruitful research in understanding the model's theoretical properties from a statistical viewpoint. Exploring future directions in deep federated classification models can yield fruitful research avenues.

\section*{Acknowledgment}
We gratefully acknowledge the support of the NSF and the Simons Foundation for the Collaboration on the Theoretical Foundations of Deep Learning through awards DMS-2031883 and \#814639 and the NSF's support of FODSI through grant DMS-2023505.

\begin{center}
    {\Large Appendix}
\end{center}
\tableofcontents
\appendix

\section{Additional Notations}
For function classes $\cF_1$ and $\cF_2$, $\cF_1 \circ \cF_2 = \{f_1 \circ f_2: f_1 \in \cF_1, \, f_2 \in \cF_2\}$. 

\begin{defn}[Covering and Packing Numbers] 
    \normalfont 
    For a metric space $(\cS,\varrho)$, the $\epsilon$-covering number w.r.t. $\varrho$ is defined as:
    \(\cN(\epsilon; \cS, \varrho) = \inf\{n \in \mathbb{N}: \exists \, x_1, \dots x_n \text{ such that } \cup_{i=1}^nB_\varrho(x_i, \epsilon) \supseteq \cS\}.\) A minimal $\epsilon$ cover of $\cS$ is denoted as $\cC(\epsilon; \cS, \varrho)$. 
    Similarly, the $\epsilon$-packing number is defined as:
    \(\cM(\epsilon; \cS, \varrho) = \sup\{m \in \mathbb{N}: \exists \, x_1, \dots x_m \in \cS \text{ such that } \varrho(x_i, x_j) \ge \epsilon, \text{ for all } i \neq j\}.\)
\end{defn}

\section{Proof of the Main Result (Theorem~\ref{main})}
\subsection{Misspecification Error}\label{sec_mis}
We begin with a theoretical result to approximate any function on a low-dimensional structure using a ReLU network with sufficiently large depth and width. Let $f$ belong to the space $\sH^\beta(\mathbb{R}^d, \Real, C)$, with  $C>0$, and let $\gamma$ be a measure on $\mathbb{R}^d$. For notational simplicity, let $\sM = \text{supp}(\gamma)$. Then, for any $\epsilon > 0$ and $s > d_\gamma$, we prove that there exists a ReLU network, denoted by $\hat{f}$, with a depth of at most $\cO(\log(1/\epsilon))$, and number of weights not exceeding $\cO(\epsilon^{-s/\beta}\log(1/\epsilon))$, and bounded weights. This network satisfies the condition $\|f-\hat{f}\|_{\fL_\infty(\sM)} \leq \epsilon$. A similar result with bounded depth but unbounded weights was derived by \cite{JMLR:v21:20-002}.

\begin{restatable}{thm}{approxthm}\label{approx}
    Let $f$ be an element of $\sH^\beta(\Real^{d}, \Real,C)$, where $C>0$. Then, for any  $s>d_\gamma$, there exists  constants $\epsilon_0$ (which may depend on $\gamma$) and $\alpha$, (which may depend on $\beta$, $d$, and $C$), such that if $\epsilon \in (0, \epsilon_0]$, a ReLU network $\hat{f}$ can be constructed with $\cL(\hat{f}) \le \alpha \log(1/\epsilon)$ and $\cW(\hat{f}) \le \alpha \log(1/\epsilon) \epsilon^{-s/\beta}$, satisfying the condition, $\|f -\hat{f}\|_{\fL_\infty (\sM)} \le \epsilon$.
\end{restatable}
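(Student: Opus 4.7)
The plan is to combine a covering argument driven by the Minkowski dimension of $\sM = \text{supp}(\gamma)$ with a local-polynomial-plus-bump-function construction, implemented via ReLU networks in the spirit of Yarotsky~\cite{yarotsky2017error} and Nakada--Imaizumi~\cite{JMLR:v21:20-002}, but with the extra care required to keep all weights bounded.

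First I would fix a resolution $\delta \asymp \epsilon^{1/\beta}$ and use the hypothesis $s > d_\gamma$: by the definition of the upper Minkowski dimension, there exists $\delta_0 = \delta_0(\gamma)$ such that for every $\delta \in (0,\delta_0]$ one has $\cN(\delta; \sM, \|\cdot\|_\infty) \le \delta^{-s}$. Let $\{x_i\}_{i=1}^N$ with $N \le \delta^{-s}$ be the centers of a minimal $\ell_\infty$-cover. For each center, let $P_i$ denote the Taylor polynomial of $f$ at $x_i$ of degree $\lfloor \beta \rfloor$; the standard Hölder estimate gives $|f(x) - P_i(x)| \lesssim C \delta^\beta$ uniformly on $B_\infty(x_i,\delta)$.

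Next I would realize each ingredient as a ReLU sub-network. (i) A ``trapezoidal'' bump $\phi_i$ supported on $B_\infty(x_i, 2\delta)$ and equal to $1$ on $B_\infty(x_i,\delta)$ can be written using $\cO(d)$ shifted ReLU hat functions in the first layer followed by a \emph{min} (expressible by two ReLUs) ; its weights are bounded by $\cO(1/\delta)$, which is $\cO(\epsilon^{-1/\beta})$. (ii) Each polynomial $P_i$ of degree $\le \lfloor \beta \rfloor$ is approximated to accuracy $\delta^\beta$ using Yarotsky's $\epsilon$-multiplication network, which has depth $\cO(\log(1/\epsilon))$, width $\cO(\log(1/\epsilon))$, and crucially bounded weights. (iii) I then form the product $\phi_i \cdot \hat{P}_i$ by another Yarotsky $\epsilon$-multiplier, again with bounded weights and depth $\cO(\log(1/\epsilon))$. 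Finally I aggregate $\hat{f}(x) = \sum_{i=1}^N \phi_i(x)\,\hat{P}_i(x) / \Phi(x)$, where $\Phi(x) = \sum_i \phi_i(x) \ge 1$ on $\sM$ (after re-scaling the bumps to form a partition-of-unity style weighting). Because multiple bumps may overlap, I would verify that the sum is controlled by a $d$-independent overlap constant, so that pointwise on $\sM$ the output is a convex combination of $P_i(x)$ for centers with $\|x-x_i\|_\infty \le 2\delta$; the error is then at most $\max_i |f(x)-P_i(x)| + \text{multiplication error} \lesssim \delta^\beta + \epsilon \lesssim \epsilon$.

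Parameter accounting: the depth of the full network is $\cO(\log(1/\epsilon))$ (local Yarotsky depth, composed with $\cO(1)$ layers for the bumps and the final summation), while the width is dominated by the $N$ parallel sub-networks, giving $\cW(\hat{f}) \lesssim N \cdot \log(1/\epsilon) \lesssim \delta^{-s}\log(1/\epsilon) \asymp \epsilon^{-s/\beta}\log(1/\epsilon)$, as required; absorbing constants into $\alpha$ yields the stated bounds. The main obstacle I expect is the \emph{simultaneous} enforcement of bounded weights and the logarithmic depth: the naive Nakada--Imaizumi construction uses constant depth but allows weights to grow polynomially in $1/\epsilon$, so one has to replace exact polynomial multiplication by Yarotsky's iterated ``sawtooth'' approximation, which trades depth $\cO(\log(1/\epsilon))$ for bounded weights. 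A secondary technicality is showing that the partition-of-unity normalization $1/\Phi(x)$ can itself be absorbed into the construction without violating the bounded-weight requirement --- this I would handle by replacing division by an a-priori bound on the overlap multiplicity of the cover (which depends only on $d$ and the geometry of $\ell_\infty$-balls) so that no actual division node is needed, only a fixed multiplicative constant.
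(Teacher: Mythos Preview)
Your high-level strategy---cover $\sM$ at scale $\delta\asymp\epsilon^{1/\beta}$, attach a Taylor polynomial and a compactly supported bump to each center, realise the products via Yarotsky's multiplication network, and sum---is the paper's approach, and your parameter accounting is correct. The gap is in the partition-of-unity step. With bumps centred at the points of an \emph{arbitrary} minimal cover, $\Phi(x)=\sum_i\phi_i(x)$ is bounded between $1$ and some overlap constant $M_d$, but it is not constant on $\sM$. Without normalisation one has
\[
\sum_i\phi_i(x)P_i(x)-f(x)\;=\;\sum_i\phi_i(x)\bigl(P_i(x)-f(x)\bigr)\;+\;\bigl(\Phi(x)-1\bigr)f(x),
\]
and the second term is of order $\|f\|_\infty$, not $\epsilon$. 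Replacing $1/\Phi(x)$ by a fixed constant, as you propose, cannot repair this, because you need the weights to sum to \emph{exactly} $1$, not merely to have bounded multiplicity. Your $\min$-bump compounds the problem: even on a regular grid, minima of one-dimensional hats do not form a partition of unity.

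The paper avoids division altogether by \emph{not} using the centres of a minimal cover. It lays down the full regular $\ell_\infty$-grid of mesh $2\delta$ on $[0,1]^d$ and takes each bump as a \emph{product} $\zeta(x)=\prod_{\ell=1}^d\xi_{\epsilon+\delta,\delta}(x_\ell)$ of one-dimensional trapezoids, with the trapezoid parameters chosen so that neighbouring translates sum identically to $1$; the tensor product then gives an exact partition of unity on all of $[0,1]^d$, hence on $\sM$. The Minkowski-dimension hypothesis enters only afterwards, to count how many grid cells (together with their immediate neighbours, the index set $\cI^\dagger$) actually meet $\sM$: a short packing argument shows there are at most $\cO(\delta^{-s})$ of them, so one keeps only those summands and still has $\sum_{\bi\in\cI^\dagger}\zeta(x-\btheta^{\bi})=1$ for every $x\in\sM$. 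The $d$-fold product defining $\zeta$ is then absorbed into the same $\text{prod}_m^{(d+|\bs|)}$ network that computes the monomials $(x-\btheta^{\bi})^{\bs}$, after which the remainder of your argument goes through unchanged.
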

Applying the above theorem, one can control $\dmis$, when the network size is large enough. 
Under assumptions A\ref{a1}--\ref{a3}, we derive the following bound on the misspecification error. It is important to note that none of the network dimensions depend on the dimensionality of the entire data space, i.e. $d$.
\begin{restatable}{lem}{lemten}\label{lem_4.2}
    Suppose assumptions A\ref{a1}--\ref{a3} hold and let, $\text{diss}(\cdot , \cdot) \equiv \sW_1(\cdot, \cdot)$ or $\mmd_{\sK}^2(\cdot, \cdot)$. Also, let $s > d_\mu$. Then, we can find positive  constants $\epsilon_0 $, $\alpha_0$ and $R$, that might depend on $d, \ell, \tilde{G}$ and $\tilde{E}$, such that if $0<\epsilon_g, \epsilon_e \le \epsilon_0$ and $\sG = \cR \cN (W_g, L_g,R)$ and $\sE = \cR \cN (W_e, L_e,R)$, with 
    \[ L_e \le \alpha_0 \log(1/\epsilon_g), \, L_g \le \alpha_0 \log (1/\alpha_g), \, W_e \le \alpha_0 \epsilon_e^{-s/\alpha_e} \log(1/\epsilon_e) \text{ and } W_g \le \alpha_0 \epsilon_g^{-\ell/\alpha_g} \log(1/\epsilon_g) \]
     then,  \(\dmis \lesssim \epsilon_g + \epsilon_e^{\alpha_g \wedge 1}. \)
\end{restatable}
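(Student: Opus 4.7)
The plan is to construct an explicit pair $(\hG,\hE)\in\sG\times\sE$ by neural approximation of the true Hölder pair supplied by A\ref{a1}, and then to bound the two terms of $V(\mu,\nu,\hG,\hE)$ separately. The three structural inputs are Proposition~\ref{prop1}(c) (so that $\tilde{G}\circ\tilde{E}=\mathrm{id}$ on $\text{supp}(\mu)$ and $\tilde{E}_\sharp\mu=\nu$), the intrinsic-dimension approximation Theorem~\ref{approx} invoked in two different regimes, and the Lipschitz hypotheses A\ref{a2}--A\ref{a3}.

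First I would apply Theorem~\ref{approx} componentwise. For the encoder, only the behaviour of $\tilde{E}$ on $\text{supp}(\mu)$ matters, and this set has upper Minkowski dimension $d_\mu<s$; the theorem therefore produces a ReLU network $\hE$ of depth $\lesssim\log(1/\epsilon_e)$ and $\lesssim\epsilon_e^{-s/\alpha_e}\log(1/\epsilon_e)$ weights with $\|\hE-\tilde{E}\|_{\fL_\infty(\text{supp}(\mu))}\le\epsilon_e$. For the generator, the relevant domain is $\text{supp}(\nu)=[0,1]^\ell$ of Minkowski dimension $\ell$, so the theorem yields $\hG$ of depth $\lesssim\log(1/\epsilon_g)$ and $\lesssim\epsilon_g^{-\ell/\alpha_g}\log(1/\epsilon_g)$ weights with $\|\hG-\tilde{G}\|_{\fL_\infty([0,1]^\ell)}\le\epsilon_g$. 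The $\ell$ and $d$ output coordinates only contribute constant factors to the budget, and I would append a cheap ReLU clipping $t\mapsto(t\wedge 1)\vee 0$ as the last stage of $\hE$ so that $\hE(x)\in[0,1]^\ell$ for every $x$; since $\tilde{E}(x)\in[0,1]^\ell$ already, clipping can only reduce the approximation error, and this lets me invoke the uniform bound on $\hG$ at the argument $\hE(x)$.

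For the reconstruction term, Proposition~\ref{prop1}(c) gives $x=\tilde{G}\circ\tilde{E}(x)$ $\mu$-a.e., so A\ref{a3} yields $c(x,\hG\circ\hE(x))\le\tau_c\|\hG(\hE(x))-\tilde{G}(\tilde{E}(x))\|_2$. Splitting via the triangle inequality into $\|\hG(\hE(x))-\tilde{G}(\hE(x))\|_2\lesssim\epsilon_g$ (uniform approximation of $\hG$) and $\|\tilde{G}(\hE(x))-\tilde{G}(\tilde{E}(x))\|_2$, the latter is controlled by the modulus of continuity of $\tilde{G}$: since $\tilde{G}\in\sH^{\alpha_g}$ it is $(\alpha_g\wedge 1)$-Hölder with constant determined by $C$ (using the derivative bound when $\alpha_g>1$), so this term is $\lesssim\epsilon_e^{\alpha_g\wedge 1}$. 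Integrating over $\mu$ produces $\int c(x,\hG\circ\hE(x))\,d\mu(x)\lesssim\epsilon_g+\epsilon_e^{\alpha_g\wedge 1}$. For the dissimilarity term, $\tilde{E}_\sharp\mu=\nu$ reduces $\text{diss}(\hE_\sharp\mu,\nu)$ to $\text{diss}(\hE_\sharp\mu,\tilde{E}_\sharp\mu)$. When $\text{diss}=\sW_1$, the coupling $X\mapsto(\hE(X),\tilde{E}(X))$ with $X\sim\mu$ gives $\sW_1(\hE_\sharp\mu,\tilde{E}_\sharp\mu)\le\int\|\hE-\tilde{E}\|_2\,d\mu\lesssim\epsilon_e$. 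When $\text{diss}=\mmd_{\sK}^2$, expanding the kernel-mean squared difference and applying the Lipschitz kernel bound of A\ref{a2} to each of the three pairwise terms gives $\lesssim\tau_k\epsilon_e$. Since $\epsilon_e\le 1$, $\epsilon_e\le\epsilon_e^{\alpha_g\wedge 1}$, so summing yields $V(\mu,\nu,\hG,\hE)\lesssim\epsilon_g+\epsilon_e^{\alpha_g\wedge 1}$, and taking the infimum over $\sG\times\sE$ delivers the claim on $\dmis$.

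The principal technical obstacle is the domain mismatch for $\hG$: its uniform approximation guarantee lives on $[0,1]^\ell$, whereas a priori $\hE(x)$ is only near that cube. I would handle this with the explicit clipping step above, which is representable by $O(\ell)$ ReLU gates and therefore does not disturb the depth, width, or rate budgets. A secondary, purely bookkeeping matter is the $\alpha_g\le 1$ versus $\alpha_g>1$ dichotomy in the Hölder modulus of $\tilde{G}$, which is absorbed into the exponent $\alpha_g\wedge 1$ and a constant depending on $\|\tilde{G}\|_{\sH^{\alpha_g}}\le C$.
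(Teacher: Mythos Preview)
Your proposal is correct and follows essentially the same route as the paper: approximate $\tilde E$ on $\text{supp}(\mu)$ and $\tilde G$ on $[0,1]^\ell$ via Theorem~\ref{approx}, then split $\|G\circ E-\tilde G\circ\tilde E\|$ by the triangle inequality and control the two pieces by $\epsilon_g$ and, via the $(\alpha_g\wedge 1)$-H\"older modulus of $\tilde G$, by $\epsilon_e^{\alpha_g\wedge 1}$; the dissimilarity term is handled by the same Lipschitz/coupling argument in both proofs. If anything you are slightly more careful than the paper, which silently assumes $E(x)\in[0,1]^\ell$ while you make the clipping explicit.
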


\subsection{Bounding the Generalization Gap}\label{sec_gen}
Let $f: \mathbb{R}^{d} \rightarrow \mathbb{R}^{d^\prime}$ and $\{X_i\}_{i \in [n]} \subset \mathbb{R}^{d}$. We define $f_{|_{X_{1:n}}}$ as $[f(X_1): \dots:f(X_n)] \in \mathbb{R}^{d^\prime \times n}$. For a function class $\mathcal{F}$, we define \[\mathcal{F}_{|_{X_{1:n}}} = \{ f_{|_{X_{1:n}}}: f \in \mathcal{F}\} \subseteq \mathbb{R}^{d^\prime \times n}.\] The covering number of $\mathcal{F}_{|_{X_{1:n}}}$ with respect to the $\ell_\infty$-norm is denoted by $\mathcal{N}(\epsilon; \mathcal{F}_{|_{X_{1:n}}}, \ell_\infty)$. The result extends the seminal works of \cite{bartlett2019nearly} to determine the metric entropy of deep learners with multivariate outputs.
\begin{restatable}{lem}{pflemapr}\label{lem_apr_18}
    Suppose that $n \ge 6$ and $\cF$ are a class neural network with depth at most $L$ and number of weights at most $W$. Furthermore, the activation functions are piece-wise polynomial activation with the number of pieces and degree at most $k \in \mathbb{N}$. Then, there is a constant $\theta$ (that might depend on $d$ and $d^\prime$), such that, if $n \ge \theta (W + 6 d^\prime + 2 d^\prime L ) (L+3) \left(\log (W + 6 d^\prime + 2 d^\prime L) + L + 3\right)$,  \[\log \cN(\epsilon; \cF_{|_{X_{1:n}}}, \ell_\infty) \lesssim (W + 6 d^\prime + 2 d^\prime L ) (L+3) \left(\log (W + 6 d^\prime + 2 d^\prime L) + L + 3\right) \log\left(\frac{n d^\prime}{\epsilon }\right), \]
    where $d^\prime$ is the output dimension of the networks in $\cF$.
\end{restatable}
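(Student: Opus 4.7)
I would reduce the vector-valued covering problem to a scalar one, bound the pseudo-dimension of the scalar class via the bound of \cite{bartlett2019nearly}, and then apply the classical Haussler/Pollard conversion from pseudo-dimension to $\ell_\infty$ covering number (see \cite{anthony1999neural}, Chapter 12).

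\textbf{Step 1 (scalar augmentation).} On the enlarged input space $\Real^{d+d'}$, define the scalar class $\widetilde{\cF} = \{ \tilde f : (x,v) \mapsto \langle v, f(x) \rangle, \ f \in \cF\}$. When $v = e_j$ is a standard basis vector, $\tilde f(x, e_j) = f(x)_j$, so an $\ell_\infty$-cover of $\cF_{|_{X_{1:n}}} \subset \Real^{d' \times n}$ at scale $\epsilon$ coincides with an $\ell_\infty$-cover of $\widetilde{\cF}$ evaluated at the $N = n d'$ points $\{(X_i, e_j) : i \in [n], \, j \in [d']\}$. I would realize $\widetilde{\cF}$ as a class of scalar piecewise-polynomial networks with at most $W' \le W + 2 d' L + 6 d'$ weights and depth $L' \le L + 3$: the $2 d' L$ overhead comes from piping $v$ through each of the $L$ hidden layers using an identity reconstruction (for ReLU, $y = \relu(y) - \relu(-y)$ costs two activation copies per coordinate, hence $2d'$ extra weights per layer, with an analogous two-copy construction for any piecewise polynomial $\sigma$ satisfying $\sigma(0)=0$), while the additive $6 d'$ weights and the $+3$ depth realize the selection $\langle v, f(x)\rangle$, exploiting that $v$ is a binary one-hot vector so the head is a selector rather than a general bilinear form.

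\textbf{Step 2 (pseudo-dimension and covering).} Applying the main pseudo-dimension theorem of \cite{bartlett2019nearly} to $\widetilde{\cF}$ (a class of piecewise polynomial scalar networks with degree and number of pieces at most $k$, $W'$ weights, and depth $L'$) gives
\[
\pdim(\widetilde{\cF}) \lesssim W' L' \bigl( \log W' + L' \bigr) = (W + 2 d' L + 6 d')(L+3)\bigl( \log(W + 2 d' L + 6 d') + L + 3 \bigr),
\]
where the hidden constant depends on $k$. The hypothesis $n \ge \theta (W + 6 d' + 2 d' L)(L+3)\bigl(\log(W + 6 d' + 2 d' L) + L + 3\bigr)$ ensures that $N = n d' \ge \pdim(\widetilde{\cF})$ for $\theta$ chosen large enough (absorbing the factor of $d'$), so the standard Haussler-type conversion yields
\[
\log \cN\bigl(\epsilon; \widetilde{\cF}_{|_N}, \ell_\infty\bigr) \lesssim \pdim(\widetilde{\cF}) \log( N / \epsilon ) = \pdim(\widetilde{\cF}) \log( n d' / \epsilon ),
\]
which is the claimed bound by the equivalence of covers from Step~1.

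\textbf{The main obstacle} is the explicit construction in Step~1 of the scalar augmented network with the advertised parameter count. Carrying $v$ through $L$ hidden layers while preserving it exactly requires expressing the identity (on a bounded interval containing the possible values of $v_k$) as a short linear combination of activation evaluations---the precise $2 d'$ weights per layer follows from the two-copy construction. The selector head of constant depth and $O(d')$ weights uses the discreteness of $v \in \{e_1, \ldots, e_{d'}\}$, so no genuine multiplication is required---only threshold-style building blocks implementable with piecewise polynomial activations and the boundedness of $f$ on the domain. The remaining ingredients---the Bartlett et al.\ pseudo-dimension inequality and the Haussler-type pseudo-dimension-to-covering conversion---are black-box applications of established results.
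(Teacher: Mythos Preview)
Your proposal is correct and follows essentially the same route as the paper: define the scalar class $h(x,y)=y^\top f(x)$, identify $\cN(\epsilon;\cF_{|_{X_{1:n}}},\ell_\infty)$ with the $\ell_\infty$-covering number of this class on the $nd'$ points $(X_i,e_j)$, bound its pseudo-dimension via \cite{bartlett2019nearly}, and convert to a covering bound via the Anthony--Bartlett/Haussler lemma. The identity-piping of the auxiliary input through $L$ layers at cost $2d'L$ weights is also the same (the paper cites Nakada--Imaizumi for exactly this).

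The one point of divergence is how the inner-product head is realized. The paper does \emph{not} use a selector exploiting one-hotness; instead it writes
\[
y^\top f(x)=\tfrac14\bigl(\|y+f(x)\|_2^2-\|y-f(x)\|_2^2\bigr)
\]
and implements the two squared norms with a single $x\mapsto x^2$ activation at two nodes. This is exact for \emph{all} $y$ and \emph{all} $f(x)$, so no appeal to boundedness of $f$ or discreteness of $v$ is needed, and the accounting $6d'$ head weights plus depth $+3$ falls out directly. Your selector head is also workable, but it leans on the extra structure (one-hot $v$, bounded $f$) and on an ``analogous two-copy'' identity reconstruction for general piecewise-polynomial $\sigma$ that is not quite automatic; the polarization trick sidesteps both issues and is why the paper's constants match the lemma statement exactly.
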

We can use the result above to provide bounds on the metric entropies of the function classes described in Lemma \ref{oracle_thm}. This bound is a function of the number of samples and the size of the network classes $\sG$ and $\sE$. 
\begin{restatable}{cor}{lemcov}\label{lem:cov}
    Suppose that $\cW(\sE) \le W_e$, $\cL(\sE) \le L_e$, $\cW(\sG) \le W_g$ and $\cL(\sG) \le L_g$, with $L_e, L_g \ge 3$, $W_e \ge 6\ell +2 \ell L_e$ and $W_g \ge 6d + 2d L_g$. Then, there is a constant $\xi_1$, such that if $n \ge \xi_1 (W_e+W_g) (L_e + L_g) \left( \log (W_e+W_g) +  L_e + L_g\right)$, 
    \begingroup
    \allowdisplaybreaks
    \begin{align*}
         \log \cN \left(\epsilon; \sE_{|_{X_{1:n}}}, \ell_\infty\right) \lesssim &  W_e L_e ( \log W_e + L_e) \log \left(\frac{n \ell }{\epsilon }\right), \\
         \log \cN \left(\epsilon; (\sG \circ \sE)_{|_{X_{1:n}}}, \ell_\infty\right) \lesssim &  (W_e+W_g) (L_e + L_g) \left( \log (W_e+W_g) +  L_e + L_g\right) \log \left(\frac{n d}{\epsilon }\right).
    \end{align*}
    \endgroup
\end{restatable}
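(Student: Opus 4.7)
The statement to prove is Corollary~\ref{lem:cov}, which produces covering-number bounds for $\sE_{|_{X_{1:n}}}$ and $(\sG\circ \sE)_{|_{X_{1:n}}}$. The obvious strategy is to apply the general metric-entropy bound of Lemma~\ref{lem_apr_18} twice, after recasting each class as a single neural network class whose depth and number of weights are already in the form the lemma requires.

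For the first bound, the class $\sE$ consists of ReLU networks with depth at most $L_e$, weights at most $W_e$, and output dimension $\ell$. ReLU is a piecewise polynomial activation with $k=2$ pieces of degree $1$, so Lemma~\ref{lem_apr_18} applies with $d'=\ell$, $L=L_e$, $W=W_e$. The hypothesis $W_e \ge 6\ell + 2\ell L_e$ yields $W_e + 6\ell + 2\ell L_e \le 2W_e$, which simultaneously reduces the lemma's sample-size precondition to the form $n \gtrsim W_e L_e(\log W_e + L_e)$ and shrinks the prefactor in the conclusion to $W_e L_e (\log W_e + L_e)$. Choosing $\xi_1$ in the corollary large enough that the lemma's constant $\theta$ is absorbed then gives exactly the first stated bound.

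For the second bound, the point is that any element $G\circ E$ with $G\in\sG$, $E\in\sE$ is itself a ReLU network: simply stack the layers of $E$ on top of those of $G$, so that the resulting network has depth at most $L_e+L_g$, number of weights at most $W_e+W_g$, and output dimension $d$. Hence $\sG\circ \sE$ sits inside a ReLU network class to which Lemma~\ref{lem_apr_18} applies with $d' = d$, $L = L_e+L_g$, $W = W_e+W_g$. The main bookkeeping step will be verifying that $W + 6d' + 2d' L$ is still of order $W_e + W_g$: using $W_g \ge 6d + 2dL_g$ gives $6d+2dL_g \le W_g$, while $W_e \ge 6\ell + 2\ell L_e$ and (together with the existence of a $\Real^d$-to-$\Real^\ell$ input layer in any member of $\sE$) the dependence on $2dL_e$ can be absorbed into a constant multiple of $W_e+W_g$; this is the only place where one has to be careful with constants, and it is where I expect the minor technical friction to lie. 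Once this comparison is in place, the hypothesis $n \ge \xi_1(W_e+W_g)(L_e+L_g)(\log(W_e+W_g)+L_e+L_g)$ takes over the role of Lemma~\ref{lem_apr_18}'s sample-size precondition, and the conclusion follows.

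In short, the proof is essentially an invocation of Lemma~\ref{lem_apr_18} on each of the two function classes, with the hypotheses on $W_e, W_g$ designed exactly so that the additive term $6d' + 2d' L$ appearing in the general bound can be absorbed into the multiplicative factor $W$. No further ingredients are needed beyond (i)~the elementary observation that a composition of ReLU networks is a ReLU network whose depth and weight count are the sums of the components', and (ii)~a tracking of the universal constants so that the final $\lesssim$ absorbs the constant $\theta$ from Lemma~\ref{lem_apr_18}.
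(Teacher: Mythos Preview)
Your proposal is correct and follows exactly the approach the paper takes: the paper's proof reads in full ``The proof easily follows from applying Lemma~\ref{lem_apr_18} and noting the sizes of the networks in $\sE$ and $\sG \circ \sE$,'' and your write-up is simply a more detailed execution of this, including the bookkeeping on the $6d'+2d'L$ term that the paper leaves implicit.
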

Using Corollary \ref{lem:cov}, the following lemma provides a bound on the distance between the empirical and target distributions w.r.t. the IPM based on $\sF$.
\begin{restatable}{lem}{lemthirteen}\label{lem_4.4}
Suppose $\cR(\sG) \lesssim 1$ and $\sF = \{f(x) = c(x, G\circ E (x)) : G \in \sG, \, E \in \sE\}$. Furthermore,  let, $\cL(\sE) \le L_e$, $\cW(\sG) \le W_g$ and $\cL(\sG) \le L_g$, with $L_e, L_g \ge 3$, $W_e \ge 6\ell +2 \ell L_e$ and $W_g \ge 6d + 2d L_g$. Then, there is a constant $\xi_2$, such that if $n \ge \xi_2 (W_e+W_g) (L_e + L_g) \left( \log (W_e+W_g) +  L_e + L_g\right)$
    \begin{align*}
    \E \|\hmu_n - \mu\|_{\sF} \lesssim n^{-1/2}\big( (W_e+W_g)(L_e+L_g) \left( \log (W_e+W_g) +  L_e + L_g\big) \log (nd)\right)^{1/2}.
\end{align*}
\end{restatable}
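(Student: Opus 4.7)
The plan is to bound $\E\|\hmu_n-\mu\|_\sF$ by the standard route: symmetrization to pass to a Rademacher process, then Dudley's entropy integral using the covering-number bound for $\sG\circ\sE$ supplied by Corollary~\ref{lem:cov}. Concretely, I would first write
\[
\E\|\hmu_n-\mu\|_\sF \;\le\; 2\,\E\,\sup_{f\in\sF}\Bigl|\tfrac{1}{n}\sum_{i=1}^n \varepsilon_i f(X_i)\Bigr|
\]
via the usual symmetrization lemma (e.g.\ Bartlett--Mendelson), conditioning on $X_{1:n}$. Every $f\in\sF$ takes values in $[0,B_c]$ by Assumption~A\ref{a3}, so the empirical Rademacher process has sub-Gaussian increments at scale $B_c/\sqrt{n}$ in the empirical $\ell_2$ norm, which is dominated by the empirical $\ell_\infty$ norm on $\sF_{|_{X_{1:n}}}$.

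The second step is to transfer the covering bound on $(\sG\circ\sE)_{|_{X_{1:n}}}$ to one on $\sF_{|_{X_{1:n}}}$. Since $c$ is $\tau_c$-Lipschitz in its second argument (Assumption~A\ref{a3}) and $\|\cdot\|_2 \le \sqrt{d}\,\|\cdot\|_\infty$ on $\Real^d$, any $G_1\circ E_1, G_2\circ E_2$ with
\(\max_i \|G_1\circ E_1(X_i)-G_2\circ E_2(X_i)\|_\infty \le \epsilon\)
yield $|c(X_i,G_1\circ E_1(X_i))-c(X_i,G_2\circ E_2(X_i))| \le \tau_c\sqrt{d}\,\epsilon$ for each $i$. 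Hence
\[
\cN\bigl(\epsilon;\sF_{|_{X_{1:n}}},\ell_\infty\bigr) \;\le\; \cN\bigl(\epsilon/(\tau_c\sqrt{d});(\sG\circ\sE)_{|_{X_{1:n}}},\ell_\infty\bigr),
\]
and Corollary~\ref{lem:cov} (valid because $n\ge\xi_1 (W_e+W_g)(L_e+L_g)(\log(W_e+W_g)+L_e+L_g)$, which I can absorb into the $\xi_2$ of the statement) gives
\[
\log \cN\bigl(\epsilon;\sF_{|_{X_{1:n}}},\ell_\infty\bigr) \;\lesssim\; M \,\log(nd/\epsilon), \qquad M:=(W_e+W_g)(L_e+L_g)\bigl(\log(W_e+W_g)+L_e+L_g\bigr).
\]

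The third step is Dudley's chaining bound on the Rademacher process. Applying it between a lower cut-off $\delta$ and the diameter $B_c$ yields
\[
\E_\varepsilon\sup_{f\in\sF}\Bigl|\tfrac{1}{n}\sum_{i} \varepsilon_i f(X_i)\Bigr| \;\lesssim\; \delta + \frac{1}{\sqrt{n}}\int_\delta^{B_c}\sqrt{\log\cN(\epsilon;\sF_{|_{X_{1:n}}},\ell_\infty)}\,d\epsilon \;\lesssim\; \delta + \sqrt{\frac{M}{n}}\int_\delta^{B_c}\sqrt{\log(nd/\epsilon)}\,d\epsilon.
\]
Choosing $\delta=1/n$, the remaining integral is $O(\sqrt{\log(nd)})$ (up to an absolute constant), so the right-hand side is $O\bigl(n^{-1/2}\sqrt{M\log(nd)}\bigr)$. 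Substituting the definition of $M$ and taking expectation over $X_{1:n}$ gives exactly the claimed bound.

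The only delicate point I anticipate is bookkeeping: making sure the diameter is really $O(1)$ (uses $c\le B_c$), that the logarithmic factor inside the integrand collapses to $\log(nd)$ (uses $\delta\asymp 1/n$ and $\tau_c,\sqrt{d}$ entering only inside the log), and that the sample-size threshold $\xi_2$ swallows both $\xi_1$ from Corollary~\ref{lem:cov} and the constants generated by the Lipschitz transfer. There is no genuine obstruction beyond careful constant tracking, since the heavy lifting (covering-number control for deep ReLU classes with multivariate output) has already been done in Lemma~\ref{lem_apr_18} and its corollary.
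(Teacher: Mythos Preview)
Your proposal is correct and follows essentially the same route as the paper: Dudley's entropy integral on $\sF_{|_{X_{1:n}}}$, with the covering transferred from $(\sG\circ\sE)_{|_{X_{1:n}}}$ via the Lipschitz property of $c$ and then controlled by Corollary~\ref{lem:cov}. The paper states the Dudley bound directly (without separating out the symmetrization step) and takes $\delta=0$ since $\sqrt{\log(nd/\epsilon)}$ is integrable near zero, but these are cosmetic differences; your extra $\sqrt{d}$ factor in the Lipschitz transfer is in fact more careful than the paper and only perturbs the argument of the logarithm.
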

To control the fourth terms in \eqref{oracle1} and \eqref{oracle2}, we first consider the case when $\text{diss}(\cdot, \cdot)$ is the $\sW_1$-distance. Lemma \ref{lem_w1} controls this uniform concentration via the size of the networks in $\sE$ and the sample size $n$.
\begin{restatable}{lem}{lemfourteen}\label{lem_w1}
Let $\hmu_n = \frac{1}{n}\sum_{i=1}^n \delta_{X_i}$ and $\sE = \cR \cN(L_e, W_e)$. Then,  \[\sup_{E \in \sE} |\sW_1(E_\sharp \mu_n, \nu) - \sW_1(E_\sharp \mu, \nu)| \lesssim \left(n^{-1/\ell} \vee n^{-1/2} \log n\right)+ \sqrt{\frac{ W_e L_e (\log W_e  + L_e) \log (n\ell)}{n}}.\]
Furthermore, 
\begin{align*}
    \sup_{E \in \sE} |\sW_1(E_\sharp \hmu_n, \hnu_m) - \sW_1(E_\sharp \mu, \nu)| \lesssim & \left(n^{-1/\ell} \vee n^{-1/2} \log n \right) + \left(m^{-1/\ell} \vee m^{-1/2} \log m \right)\\
    & + \sqrt{\frac{ W_e L_e (\log W_e  + L_e) \log (n\ell)}{n}}.
\end{align*}
\end{restatable}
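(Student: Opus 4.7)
The plan reduces the problem to bounding $\E \sup_{E \in \sE} \sW_1(E_\sharp \hmu_n, E_\sharp \mu)$ via the reverse triangle inequality, and then combines a covering argument over $\sE$ (using Corollary~\ref{lem:cov}) with the classical empirical Wasserstein convergence rate on $[0,1]^\ell$. First I note that $|\sW_1(E_\sharp \hmu_n, \nu) - \sW_1(E_\sharp \mu, \nu)| \le \sW_1(E_\sharp \hmu_n, E_\sharp \mu)$ by the reverse triangle inequality, so it suffices to bound the expected supremum of the right-hand side. Take a $\delta$-cover $\{E^{(k)}\}_{k=1}^K$ of $\sE$ in a sup metric via Corollary~\ref{lem:cov}, with $\log K \lesssim W_e L_e(\log W_e + L_e) \log(n\ell/\delta)$. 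For any $E \in \sE$ and its nearest $E^{(k)}$, the triangle inequality for $\sW_1$ gives
\[\sW_1(E_\sharp \hmu_n, E_\sharp \mu) \le \sW_1(E_\sharp \hmu_n, E^{(k)}_\sharp \hmu_n) + \sW_1(E^{(k)}_\sharp \hmu_n, E^{(k)}_\sharp \mu) + \sW_1(E^{(k)}_\sharp \mu, E_\sharp \mu),\]
where the outer two terms are bounded by $\delta\sqrt{\ell}$ via the elementary estimate $\sW_1(f_\sharp P, g_\sharp P) \le \int \|f-g\|\, dP$ for any probability measure $P$.

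For the middle term, $\{E^{(k)}(X_i)\}_{i=1}^n$ are iid samples from $E^{(k)}_\sharp \mu$ on $[0,1]^\ell$, so the classical empirical Wasserstein rate gives $\E \sW_1(E^{(k)}_\sharp \hmu_n, E^{(k)}_\sharp \mu) \lesssim n^{-1/\ell} \vee n^{-1/2}\log n$. To pass to the maximum over $k$, I use that $\sW_1(E^{(k)}_\sharp \hmu_n, E^{(k)}_\sharp \mu)$ has bounded differences of order $\sqrt{\ell}/n$ as a function of $(X_1,\dots,X_n)$; hence by McDiarmid's inequality it is sub-Gaussian of parameter $O(\sqrt{\ell/n})$ around its mean, and the standard max-of-sub-Gaussians inequality gives
\[\E \max_{k} \sW_1(E^{(k)}_\sharp \hmu_n, E^{(k)}_\sharp \mu) \lesssim n^{-1/\ell} \vee n^{-1/2}\log n + \sqrt{\log K / n}.\]
Choosing $\delta = 1/n$ absorbs $\delta\sqrt{\ell}$ into the dominant rate and yields the first claim. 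For the second, a further triangle inequality gives $|\sW_1(E_\sharp \hmu_n, \hnu_m) - \sW_1(E_\sharp \mu, \nu)| \le \sW_1(\hnu_m, \nu) + |\sW_1(E_\sharp \hmu_n, \nu) - \sW_1(E_\sharp \mu, \nu)|$, with $\E \sW_1(\hnu_m, \nu) \lesssim m^{-1/\ell} \vee m^{-1/2}\log m$ by the classical rate applied to iid samples from $\nu$ on $[0,1]^\ell$, and the second term handled by the first claim.

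The principal technical obstacle is that Corollary~\ref{lem:cov} provides only a sample-based cover of $\sE$, which directly controls $\sW_1(E_\sharp \hmu_n, E^{(k)}_\sharp \hmu_n)$ but not the population term $\sW_1(E^{(k)}_\sharp \mu, E_\sharp \mu)$ appearing in the decomposition above. I would resolve this either by upgrading to a global $\fL_\infty([0,1]^d)$-cover of $\sE$, which for ReLU networks with bounded weights has covering number of the same order up to logarithmic factors (thanks to uniform Lipschitz control on the network outputs), or by recasting the entire quantity as an empirical process over the composed class $\{f \circ E : E \in \sE,\, f \text{ is } 1\text{-Lipschitz on } [0,1]^\ell\}$ and using symmetrization together with a chaining argument to reach the same bound. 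A secondary point is verifying that the classical empirical Wasserstein rate $n^{-1/\ell} \vee n^{-1/2}\log n$ is uniform over all distributions supported in $[0,1]^\ell$, which follows from standard Dudley-type or Weed--Bach bounds.
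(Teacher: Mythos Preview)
Your reduction via the reverse triangle inequality and the handling of the second claim (splitting off $\sW_1(\hnu_m,\nu)$) match the paper exactly. The core argument, however, takes a genuinely different route. The paper does not cover $\sE$ and then control a maximum over the cover; instead it writes
\[
\sup_{E \in \sE} \sW_1(E_\sharp \hmu_n, E_\sharp \mu)
= \sup_{E \in \sE}\ \sup_{f:\|f\|_{\mathrm{Lip}}\le 1}\int f\circ E\,d(\hmu_n-\mu)
= \|\hmu_n-\mu\|_{\cF_1\circ\sE},
\]
with $\cF_1$ the unit Lipschitz ball on $[0,1]^\ell$, and applies Dudley's chaining directly to the composed class. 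The key step is the product bound
\(\log \cN(\epsilon;(\cF_1\circ\sE)_{|_{X_{1:n}}},\ell_\infty)\le \log\cN(\epsilon/2;\cF_1,\ell_\infty)+\log\cN(\epsilon/2;\sE_{|_{X_{1:n}}},\ell_\infty)\);
the Kolmogorov--Tikhomirov estimate $\epsilon^{-\ell}$ for $\cF_1$ then produces the $n^{-1/\ell}\vee n^{-1/2}\log n$ term from the Dudley integral, while the $\sE$ piece (Corollary~\ref{lem:cov}) gives the network complexity term.

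The obstacle you flag is exactly what this approach avoids: because everything is phrased as an empirical process from the start, only sample-based covers are ever needed, and no population term $\sW_1(E^{(k)}_\sharp\mu,E_\sharp\mu)$ appears. Your second fallback (``recasting as an empirical process over the composed class with symmetrization and chaining'') \emph{is} the paper's proof. Your first fallback (a global $\fL_\infty([0,1]^d)$ cover of $\sE$) would also work in principle, but note that the paper's network class $\cR\cN(L_e,W_e,R)$ constrains the output range, not the weights, so the ``bounded weights $\Rightarrow$ parameter-space cover'' argument you allude to is not immediately available from the paper's lemmas. In short: your primary route is more modular (it cleanly separates the uniform Wasserstein rate from the network complexity via McDiarmid and a union bound), while the paper's route is more self-contained and sidesteps the global-cover issue entirely.
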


Before deriving the corresponding uniform concentration bounds for $|\widehat{\mmd^2_{\sK}}(E_\sharp \hmu_n, \nu) - \mmd^2_{\sK}(E_\sharp \hmu_n, \nu)|$ or $|\widehat{\mmd^2_{\sK}}(E_\sharp \hmu_n, \hnu_m) - \mmd^2_{\sK}(E_\sharp \hmu_n, \nu)|$, we recall the definition of Rademacher complexity \citep{bartlett2002rademacher}. For any real-valued function class $\cF$ and data points $X_{1:n} = \{X_1, \dots,X_n\}$, the empirical Rademacher complexity is defined as:
\[\sR(\cF, X_{1:n}) = \frac{1}{n} \E_{\bsigma} \sup_{f \in \cF} \sum_{i=1}^n \sigma_i f(X_i),\]
where $\sigma_i$'s are i.i.d. Rademacher random variables taking values in $\{-1, +1\}$, with equal probability. In the following lemma, we derive a bound on the Rademacher complexity of the class of functions in the unit ball w.r.t. the $\fH_{\sK}$-norm composed with $\sE$. This lemma plays a key role in the proof of Lemma  \ref{lem_4.6}. The proof crucially uses the results by \cite{10.1214/ECP.v18-2865}.
\begin{restatable}{lem}{lemfifteen}\label{lem_4.6}
    Suppose assumption A\ref{a2} holds and let, $\cL(\sE) \le L_e$ and $\cL(\sG) \le L_g$, with $L_e \ge 3$, $W_e \ge 2\ell(3 + L_e)$. Also suppose that, $\Phi = \{\phi \in \fH_{\sK}: \|\phi\|_{\fH_{\sK}} \le 1\}$, then, \[\sR((\Phi \circ \sE), X_{1:n}) \lesssim \sqrt{ \frac{ W_e L_e (\log W_e  + L_e) \log ( n \ell) }{n}}.\]
\end{restatable}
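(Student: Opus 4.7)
The plan is to use the reproducing property of $\fH_{\sK}$ to collapse the supremum over $\phi \in \Phi$ into a closed-form quadratic in the Rademacher signs $\sigma = (\sigma_1,\dots,\sigma_n)$, and then handle the remaining supremum over $E \in \sE$ by combining a Hanson--Wright tail bound at each point of an $\ell_\infty$-cover of $\sE_{|X_{1:n}}$ with the metric entropy estimate of Corollary~\ref{lem:cov}.

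First, fix $\sigma$ and $E \in \sE$. Writing $\phi(E(X_i)) = \langle \phi, \sK(E(X_i),\cdot)\rangle_{\fH_{\sK}}$, Cauchy--Schwarz (with equality attained inside the unit ball $\Phi$) gives
\[\sup_{\phi \in \Phi} \sum_{i=1}^n \sigma_i \phi(E(X_i)) = \Bigl\|\sum_{i=1}^n \sigma_i \sK(E(X_i),\cdot)\Bigr\|_{\fH_{\sK}} = \sqrt{\sigma^\top K_E \sigma},\]
where $K_E \in \Real^{n\times n}$ has entries $(K_E)_{ij} = \sK(E(X_i), E(X_j))$. Jensen's inequality then yields
\[n\,\sR(\Phi \circ \sE, X_{1:n}) \le \sqrt{\E_\sigma \sup_{E \in \sE} \sigma^\top K_E \sigma},\]
so the task reduces to controlling $\E_\sigma \sup_E \sigma^\top K_E \sigma$.

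Next, for a \emph{fixed} $E$, assumption A\ref{a2}(a) gives $|(K_E)_{ij}| \le B^2$; since $K_E \succeq 0$, both $\|K_E\|_F$ and $\|K_E\|_{op}$ are at most $nB^2$, and $\mathrm{tr}(K_E) \le nB^2$. The Hanson--Wright inequality of \cite{10.1214/ECP.v18-2865} applied to $\sigma$ (sub-Gaussian with bounded variance proxy) then yields $\fP(\sigma^\top K_E\sigma - \mathrm{tr}(K_E) > t) \le 2\exp(-c\min\{t^2/(nB^2)^2,\,t/(nB^2)\})$. To upgrade this to a uniform bound, I would take a minimal $\epsilon$-cover $\cC$ of $\sE_{|X_{1:n}}$ in $\ell_\infty$ with $\log|\cC| \lesssim W_e L_e(\log W_e + L_e)\log(n\ell/\epsilon)$ by Corollary~\ref{lem:cov}. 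Assumption A\ref{a2}(b), together with $\|E(X_i) - E^\star(X_i)\|_2 \le \sqrt{\ell}\,\epsilon$ for the closest $E^\star \in \cC$, gives $|(K_E)_{ij} - (K_{E^\star})_{ij}| \le 2\tau_k\sqrt{\ell}\,\epsilon$, hence $|\sigma^\top(K_E - K_{E^\star})\sigma| \le 2\tau_k\sqrt{\ell}\,n^2\epsilon$; choosing $\epsilon = 1/n$ makes this discretization error $O(n)$. A union-bounded Hanson--Wright tail over $\cC$ at level $t \asymp nB^2\log|\cC|$ (which sits in the sub-exponential regime), followed by tail integration, yields
\[\E_\sigma \sup_{E \in \sE} \sigma^\top K_E\sigma \lesssim n\cdot W_e L_e(\log W_e + L_e)\log(n\ell),\]
and combining with the Jensen bound above gives the claimed rate.

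The main obstacle lies in the Hanson--Wright bookkeeping: since $\|K_E\|_F, \|K_E\|_{op} \lesssim n$, the sub-Gaussian and sub-exponential regimes of the tail meet at $t \asymp n$, so the union-bound threshold induced by $\log|\cC|$ (which already carries an extra $\log n$ from the choice $\epsilon = 1/n$) must remain in the sub-exponential regime to produce a total bound linear in $n$ times log-factors. Verifying this balance and confirming that the discretization step does not introduce a worse polynomial factor is where the care is needed; once this is handled correctly, the neural-network metric entropy from Corollary~\ref{lem:cov} plugs in cleanly and completes the argument.
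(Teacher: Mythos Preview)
Your approach is correct and reaches the same rate, but the execution differs from the paper's in a few instructive ways.

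The paper does not pass through Jensen. It works directly with the RKHS norm
\[
\Bigl\|\sum_i \sigma_i \sK(E(X_i),\cdot)\Bigr\|_{\fH_{\sK}} = \|K_E^{1/2}\sigma\|_2
\]
and applies the norm-concentration form of Hanson--Wright (Theorem~2.1 of \cite{10.1214/ECP.v18-2865}), which yields a \emph{pure sub-Gaussian} tail for $\|K_v^{1/2}\sigma\| - \|K_v^{1/2}\|_{\hs}$ with variance proxy $\|K_v\|_{op} \le nB^2$. A standard sub-Gaussian maximal inequality over the cover then gives $\E\max_v\|K_v^{1/2}\sigma\| \lesssim \sqrt{n} + \sqrt{n\log|\cC|}$, avoiding the sub-exponential bookkeeping you flag as the main obstacle. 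The discretization in the paper is also done differently: it bounds $\|\sK(E(X_i),\cdot) - \sK(v_i,\cdot)\|_{\fH_{\sK}} \le \sqrt{2\tau_k\epsilon}$ at the RKHS level, contributing an additive $\sqrt{\epsilon}$, and then balances $\epsilon$ against the entropy term at the end.

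Your route---Jensen to the quadratic form, standard Hanson--Wright with the mixed tail, entrywise discretization at scale $\epsilon = 1/n$, then a sub-exponential maximal inequality---is more elementary in that it needs only the textbook quadratic-form version of Hanson--Wright. The cost is exactly the delicate point you identify: you have to check that the union-bound threshold $t \asymp nB^2\log|\cC|$ sits in the sub-exponential regime, which it does since $\|K_E\|_F$ and $\|K_E\|_{op}$ are both $\lesssim n$ and the maximal bound $\nu\sqrt{\log N} + \alpha\log N$ with $\nu,\alpha \asymp n$ collapses to $n\log N$. So your concern is real but easily resolved, and the final rate matches. The paper's choice to stay on the norm side simply sidesteps this regime analysis altogether.
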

Using Lemma \ref{lem_4.6}, we bound the fourth term in \eqref{oracle1} and \eqref{oracle2} for $\text{diss}(\cdot, \cdot) = \mmd_{\sK}^2(\cdot, \cdot)$, in Lemma \ref{lem_4.7}.
\begin{restatable}{lem}{lemsixteen}\label{lem_4.7}
Under assumption A\ref{a2}, the following holds:
    \begin{enumerate}
    \item[(a)]  $\E \sup\limits_{E \in \sE}  \left|\widehat{\mmd}_{\sK}^2(E_\sharp \hmu_n, \nu) - \mmd_{\sK}^2(E_\sharp \mu, \nu)\right| \lesssim  \sqrt{ \frac{ W_e L_e \log W_e  \log (n\ell) }{n}} ,$ 
        \item[(b)] $\E \sup\limits_{E \in \sE}  \left|\widehat{\mmd}_{\sK}^2(E_\sharp \hmu_n, \hnu_m) - \mmd_{\sK}^2(E_\sharp \mu, \nu)\right| \lesssim  \sqrt{ \frac{ W_e L_e (\log W_e  + L_e)  \log (n\ell) }{n}} + \frac{1}{\sqrt{m}}$. 
    \end{enumerate}
\end{restatable}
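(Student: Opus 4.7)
\textbf{Proof proposal for Lemma~\ref{lem_4.7}.} The plan is to translate the MMD$^2$ difference into an inner-product form by invoking the canonical feature map $\Phi:\cZ \to \fH_{\sK}$, $\Phi(z) = \sK(\cdot,z)$, and the mean embeddings $\mu_P := \E_{Z\sim P} \Phi(Z)$, so that $\mmd_\sK^2(P,Q) = \|\mu_P - \mu_Q\|_{\fH_{\sK}}^2$. Define $S_n^E := \frac{1}{n}\sum_i \Phi(E(X_i))$ and its population counterpart $\mu_E := \mu_{E_\sharp \mu}$. Using the polarization identity
\[
\|S_n^E - \mu_\nu\|^2 - \|\mu_E - \mu_\nu\|^2 \;=\; \langle S_n^E - \mu_E,\, S_n^E + \mu_E - 2\mu_\nu\rangle,
\]
together with the boundedness $\|\Phi(\cdot)\|_{\fH_{\sK}}\le B$ from A\ref{a2}, the factor $\|S_n^E + \mu_E - 2\mu_\nu\|_{\fH_{\sK}} \le 4B$ is uniformly controlled. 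Taking a supremum over $E \in \sE$ and recognizing
\[
\sup_{E\in\sE}\|S_n^E-\mu_E\|_{\fH_{\sK}} \;=\; \sup_{E\in\sE}\sup_{\phi\in\Phi}\bigl(\tfrac{1}{n}\textstyle\sum_i (\phi\circ E)(X_i) - \E(\phi\circ E)(X)\bigr) \;=\; \|\hmu_n - \mu\|_{\Phi\circ\sE},
\]
where $\Phi = \{\phi \in \fH_{\sK}: \|\phi\|_{\fH_{\sK}} \le 1\}$, reduces the problem to the uniform deviation governed by Rademacher complexity.

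For part (a), it remains to pass from the ``biased'' MMD$^2$, i.e.\ $\|S_n^E-\mu_\nu\|^2$, to the U-statistic form $\widehat{\mmd}_\sK^2$ used in the paper. A direct algebraic comparison shows the two differ by $\frac{1}{n^2(n-1)}\sum_{i\neq j}\sK(E(X_i),E(X_j)) - \frac{1}{n^2}\sum_i \sK(E(X_i),E(X_i))$, which is uniformly $O(B^2/n)$, hence subdominant. Combining this with standard symmetrization to get $\E\|\hmu_n - \mu\|_{\Phi\circ\sE} \le 2\,\E\,\sR(\Phi\circ\sE, X_{1:n})$ and invoking Lemma~\ref{lem_4.6} yields the stated rate.

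For part (b), I would add and subtract $\widehat{\mmd}_\sK^2(E_\sharp\hmu_n,\nu)$, so that
\[
\widehat{\mmd}_\sK^2(E_\sharp\hmu_n,\hnu_m) - \mmd_\sK^2(E_\sharp\mu,\nu)
= \bigl[\widehat{\mmd}_\sK^2(E_\sharp\hmu_n,\hnu_m) - \widehat{\mmd}_\sK^2(E_\sharp\hmu_n,\nu)\bigr] + \bigl[\widehat{\mmd}_\sK^2(E_\sharp\hmu_n,\nu) - \mmd_\sK^2(E_\sharp\mu,\nu)\bigr].
\]
The second bracket is handled by part (a). For the first, only the $\nu$-dependent terms differ; by the same polarization identity this bracket can be written as $\langle S_n^E + \widehat{\mu}_m - 2\mu_\nu,\, \widehat{\mu}_m - \mu_\nu\rangle$ (up to a diagonal $O(1/m)$ term coming from the U-statistic normalization on the $Z_j$'s), where $\widehat{\mu}_m := \frac{1}{m}\sum_j \Phi(Z_j)$. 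Since the left factor has $\fH_{\sK}$-norm $\le 4B$ uniformly in $E$, the supremum in $E$ is bounded by $4B \|\widehat{\mu}_m - \mu_\nu\|_{\fH_{\sK}} = 4B\,\mmd_\sK(\hnu_m,\nu)$, and a standard variance computation gives $\E\,\mmd_\sK(\hnu_m,\nu) \le \sqrt{\E\,\mmd_\sK^2(\hnu_m,\nu)} \lesssim B/\sqrt{m}$.

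The main obstacle, I expect, is bookkeeping: carefully tracking the biased/unbiased discrepancy of the U-statistic, extracting a uniform-in-$E$ inner-product form rather than a quadratic form (so that the Rademacher complexity of the linear class $\Phi\circ\sE$, already controlled by Lemma~\ref{lem_4.6}, suffices), and verifying that the diagonal and the $\sK(Z,Z')$ U-statistic contributions are absorbed into the $O(1/n)$ and $O(1/\sqrt m)$ lower-order terms. Once the polarization reduction is in place, the remaining steps are routine symmetrization plus the RKHS boundedness from A\ref{a2}.
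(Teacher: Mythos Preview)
Your argument is correct. It differs from the paper's route mainly in how the \emph{squared} MMD gap is linearized. The paper first proves an auxiliary result (their Lemma~28) bounding the \emph{unsquared} gap $\bigl|\widehat{\mmd}_{\sK}(E_\sharp\hmu_n,\nu)-\mmd_{\sK}(E_\sharp\mu,\nu)\bigr|$ via the reverse triangle inequality $|\mmd_{\sK}(P,\nu)-\mmd_{\sK}(Q,\nu)|\le \mmd_{\sK}(P,Q)$, then upgrades to the squared version through $a^2-b^2 = 2b(a-b)+(a-b)^2$ together with $\mmd_{\sK}\le 2B$ (their Lemma~27). You instead stay inside $\fH_{\sK}$ and use the polarization $\|a\|^2-\|b\|^2 = \langle a-b,a+b\rangle$ followed by Cauchy--Schwarz; both reductions land on the same object $\sup_{E\in\sE}\mmd_{\sK}(E_\sharp\hmu_n,E_\sharp\mu)=\|\hmu_n-\mu\|_{\Phi\circ\sE}$, and both finish with symmetrization plus Lemma~\ref{lem_4.6} and the $O(B^2/n)$ U-statistic bias (the paper's Lemma~26). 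Your route is arguably cleaner because it avoids taking a square root of the (possibly negative) U-statistic $\widehat{\mmd}_{\sK}^2$, which the paper's $a^2-b^2$ factorization implicitly does; on the other hand, the paper's decomposition makes the intermediate object $\mmd_{\sK}(E_\sharp\hmu_n,E_\sharp\mu)$ explicit, which is reusable elsewhere. For part~(b), your add-and-subtract step and polarization of the $\nu$-only bracket match the paper's strategy (triangle inequality plus $\E\,\mmd_{\sK}(\hnu_m,\nu)\lesssim 1/\sqrt{m}$); your inner-product expression has a harmless sign/ordering slip---it should read $\langle \mu_\nu-\widehat{\mu}_m,\,2S_n^E-\widehat{\mu}_m-\mu_\nu\rangle$---but the bound $\le 4B\,\|\widehat{\mu}_m-\mu_\nu\|$ is unaffected.
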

\subsection{Proof of Theorem~\ref{main}}
\thmmain*
\begin{proof}
\textbf{Proof of part (a)}  From Lemmas \ref{oracle_thm}, \ref{lem_4.4} and \ref{lem_w1}, we get,
\begingroup
\allowdisplaybreaks
\begin{align*}
    & \E V(\mu, \nu, \hG^n, \hE^n) \\
    \lesssim & \Delta + \epsilon_g + \epsilon_e^{\alpha_g \wedge 1}  + \sqrt{\frac{W_e L_e \log W_e \log n}{n}} +  \sqrt{\frac{(W_e+W_g)(L_e+L_g) \log(W_e + W_g) \log n}{n}} + \left(n^{-1/\ell} \vee n^{-1/2} \right)\\
    \lesssim & \Delta + \epsilon_g + \epsilon_e^{\alpha_g \wedge 1} + \left(\log\left(\frac{1}{\epsilon_e \wedge \epsilon_g}\right)\right)^{3/2}\left( \sqrt{\frac{\epsilon_e^{-s/ \alpha_e}  \log n}{n}} +  \sqrt{\frac{(\epsilon_e^{-s/ \alpha_e} + \epsilon_g^{-\ell/ \alpha_g}) \log n}{n}} \right)+ \left(n^{-1/\ell} \vee n^{-1/2} \right)\\
    \lesssim & \Delta + \epsilon_g + \epsilon_e^{\alpha_g \wedge 1} + \left(\log\left(\frac{1}{\epsilon_e \wedge \epsilon_g}\right)\right)^{3/2} \left(\sqrt{\frac{\epsilon_e^{-s/ \alpha_e} \log n}{n}} +  \sqrt{\frac{  \epsilon_g^{-\ell/ \alpha_g} \log n}{n}}\right) + \left(n^{-1/\ell} \vee n^{-1/2} \right)
\end{align*}
\endgroup
We choose, $\epsilon_g \asymp n^{-\frac{1}{2 + \frac{\ell}{ \alpha_g}}}$ and $\epsilon_e \asymp n^{-\frac{1}{ 2(\alpha_g\wedge 1) +\frac{s}{ \alpha_e}}}$. This makes,
\begin{align*}
    \E V( \mu, \nu, \hG^n, \hE^n)
    \lesssim & \Delta + \log^2 n \times n^{-\frac{1}{\max\left\{2 + \frac{\ell}{\alpha_g}, 2  + \frac{s}{\alpha_e (\alpha_g\wedge 1) }\right\}}} + n^{-1/\ell}.
\end{align*}

\textbf{Proof of part (b)}  From Lemmas \ref{oracle_thm}, \ref{lem_4.4} and \ref{lem_4.7}, we get,
\begingroup
\allowdisplaybreaks
\begin{align*}
    & \E V(\mu, \nu, \hG^n, \hE^n)\\ \lesssim & \Delta +\epsilon_g + \epsilon_e^{\alpha_g \wedge 1}  + \sqrt{\frac{W_e L_e \log W_e \log n}{n}} +  \sqrt{\frac{(W_e+W_g)(L_e+L_g) \log(W_e + W_g) \log n}{n}} \\
    \lesssim & \Delta + \epsilon_g + \epsilon_e^{\alpha_g \wedge 1} + \left(\log\left(\frac{1}{\epsilon_e \wedge \epsilon_g}\right)\right)^{3/2}\left( \sqrt{\frac{\epsilon_e^{-s/ \alpha_e}  \log n}{n}} +  \sqrt{\frac{(\epsilon_e^{-s/ \alpha_e} + \epsilon_g^{-\ell/ \alpha_g}) \log n}{n}} \right)\\
    \lesssim & \Delta + \epsilon_g + \epsilon_e^{\alpha_g \wedge 1} + \left(\log\left(\frac{1}{\epsilon_e \wedge \epsilon_g}\right)\right)^{3/2} \left(\sqrt{\frac{\epsilon_e^{-s/ \alpha_e} \log n}{n}} +  \sqrt{\frac{  \epsilon_g^{-\ell/ \alpha_g} \log n}{n}}\right) 
\end{align*}
\endgroup
We choose, $\epsilon_g \asymp n^{-\frac{1}{2 + \frac{\ell}{ \alpha_g}}}$ and $\epsilon_e \asymp n^{-\frac{1}{ 2(\alpha_g\wedge 1) +\frac{s}{ \alpha_e}}}$. This makes,
\begin{align*}
    \E V( \mu, \nu, \hG^n, \hE^n)
    \lesssim & \Delta + \log^2 n  \times n^{-\frac{1}{\max\left\{2 + \frac{\ell}{\alpha_g}, 2  + \frac{s}{\alpha_e (\alpha_g\wedge 1) }\right\}}}.
\end{align*}

\textbf{Proof of part (c)} 

Again from Lemmas \ref{oracle_thm}, \ref{lem_4.4} and \ref{lem_w1}, we get,
\begin{align*}
     \E V(\mu, \nu, \hG^n, \hE^n) \lesssim & \Delta + \epsilon_g + \epsilon_e^{\alpha_g \wedge 1} + \left(\log\left(\frac{1}{\epsilon_e \wedge \epsilon_g}\right)\right)^{3/2} \left(\sqrt{\frac{\epsilon_e^{-s/ \alpha_e} \log n}{n}} +  \sqrt{\frac{  \epsilon_g^{-\ell/ \alpha_g} \log n}{n}}\right) \\
    & + \left(n^{-1/\ell} \vee n^{-1/2} \right)  + \left(m^{-1/\ell} \vee m^{-1/2} \right)
\end{align*}
Choosing $\epsilon_g \asymp n^{-\frac{1}{2 + \frac{\ell}{ \alpha_g}}}$, $\epsilon_e \asymp n^{-\frac{1}{ 2(\alpha_g\wedge 1) +\frac{s}{ \alpha_e}}}$ and $m$ as in the theorem statement gives us the desired result.

\textbf{Proof of part (d)}
Similarly, from Lemmas \ref{oracle_thm}, \ref{lem_4.4} and \ref{lem_4.7}, we get,
\begin{align*}
     \E V(\mu, \nu, \hG^n, \hE^n) \lesssim & \Delta + \epsilon_g + \epsilon_e^{\alpha_g \wedge 1} + \left(\log\left(\frac{1}{\epsilon_e \wedge \epsilon_g}\right)\right)^{3/2} \left(\sqrt{\frac{\epsilon_e^{-s/ \alpha_e} \log n}{n}} +  \sqrt{\frac{  \epsilon_g^{-\ell/ \alpha_g} \log n}{n}}\right) + \frac{1}{\sqrt{m}}
\end{align*}
Choosing $\epsilon_g \asymp n^{-\frac{1}{2 + \frac{\ell}{ \alpha_g}}}$, $\epsilon_e \asymp n^{-\frac{1}{ 2(\alpha_g\wedge 1) +\frac{s}{ \alpha_e}}}$ and $m$ as in the theorem statement gives us the desired result.
\end{proof}
\section{Detailed Proofs}

\subsection{Proofs from Section \ref{intrinsic}}
\lemminone*
\begin{proof}
Then, for any $x,y \in \sA$, \(\|f(x) - f(y) \|_\infty \le  \|f(x) - f(y) \|_2 \le L \|x - y\|_2^{\gamma \wedge 1} \le L d_1^{(\gamma \wedge 1)/2} \|x - y\|_\infty^{\gamma \wedge 1} .\) 
Thus, $\cN\left(\epsilon; f\left(\sA \right), \|\cdot\|_\infty\right) \le \cN\left(\frac{1}{\sqrt{d_1}}(\epsilon/L)^{(\gamma \wedge 1)^{-1}}; \sA , \|\cdot\|_\infty\right)$. 
\[\text{dim}_M \left(f\left(\sA \right) \right) = \lim_{\epsilon\to 0 } \frac{\log \cN\left(\epsilon; f\left(\sA \right), \|\cdot\|_\infty\right)}{\log (1/\epsilon)} \le \lim_{\epsilon\to 0 } \frac{\log \cN\left(\frac{1}{\sqrt{d_1}}(\epsilon/L)^{(\gamma \wedge 1)^{-1}}; \sA, \|\cdot\|_\infty\right)}{\log (1/\epsilon)} \le \frac{\mdim(\sA)}{\gamma \wedge 1}.\]
\end{proof}
\subsection{Proofs from Section \ref{sec_4_1}}
\subsubsection{Proof of Proposition \ref{prop1}}
\propoone*
    \begin{proof}
    \begin{itemize}
        \item[(a)]  Let $f: \cZ \to \Real$ be any bounded continuous function. Then, 
    \begin{align}
        \int f(x) d(\tilde{G}_\sharp\nu)(x) = & \int f(\tilde{G}(z)) d\nu(z) \nonumber\\
        & = \int f(\tilde{G}(\tilde{E}(x))) d\mu(x) \label{s_e_1}\\
        & = \int f(x) d\mu(x) \label{s_e_2}
    \end{align}
    Hence, $\tilde{G}_\sharp \nu = \mu$. Here both \eqref{s_e_1} and \eqref{s_e_2} follows from A\ref{a1}.
    \item[(b)] Let $f: \cX \to \Real$ be any bounded continuous function. Then, 
    \begin{align}
        \int f(x) d\left( (\tilde{E} \circ \tilde{G})_\sharp \nu \right) = &  \int f(E(G(z))) d\nu(z) \nonumber\\
        = & \int f(E(x)) d\mu(x) \label{s_e_3}\\
        = & \int f(z) d\nu(z). \label{s_e_4}
    \end{align}
    Here, \eqref{s_e_3} follows from part (a) and \eqref{s_e_4} follows from A\ref{a1}. 
    \item[(c)] To prove part (c), We note that, $\sW_1(E_\sharp \mu, \nu), \mmd_{\sK}^2(E_\sharp \mu, \nu) = 0$ and  $(\tilde{G} \circ \tilde{E}) (\cdot) = id(\cdot), \, \text{a.e. } [\mu]$.
    \end{itemize}
    \end{proof}
\subsubsection{Proof of Lemma \ref{oracle_thm} }
    \oraclein*
\begin{proof}
To prove the first inequality, we observe that, $V(\hmu_n, \nu, \hG^n, \hE^n) \le V(\hmu_n, \nu, G, E) + \dopt$, for any $G \in \sG$ and $E \in \sE$. Thus,
\begingroup
\allowdisplaybreaks
\begin{align*}
   &  V( \mu, \nu,\hG^n, \hE^n) \\
   = & V(\mu, \nu, G, E) + \left( V(\mu, \nu, \hG^n, \hE^n) - V(\hmu_n, \nu, \hG^n, \hE^n) \right)+ \left(V(\hmu_n, \nu, \hG^n, \hE^n ) - V(\mu, \nu G, E)\right)\\
\le & V(\mu, \nu, G, E) + \left( V(\mu, \nu,  \hG^n, \hE^n) - V(\hmu_n, \nu, \hG^n, \hE^n) \right)+ \left(V(\hmu_n, \nu, G,E) - V(\mu, \nu, G, E)\right) + \dopt\\
   \le & \dopt + V(\mu, \nu, G, E) + 2 \sup_{G \in \sG, \, E \in \sE} \left|V(\hmu_n, \nu, G, E ) - V(\mu, \nu, G, E)\right|\\
    = & \dopt + V( \mu, \nu, G, E) \\
    & + 2 \sup_{G \in \sG, \, E \in \sE} \left|\int c(x, G \circ E(x) ) d\hmu_n(x) + \lambda \widehat{\text{diss}}(E_\sharp \hmu_n, \nu) - \int c(x, G \circ E(x) ) d\mu(x) - \lambda \text{diss}(E_\sharp \mu, \nu)\right|\\
    \le & \dopt + V(\mu, \nu, G,E) + 2 \|\hmu_n - \mu\|_\sF + 2 \lambda \sup_{E \in \sE} |\widehat{\text{diss}}(E_\sharp \hmu_n, \nu) - \text{diss}(E_\sharp \mu, \nu)|.
\end{align*}
\endgroup
Taking infimum on $G$ and $E$, we get inequality \eqref{oracle1}. Inequality \eqref{oracle2} follows from a similar derivation. 
 \end{proof}
     \subsection{Proofs from Section~\ref{sec_mis}}
 \subsubsection{Proof of Theorem \ref{approx}}
 Fix $\epsilon>0$. For $s> d_\mu$, we can say that $\cN(\epsilon;\sM, \ell_\infty) \le C \epsilon^{-s}$, for all $\epsilon>0$. Let $K = \lceil \frac{1}{2\epsilon}\rceil$. For any $\bi \in [K]^d$, let $\btheta^{\bi} = ( \epsilon + 2 (i_1-1) \epsilon, \dots,  \epsilon + 2(i_d-1) \epsilon )$. We also let, $\sP_\epsilon = \{B_{\ell_\infty}(\btheta^{\bi}, \epsilon): \bi \in [K]^d\}$. By construction, the sets in $\sP_\epsilon$ are disjoint. We first claim the following: 
\begin{lem}\label{lem_b2}
$|\{A \in \sP_\epsilon: A \cap \sM \neq \emptyset\} | \le C2^d \epsilon^{-s} $.
\end{lem}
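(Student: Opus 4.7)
The plan is to bound the count by relating it to the $\epsilon$-cover of $\sM$ guaranteed by the Minkowski dimension hypothesis, and then showing that each cover element can ``account for'' at most $2^d$ cubes in $\sP_\epsilon$. The hypothesis $s > d_\gamma$, together with boundedness of $\sM$, gives $\cN(\epsilon; \sM, \ell_\infty) \le C\epsilon^{-s}$, which is the key starting point.

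First, I would fix an $\epsilon$-cover $\{z_j\}_{j=1}^N \subseteq \mathbb{R}^d$ of $\sM$ in the $\ell_\infty$-metric with $N \le C\epsilon^{-s}$, so that $\sM \subseteq \bigcup_{j=1}^N B_{\ell_\infty}(z_j, \epsilon)$. For each cube $A = B_{\ell_\infty}(\btheta^{\bi}, \epsilon) \in \sP_\epsilon$ with $A \cap \sM \neq \emptyset$, pick an arbitrary witness point $m_A \in A \cap \sM$. Because the cover balls exhaust $\sM$, there is some $j(A) \in [N]$ with $m_A \in B_{\ell_\infty}(z_{j(A)}, \epsilon)$. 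The triangle inequality then gives $\|\btheta^{\bi(A)} - z_{j(A)}\|_\infty \le \|\btheta^{\bi(A)} - m_A\|_\infty + \|m_A - z_{j(A)}\|_\infty < 2\epsilon$, so each grid center of an intersecting cube lies in an open $\ell_\infty$-ball of radius $2\epsilon$ about some cover point.

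Second, I would count how many grid centers $\btheta^{\bi}$ can satisfy $\|\btheta^{\bi} - z\|_\infty < 2\epsilon$ for a fixed $z$. Each coordinate of $\btheta^{\bi}$ is of the form $(2i_r - 1)\epsilon$, producing an arithmetic progression with spacing $2\epsilon$. Any open interval of length $4\epsilon$ contains at most two terms of such a progression, so at most $2^d$ grid centers satisfy the constraint. Therefore the map $A \mapsto j(A)$ is at most $2^d$-to-one, which yields
\[ |\{A \in \sP_\epsilon : A \cap \sM \neq \emptyset\}| \le 2^d N \le C \, 2^d \epsilon^{-s}. \]

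This proof is essentially a packing/covering bookkeeping and I do not anticipate a real obstacle; the only delicate step is the grid-point counting, where it matters that the balls in the covering number definition are open (otherwise an analogous argument would only yield $3^d$ in place of $2^d$). Since the excerpt explicitly defines $B_\varrho(x,\epsilon) = \{y : \varrho(x,y) < \epsilon\}$, the strict inequality goes through and the constant $2^d$ is correct.
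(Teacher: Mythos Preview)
Your proof is correct and follows essentially the same route as the paper's: take an optimal $\epsilon$-cover of $\sM$ of size $r \le C\epsilon^{-s}$, and argue that each cover ball can account for at most $2^d$ grid cubes in $\sP_\epsilon$. The paper phrases the last step as ``each ball in the cover intersects at most $2^d$ boxes in $\sP_\epsilon$'' without further justification, whereas you route through a witness point and the grid-center count; your arithmetic-progression argument (at most two centers per coordinate in an open interval of length $4\epsilon$) is exactly the missing justification for that claim, so the two proofs are effectively identical.
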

\begin{proof}
Let, $r =  \cN(\epsilon; \sM, \ell_\infty)$ and suppose that $\{\ba_1,\dots, \ba_r\}$ be an $\epsilon$-net of $\sM$ and  $\sP_\epsilon^\ast = \{ B_{\ell_\infty}(\ba_i, \epsilon): i \in [r]\}$ be an optimal $\epsilon$-cover of $\sM$. Note that each box in $\sP_\epsilon^\ast$ can intersect at most $2^d$ boxes in $\sP_\epsilon$. This implies that,
\[|\sP_\epsilon \cap \sM| \le \left| \sP_\epsilon \cap \left(\cup_{i=1}^r B_{\ell_\infty}(\ba_i, \epsilon)\right)\right| = \left| \cup_{i=1}^r \left(\sP_\epsilon \cap B_{\ell_\infty}(\ba_i, \epsilon)\right)\right|  \le 2^d r,\]
which concludes the proof. 
\end{proof}
\begin{figure}
    \centering
    \begin{tikzpicture}[scale=3]
\draw (-1.2,0) -- (1.2,0);
\draw (-0,-0.2) -- (0,1.2);
\draw (-1,0) -- (-0.65,1);
\draw (-0.65,1) -- (0.65,1);
\draw (0.65,1) -- (1,0);
\filldraw [gray] (1,0) circle (1pt);
\filldraw [gray] (-1,0) circle (1pt);
\filldraw [gray] (0.65,1) circle (1pt);
\filldraw [gray] (-0.65,1) circle (1pt);

\draw [dashed] (0.65,0) -- (0.65,1);
\draw [dashed] (-0.65,0) -- (-0.65,1);
\node[] at (0.65,-0.1) {$a-b$};
\node[] at (-0.65,-0.1) {$-a+b$};
\node[] at (1,-0.1) {$a$};
\node[] at (-1,-0.1) {$-b$};
\node[] at (0.1,-0.1) {$0$};
\node[] at (0.1,1.1) {$1$};
\end{tikzpicture}
    \caption{Plot of $\xi_{a, b}(\cdot)$}
    \label{fig:xi}
\end{figure}
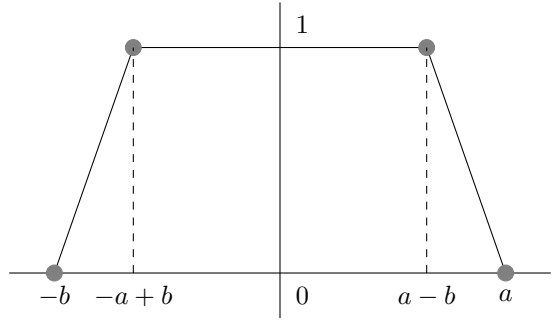

We are now ready to prove Theorem \ref{approx}. For the ease of readability, we restate the theorem as follows:
\approxthm*

\begin{proof}

 We also let $\cI = \left\{ \bi \in [K]^d :  B_{\ell_\infty}(\btheta^{\bi}, \epsilon) \cap \sM \neq \emptyset\right\}$. We also let $\cI^\dagger = \{\bj \in  [K]^d: \min_{\bi \in \cI} \|\bi - \bj\|_1 \le 1\}$. We know that $|\cI^\dagger| \le 3^d |\cI| \le 6^d N(\epsilon;\sM,\ell_\infty)$. For $0 < b \le a$, let,

\[\xi_{a, b}(x) = \relu\left(\frac{x+a}{a-b}\right) - \relu\left(\frac{x+b}{a-b}\right) - \relu\left(\frac{x-b}{a-b}\right) + \relu\left(\frac{x-a}{a-b}\right).\]
 A pictorial view of this function is given in Fig. \ref{fig:xi} and can be implemented by a ReLU  network of depth two and width four. Thus, $\cL(\xi_{a, b}) = 2$ and $\cW(\xi_{a, b}) = 12$. Suppose that $0< \delta < \epsilon/3$ and let, $\zeta(\bx) = \prod_{\ell=1}^d \xi_{\epsilon+\delta, \delta}(x_\ell)$. It is easy to observe that $\{\zeta(\cdot - \btheta^{\bi}): \bi \in \cI^\dagger\}$ forms a partition of unity on $\sM$, i.e. $\sum_{\bi \in \cI^\dagger} \zeta(\bx - \btheta^{\bi}) = 1, \forall \bx \in \sM$.

We consider the Taylor approximation of $f$ around $\btheta$ as,
\[P_{\btheta}(\bx) = \sum_{|\bs| < \lfloor \beta \rfloor} \frac{\partial^{\bs} f(\btheta)}{ \bs !} \left(\bx - \btheta\right)^{\bs} .\]
Note that for any $\bx \in [0,1]^d$, $f(\bx) - P_{\btheta}(\bx) = \sum_{\bs: |\bs| = \lfloor \beta \rfloor} \frac{(\bx - \btheta)^{\bs}}{\bs !}(\partial^{\bs} f(\by) - \partial^{\bs} f(\btheta))$, for some $\by$, which is a convex combination of $\bx$ and $\btheta$. Thus,
\begingroup
\allowdisplaybreaks
\begin{align}
    f(\bx) - P_{\btheta}(\bx)   = \sum_{\bs: |\bs| = \lfloor \beta \rfloor} \frac{(\bx - \btheta)^{\bs}}{\bs !}(\partial^{\bs} f(\by) - \partial^{\bs} f(\btheta)) \le & \|\bx - \btheta\|_\infty^{\lfloor \beta \rfloor} \sum_{\bs: |\bs| = \lfloor \beta \rfloor} \frac{1}{\bs !}|\partial^{\bs} f(\by) - \partial^{\bs} f(\btheta)| \nonumber \\
    \le & \|\bx - \btheta\|_\infty^{\lfloor \beta \rfloor} \|\by - \btheta\|_\infty^{\beta - \lfloor \beta \rfloor} \nonumber\\
    \le & \|\bx - \btheta\|_\infty^\beta \label{apr6_1}.
\end{align}
\endgroup

Next we define $\tilde{f}(\bx) = \sum_{\bi \in \cI^\dagger} \zeta(\bx - \btheta^{\bi}) P_{\btheta^{\bi}}(\bx)$.  Thus, if $\bx \in \sM$,
\begin{align}
    |f(\bx) - \tilde{f}(\bx)| =  \left| \sum_{\bi \in \cI^\dagger} \zeta(\bx - \btheta^{\bi}) (f(\bx) - P_{\btheta^{\bi}}(\bx))\right| \le &  \sum_{\bi \in \cI^\dagger: \|\bx - \btheta^{\bi}\|_\infty 
    \le  2 \epsilon }|f(\bx) -  P_{\btheta^{\bi}}(\bx)| \nonumber\\
    \le & 2^d (2\epsilon)^\beta \nonumber\\
    = & 2^{d + \beta} \epsilon^\beta.\label{apr6_2}
\end{align}
We note that,  $\tilde{f}(\bx) = \sum_{\bi \in \cI^\dagger} \zeta(\bx - \btheta^{\bi}) P_{\btheta^{\bi}}(\bx)  = \sum_{\bi \in \cI^\dagger} \sum_{|\bs| < \lfloor \beta \rfloor}  \frac{\partial^{\bs} f(\btheta^{\bi})}{ \bs !}  \zeta(\bx - \btheta^{\bi})  \left(\bx - \btheta^{\bi}\right)^{\bs}$.  Let $a_{\bi, \bs} =  \frac{\partial^{\bs} f(\btheta^{\bi})}{ \bs !}$ and 
\begin{align*}
    \hat{f}_{\bi, \bs}(\bx) =  \text{prod}_m^{(d+|\bs|)}(&\xi_{\epsilon_1, \delta_1}(x_1 - \theta_1^{\bi}), \dots, \xi_{\epsilon_d, \delta_d}(x_d - \theta_d^{\bi}), \underbrace{(x_1-\theta_1^{\bi}), \dots, (x_1-\theta_1^{\bi})}_{\text{$s_1$ times}}, \\
    & \dots, \underbrace{(x_1-\theta_d^{\bi}), \dots ,(x_d -\theta_d^{\bi})}_{\text{$s_d$ times}}),
\end{align*}
where, $\operatorname{prod}(\cdot)$ is defined in Lemma~\ref{lem:b3}. Here, $\text{prod}_m^{(d+|\bs|)}$ has at most $d + |\bs| \le d + \lfloor \beta \rfloor$ many inputs. By Lemma~\ref{lem:b3}, $\text{prod}_m^{(d+|\bs|)}$ can be implemented by a ReLU network with $\cL(\text{prod}_m^{(d+|\bs|)}), \cW(\text{prod}_m^{(d+|\bs|)}) \le c_3 m$. Thus, $\cL(\hat{f}_{\bi, \bs}) \le c_3 m + 2$ and $\cW(\hat{f}_{\bi, \bs}) \le c_3 m + 8 d + 4 |s| \le c_3 m+ 8 d + 4 k $. With this $\hat{f}_{\bi, \bs}$, we observe that, 
\begin{align}
    \left|\hat{f}_{\bi, \bs}(\bx) - \zeta(\bx - \btheta^{\bi})  \left(\bx - \btheta^{\bi}\right)^{\bs}\right| \le \frac{(d+ \lfloor \beta \rfloor)^3}{2^{2m + 2}}, \, \forall \bx \in \sM.
\end{align}
Finally, let, $\hat{f}(\bx) =  \sum_{\bi \in \cI^\dagger} \sum_{|\bs| \le \lfloor \beta \rfloor}  a_{\bi, \bs}  \hat{f}_{\bi, \bs}(\bx)$. Clearly, $\cL(\hat{f}_{\bi, \bs}) \le c_3 m + 3$ and $\cW(\hat{f}_{\bi, \bs}) \le k^d \left(c_3 m + 8 d + 4 k \right)$. This implies that, 
\begin{align*}
    |\hat{f}(\bx) - \tilde{f}(\bx)| \le & \sum_{\bi \in \cI^\dagger: \|\bx - \btheta^{\bi}\|_\infty \le 2 \epsilon } \sum_{|\bs| < \lfloor \beta \rfloor}  |a_{\bi, \bs} | \zeta(\bx - \btheta^{\bi})  |\hat{f}_{\bi \bs}(\bx) -  \left(\bx - \btheta^{\bi}\right)^{\bs}| \\
    \le & 2^d  \sum_{|\bs| < \lfloor \beta \rfloor}  |a_{\btheta, \bs} | \left|\hat{f}_{\btheta^{\bi(\bx)}, \bs}(\bx) - \zeta_{\bepsilon, \bdelta}(\bx - \btheta^{(\bi(\bx)})  \left(\bx - \btheta^{\bi(\bx)}\right)^{\bs}\right| \\
    \le & \frac{ (d+\lfloor \beta \rfloor)^3C}{2^{2m + 2-d}}.
\end{align*}
 
We thus get that if $\bx \in \sM$,
\begin{align}
    |f(\bx) - \hat{f}(\bx)| \le & |f(\bx) - \tilde{f}(\bx)| + |\hat{f}(\bx) - \tilde{f}(\bx)| \le  2^{d +\beta} \epsilon^\beta +  \frac{ (d+\lfloor \beta \rfloor)^3C}{2^{2m + 2-d}}.
\end{align}

We choose $\epsilon =  \left(\frac{\eta}{2^{d + k +2}}\right)^{1/\beta} $ and $ m = \left\lceil \log_2\left(\frac{(d+k)^3 C}{\eta} \right)\right\rceil + d-1$. Then, 
\begin{align*}
    \|f - \hat{f}\|_{\fL_\infty(\sM)}
     \le &   \eta.
\end{align*}
We note that $\hat{f}$ has $|\cI^\dagger| \le 6^d N_\epsilon(\sM) \lesssim 6^d \epsilon^{-s} $ many networks with depth $c_3 m + 3$ and number of weights $ \lfloor \beta \rfloor^d \left(c_3 m + 8 d + 4 \lfloor \beta \rfloor \right)$. Thus, 
$\cL(\hat{f}) \le c_3 m + 4$ and $\cW(\hat{f}) \le  \epsilon^{-s} (6\lfloor \beta \rfloor)^d \left(c_3 m + 8 d + 4 \lfloor \beta \rfloor\right)$. we thus get,
\begin{align*}
    \cL(\hat{f}) \le c_3 m + 4 \le c_3 \left(\left\lceil \log_2\left(\frac{(d+\lfloor \beta \rfloor)^3 C_\delta}{\eta} \right)\right\rceil+ d-1\right) + 4 \le c_4 \log\left(\frac{1}{\eta} \right),
\end{align*}
where $c_4$ is a function of $\delta$, $\lfloor \beta \rfloor$ and $d$. Similarly,
\begin{align*}
    \cW(\hat{f}) \le & \epsilon^{-s} (6\lfloor \beta \rfloor)^d \left(c_3 m + 8 d + 4 \lfloor \beta \rfloor \right)\\
    \le &  \left(\frac{\eta}{2^{d + k +2}}\right)^{-s/\beta} (6\lfloor \beta \rfloor)^d \left(c_3 \left(\log_2\left(\frac{(d+\lfloor \beta \rfloor)^3 C_\delta}{\eta} \right) + d-1\right) + 8 d + 4 \lfloor \beta \rfloor \right)\\
    \le & c_6 \log(1/\eta)  \eta^{-s/\beta}.
\end{align*}
Taking $\alpha = c_4 \vee c_6$ gives the result.
\end{proof}
\subsubsection{Proof of Lemma \ref{lem_4.2}}
\lemten*

     \begin{proof}
     We first prove the result for the Wasserstein-1 distance and then for the $\mmd_{\sK}$-metric.
     \paragraph{Case 1: $\operatorname{diss}(\cdot , \cdot) \equiv \sW_1(\cdot, \cdot)$}

    For any $G \in \sG$ and $E \in \sE$, we observe that,
    \begingroup
    \allowdisplaybreaks
\begin{align*}
    V(\mu, \nu,G, E) \le & V(\mu, \nu,\tilde{G}, \tilde{E}) + |V(\mu, \nu,G, E) -  V(\mu, \nu, \tilde{G}, \tilde{E})|\\
    \le & \|c(\cdot,\tilde{G} \circ \tilde{E}(\cdot)) - c(\cdot, G \circ E(\cdot))\|_{\fL_\infty(\sM)} + |\sW_1(E_\sharp \mu, \nu) - \sW_1(\tilde{E}_\sharp \mu, \nu )|\\
    \lesssim & \|G \circ E - \tilde{G} \circ \tilde{E}\|_{\fL_\infty(\sM)} + \sW_1(\tilde{E}_\sharp \mu, \E_\sharp \mu )\\
    \lesssim & \|G \circ E - \tilde{G} \circ \tilde{E}\|_{\fL_\infty(\supp(\mu))} + \|\tilde{E} - \E\|_{\fL_\infty(\supp(\mu))}\\
    \le & \|G \circ E - \tilde{G} \circ E\|_{\fL_\infty(\supp(\mu))} + \|\tilde{G} \circ E - \tilde{G} \circ \tilde{E}\|_{\fL_\infty(\supp(\mu))} + \|\tilde{E} - \E\|_{\fL_\infty(\supp(\mu))}\\
    \lesssim & \|G - \tilde{G}\|_{\fL_\infty([0,1]^\ell)} + \| E - \tilde{E}\|_{\fL_\infty(\supp(\mu))}^{\alpha_g \wedge 1} + \|\tilde{E} - \E\|_{\fL_\infty(\supp(\mu))}
\end{align*}
\endgroup
We can take $\sG = \cR \cN (\log(1/\epsilon_g), \epsilon_g^{-\ell/\alpha_g} \log(1/\epsilon_g))$ and $\sE = \cR \cN (\log(1/\epsilon_e), \epsilon_e^{-s/\alpha_e} \log(1/\epsilon_e))$ by approximating in each of the individual coordinate-wise output of the vector-valued functions $\tilde{G}$ and $\tilde{E}$ and stacking them parallelly. This makes, 
\begin{align*}
    V(\mu,\nu,G, E) \lesssim \epsilon_g + \epsilon_e^{\alpha_g \wedge 1} + \epsilon_e \lesssim \epsilon_g + \epsilon_e^{\alpha_g \wedge 1}.
\end{align*}
\paragraph{Case 2: $\operatorname{diss}(\cdot , \cdot) \equiv \operatorname{MMD}_k^2(\cdot, \cdot)$}

Before we begin, we note that,
\begingroup
\allowdisplaybreaks
\begin{align}
    &|\mmd_{\sK}^2(E_\sharp \mu, \nu) - \mmd_{\sK}^2 (\tilde{E}_\sharp \mu, \nu)| \nonumber \\
    \le & |\E_{X \sim \mu, X^\prime \sim \mu} \sK(E(X), E(X^\prime)) - \E_{X \sim \mu, X^\prime \sim \mu} \sK(\tilde{E}(X), \tilde{E}(X^\prime)) | \nonumber\\
    & + 2 |\E_{X \sim \mu, Z \sim \nu } \sK(E(X),Z) - \E_{X \sim \mu, Z \sim \nu } \sK(\tilde{E}(X),Z)|\nonumber\\
     \le & 2 \tau_k \| E - \tilde{E}\|_{{\fL}_\infty(\supp(\mu))} + 2 \tau_k \| E - \tilde{E}\|_{{\fL}_\infty(\supp(\mu))}\nonumber\\
     = & 4 \tau_k \| E - \tilde{E}\|_{{\fL}_\infty(\supp(\mu))} \label{e2}.
\end{align}
\endgroup
 For any $G \in \sG$ and $E \in \sE$, we observe that,
 \begingroup
 \allowdisplaybreaks
\begin{align}
    V(\mu, \nu, G, E) = & V(\mu, \nu, \tilde{G}, \tilde{E}) + |V(\mu, \nu, G, E) -  V( \mu, \nu, \tilde{G}, \tilde{E})| \nonumber\\
    = & \|c(\cdot,\tilde{G} \circ \tilde{E}(\cdot)) - c(\cdot, G \circ E(\cdot))\|_{\fL_\infty(\supp(\mu))} + |\mmd^2_{\sK}(E_\sharp \mu, \nu) - \mmd_{\sK}^2(\tilde{E}_\sharp \mu, \nu )| \nonumber \\
    \lesssim & \|G \circ E - \tilde{G} \circ \tilde{E}\|_{\fL_\infty(\mu)} + 4 \tau_k \| E - \tilde{E}\|_{{\fL}_\infty(\supp(\mu))} \label{e14}\\
    \lesssim & \|G \circ E - \tilde{G} \circ \tilde{E}\|_{\fL_\infty(\supp(\mu))} + \|\tilde{E} - \E\|_{\fL_\infty(\supp(\mu))} \nonumber\\
    \le & \|G \circ E - \tilde{G} \circ E\|_{\fL_\infty(\supp(\mu))} + \|\tilde{G} \circ E - \tilde{G} \circ \tilde{E}\|_{\fL_\infty(\supp(\mu))} + \|\tilde{E} - \E\|_{\fL_\infty(\supp(\mu))} \nonumber\\
    \lesssim & \|G - \tilde{G}\|_{\fL_\infty([0,1]^\ell)} + \| E - \tilde{E}\|_{\fL_\infty(\supp(\mu))}^{\alpha_g \wedge 1} + \|\tilde{E} - \E\|_{\fL_\infty(\supp(\mu))}. \nonumber
\end{align}
\endgroup
In the above calculations, we have used \eqref{e2} to arrive at \eqref{e14}. As before, we take $\sG = \cR \cN (\log(1/\epsilon_g), \epsilon_g^{-\ell/\alpha_g} \log(1/\epsilon_g))$ and $\sE = \cR \cN (\log(1/\epsilon_e), \epsilon_e^{-s/\alpha_e} \log(1/\epsilon_e))$ by approximating in each of the individual coordinate-wise output of the vector-valued functions $\tilde{G}$ and $\tilde{E}$ and stacking them parallelly. This makes,  
\begin{align*}
    V(\mu,\nu,G, E) \lesssim \epsilon_g + \epsilon_e^{\alpha_g \wedge 1} + \epsilon_e \lesssim \epsilon_g + \epsilon_e^{\alpha_g \wedge 1}.
\end{align*}
\end{proof}
\subsection{Proofs from Section \ref{sec_gen}}

\subsubsection{Proof of Lemma \ref{lem_apr_18}}
\pflemapr*
\begin{figure}
     \centering
\begin{tikzpicture}[
roundnode/.style={circle, draw=blue!80, fill=blue!5, very thick, minimum size=10mm},
ynode/.style={circle, draw=yellow!80, fill=yellow!5, very thick, minimum size=10mm},
squarednode/.style={rectangle, draw=red!60, fill=red!5, very thick, minimum size=10mm},
]
\node[roundnode]      (x)          {$x$};
\node[squarednode] (f) [right = of x] {$f$};
\node[roundnode]      (y1) [below = of x]          {$y$};
\node[squarednode] (id) [below = of f] {$id$};
\node[roundnode] (fx) [right = of f] {$f(x)$};
\node[roundnode] (y) [right = of id] {$y$};
\node[ynode] (plus) [right = of fx] {$f(x) + y$};
\node[ynode] (minus) [right = of y] {$f(x) - y$};
\node[roundnode] (h) [above right = 0.25 cm and 4cm of y] {$h(x,y)$};
\begin{scope}[very thick,decoration={
    markings,
    mark=at position 0.5 with {\arrow{>}}}
    ] 
    \draw[postaction={decorate}] (x.east) --  (f.west) ;
\draw[postaction={decorate}] (f.east) -- (fx.west);
\draw[postaction={decorate}] (y1.east) -- (id.west);
\draw[postaction={decorate}] (id.east) -- (y.west);
\draw[magenta,very thick, postaction={decorate}] (fx.east) -- (plus.west)node[midway,above]{$1$};

\draw[cyan,postaction={decorate},very thick] (y.east) -- (minus.west) node[midway,below]{$-1$};

\draw[orange,very thick, postaction={decorate}] (plus.east) -- (h.west)node[midway,above]{$0.25$};
\draw[teal,very thick, postaction={decorate}] (minus.east) -- (h.west)node[midway,below]{$-0.25$};
\end{scope}

\begin{scope}[very thick,decoration={
    markings,
    mark=at position 0.75 with {\arrow{>}}}
    ] 
    \draw[magenta,postaction={decorate},very thick] (y.east) -- (plus.west)node[below]{$1$};
\draw[magenta,very thick, postaction={decorate}] (fx.east) -- (minus.west)node[above]{$1$};

\end{scope}

\end{tikzpicture}
\caption{A representation of the network $h(\cdot, \cdot)$. The \textcolor{magenta}{magenta} lines represent $d^\prime$ weights of value $1$. Similarly, \textcolor{cyan}{cyan} lines represent $d^\prime$ weights of value $-1$. Finally, the \textcolor{orange}{orange} and \textcolor{teal}{teal} lines represent $d^\prime$ weights (each) with values $+0.25$ and $-0.25$, respectively. The identity map takes $2 d^\prime \cL(f)$ many weights (see remark 15 (iv) of \cite{JMLR:v21:20-002}). The \textcolor{magenta}{magenta}, \textcolor{cyan}{cyan}, \textcolor{orange}{orange} and \textcolor{teal}{teal} connections take $6d^\prime$ many weights. All activations are taken to be ReLU, except the output of the yellow nodes, whose activation is $\sigma(x) = x^2$.  }
\label{fig:network}
\end{figure}
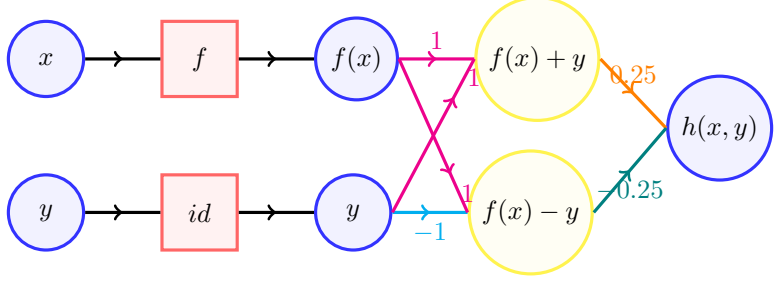
\begin{proof}
       We let, $h(x,y) = y^\top f(x)$ and let $\cH = \{h(x,y) = y^\top f(x) : f \in \cF\}$. Also, let, $\cT = \{(h(X_i, e_\ell)|_{i \in [n], \ell \in [d^\prime]}) \in \Real^{n d^\prime} : h \in \cH\}$. Here $e_\ell$ denotes the $\ell$-th unit vector. By construction of $\cT$, it is clear that, $\cN (\epsilon; \cF_{|_{X_{1:n}}}, \ell_\infty) = \cN(\epsilon; \cT, \ell_\infty)$. We observe that, 
\[h(x,y) = \frac{1}{4}(\|y + f(x)\|_2^2 - \|y - f(x)\|_2^2)\]
Clearly, $h$ can be implemented by a network with $\cL(h) = \cL(f) + 3$ and $\cW(h) = \cW(f) + 6d^\prime + 2d^\prime \cL(f)$ (see Fig. \ref{fig:network} for such a construction). Thus, from Theorem 12.9 of \cite{anthony1999neural} (see Lemma~\ref{lem_anthony_bartlett}), we note that, if $n \ge \operatorname{Pdim}(\cH)$,
\begin{align*}
    \cN(\epsilon; \cT, \ell_\infty) \le & \left(\frac{2en d^\prime}{\epsilon \pdim(\cH)}\right)^{\pdim(\cH)},
\end{align*}
with, \[\pdim(\cH) \lesssim \cW(h) \cL(h) \log \cW(h) + \cW(h) \cL^2(h),\]
from applying Theorem 6 of \cite{bartlett2019nearly} (see Lemma~\ref{lem_bartlett}). This implies that, 
\begin{align*}
    \log \cN(\epsilon; \cH, \ell_\infty) \le &  \pdim(\cH) \log\left(\frac{2en d^\prime}{\epsilon \pdim(\cH)}\right) \\
    \le & \pdim(\cH) \log\left(\frac{n d^\prime}{\epsilon }\right)\\
     \lesssim & \left(\cW(h) \cL(h) \log \cW(h) + \cW(h) \cL^2(h)\right) \log\left(\frac{n d^\prime}{\epsilon }\right).
\end{align*}
Plugging in the values of $\cW(h)$ and $\cL(h)$  yields the result.

 \end{proof}

 \subsubsection{Proof of Corollary~\ref{lem:cov}}
 \lemcov*
 \begin{proof}
     The proof easily follows from applying Lemma~\ref{lem_apr_18} and noting the sizes of the networks in $\sE$ and $\sG \circ \sE$.
 \end{proof}
\subsubsection{Proof of Lemma \ref{lem_4.4}}
\lemthirteen*
\begin{proof}
Let $\cR(\sG) \le B$, for some $B>0$ and let $B_c = \sup_{0 \le x \le B} |c(x)|$ From Dudley's chaining \citep[Theorem 5.22]{wainwright2019high},
    \begin{align}
        \E \|\hmu_n - \mu\|_{\sF} \lesssim & \E_{X_{1:n}} \inf_{0 \le \delta \le B_c/2} \left(\delta + \frac{1}{\sqrt{n}}\int_{\delta}^{B_c/2} \sqrt{ \log \cN(\epsilon, \sF_{|_{X_{1:n}}}, \ell_\infty)} d\epsilon\right). \label{e6}
    \end{align}
    Let For any $G \in \sG$ and $E \in \sE$, we can find $v \in \cC(\epsilon; (\sG \circ \sE)_{|_{X_{1:n}}}, \ell_\infty)$, such that, $ \| (G \circ E)_{|_{X_{1:n}}} - v\|_\infty \le \epsilon $. This implies that $ \| (G \circ E)(X_i) - v_i\|_\infty \le \epsilon$, for all $i \in [n]$. Let, $\cA = \{(c(X_1, v_1), \dots, c(X_n, v_n)): v \in (\sG \circ \sE)_{|_{X_{1:n}}}\}$. Thus, For any $G \in \sG,\, E \in \sE$,
    \begin{align*}
        \max_{1 \le i \le n}|c(X_i, G \circ E (X_i)) - c(X_i, v_i)| \le & \tau_c \max_{1 \le i \le n} \| (G \circ E)(X_i) - v_i\|_\infty \le \tau_c \epsilon.
    \end{align*}
    Thus, $\cA$ constitutes a $\tau_c \epsilon$-cover of $\sF_{|_{X_{1:n}}}$. Hence, \[\cN(\epsilon, \sF_{|_{X_{1:n}}}, \ell_\infty) \le \cN(\epsilon/\tau_c, (\cG \circ \sE)_{|_{X_{1:n}}}, \ell_\infty) \le  (W_e+W_g) (L_e + L_g)\left( \log (W_e+W_g) +  L_e + L_g\right) \log \left(\frac{ \tau_c nd}{\epsilon }\right). \]
    Here, the last inequality follows from Lemma \ref{lem_apr_18}. Plugging in the above bound in equation \eqref{e6}, we get,
    \begingroup
    \allowdisplaybreaks
    \begin{align*}
        \E \|\hmu_n - \mu\|_\sF \lesssim & \E_{X_{1:n}} \inf_{0 \le \delta \le B_c/2} \left(\delta + \frac{1}{\sqrt{n}}\int_{\delta}^{B_c/2} \sqrt{ \log \cN(\epsilon, \sF_{|_{X_{1:n}}}, \ell_\infty)} d\epsilon\right)\\
        \le & \sqrt{\frac{ (W_e+W_g) (L_e + L_g) \log (W_e+W_g)}{n}} \int_0^{B_c}  \sqrt{\log \left(\frac{ \tau_c nd}{\epsilon }\right)} d\epsilon\\
        \lesssim & \sqrt{\frac{ (W_e+W_g) (L_e + L_g) \left( \log (W_e+W_g) +  L_e + L_g\right) \log (nd)}{n}}.
    \end{align*}
    \endgroup
\end{proof}
\subsubsection{Proof of Lemma \ref{lem_w1}}
\lemfourteen*
\begin{proof}

    Note that if $\text{diss}(\cdot, \cdot) = \sW_1(\cdot, \cdot)$, then  
    \begin{align*}
        \sup_{E \in \sE} |\widehat{\text{diss}}(E_\sharp \mu, \nu) - \text{diss}(E_\sharp \mu, \nu)|  = \sup_{E \in \sE} |\sW_1(E_\sharp \hmu_n, \nu) - \sW_1(E_\sharp \mu, \nu)|  \le \sup_{E \in \sE} \sW_1(E_\sharp \hmu_n, E_\sharp \mu)
    \end{align*}
    We note that,
\begin{align*}
     \sup_{E \in \sE} W(E_\sharp \hmu, E_\sharp \mu) =  \sup_{E \in \sE} \sup_{f: \|f\|_{\text{Lip}} \le 1} \E_{X \sim \mu, \, \hX \sim \hmu} f(E(X)) - f(E(\hX)) 
\end{align*}

We take $\cF_1 = \{f: [0,1]^\ell \to \Real: \|f\|_{\text{Lip}} \le 1\} = \sH^1(\sqrt{\ell})$. By the result of \cite{kolmogorov1961} (Lemma \ref{kt_og}), we note that $\log \cN(\epsilon; \cF_1, \ell_\infty) \lesssim \epsilon^{-\ell}$. Furthermore, if we take $\cF_2 = \sE$, we observe that, $\log \cN(\epsilon; \sE_{|_{X_{1:n}}}, \ell_\infty) \lesssim  W_e L_e (\log W_e  + L_e) \log \left(\frac{n\ell}{\epsilon }\right)$ from Lemma \ref{lem:cov}. From Dudley's chaining, we observe the following:
\begingroup
\allowdisplaybreaks
\begin{align*}
 & \E \sup_{E \in \sE} W(E_\sharp \hmu, E_\sharp \mu) \\
 = &  \E \sup_{E \in \sE} \sup_{f: \|f\|_{\text{Lip}} \le 1} \E_{X \sim \mu, \, \hX \sim \hmu} f(E(X)) - f(E(\hX))\\
  = & \E \|\hmu - \mu\|_{\cF_1 \circ \sE}\\
  \lesssim & \E \inf_{0 \le \delta \le R_e} \left(\delta + \frac{1}{\sqrt{n}} \int_{\delta}^{R_e} \sqrt{\log \cN \left(\epsilon; (\cF_1 \circ \sE)_{|_{X_{1:n}}}, \ell_\infty\right)} d\epsilon\right)\\
  \le & \E \inf_{0 \le \delta \le R_e} \left(\delta + \frac{1}{\sqrt{n}} \int_{\delta}^{R_e} \sqrt{\log \cN \left(\epsilon/2; \cF_1, \ell_\infty\right) + \log \cN \left(\epsilon/2; \sE_{|_{X_{1:n}}}, \ell_\infty\right) } d\epsilon\right)\\
  \lesssim & \E  \inf_{0 \le \delta \le R_e} \left(\delta + \frac{1}{\sqrt{n}} \int_{\delta}^{R_e} \left(\sqrt{\log \cN \left(\epsilon/2; \cF_1, \ell_\infty\right)} + \sqrt{\log \cN \left(\epsilon/2; \sE_{|_{X_{1:n}}}, \ell_\infty\right) }\right) d\epsilon\right)\\
  \lesssim & \E \inf_{0 \le \delta \le R_e} \left(\delta + \frac{1}{\sqrt{n}} \int_{\delta}^{R_e} \sqrt{\log \cN \left(\epsilon/2; \cF_1, \ell_\infty\right)} d\epsilon + \frac{1}{\sqrt{n}} \int_{\delta}^{R_e} \sqrt{\log \cN \left(\epsilon/2; \sE_{|_{X_{1:n}}}, \ell_\infty\right) } d\epsilon\right)\\
  \lesssim & \inf_{0 \le \delta \le R_e} \left(\delta + \frac{1}{\sqrt{n}} \int_{\delta}^{R_e} \epsilon^{-\ell/2} d\epsilon + \frac{1}{\sqrt{n}} \int_{0}^{1} \sqrt{  W_e L_e (\log W_e + L_e) \log \left(\frac{2en \ell}{\epsilon }\right) } d\epsilon\right)\\
  \lesssim & \inf_{0 \le \delta \le R_e} \left(\delta + \frac{1}{\sqrt{n}} \int_{\delta}^{R_e} \epsilon^{-\ell/2} d\epsilon \right)+ \sqrt{\frac{ W_e L_e (\log W_e + L_e) \log (n \ell)}{n}}\\
   \lesssim & \inf_{0 \le \delta \le R_e} \left(\delta + \frac{1}{\sqrt{n}} \int_{\delta}^{R_e} \epsilon^{-\ell/2} d\epsilon \right)+ \sqrt{\frac{\ell W_e L_e \log W_e \log n}{n}}\\
   \lesssim & \left(n^{-1/\ell} \vee n^{-1/2} \log n\right)+ \sqrt{\frac{ W_e L_e (\log W_e + L_e) \log (n \ell)}{n}}.
\end{align*}
\endgroup
\end{proof}

\subsubsection{Proof of Lemma \ref{lem_4.6}}
\lemfifteen*
\begin{proof}
\begingroup
\allowdisplaybreaks
\begin{align}
    & \sR((\Phi \circ \sE), X_{1:n}) \nonumber \\
    = & \frac{1}{n} \E\sup_{\phi \in \Phi, \, f \in \sE} \left|\sum_{i=1}^n \sigma_i \phi(f(X_i)) \right|\nonumber \\
    = & \frac{1}{n} \E\sup_{\phi \in \Phi, \, f \in \sE} \left|\sum_{i=1}^n \sigma_i \langle \sK(f(X_i), \cdot), \phi \rangle \right|\nonumber \\ 
    = & \frac{1}{n} \E\sup_{\phi \in \Phi, \, f \in \sE}   \left| \left\langle \sum_{i=1}^n \sigma_i \sK(f(X_i), \cdot), \phi \right\rangle \right| \nonumber \\
    \le & \frac{1}{n} \E\sup_{ f \in \sE}  \left\| \sum_{i=1}^n \sigma_i \sK(f(X_i), \cdot) \right\|_{\fH_{\sK}} \nonumber \\
= & \frac{1}{n} \E \sup_{\bv \in \cC\left(\epsilon, \sE_{|_{X_{1:n}}}, \ell_\infty\right)}\sup_{ f \in \sE}  \left\| \sum_{i=1}^n \sigma_i \left(\sK(v_i, \cdot) +  \sK(f(X_i), \cdot) - \sK(v_i, \cdot)\right) \right\|_{\fH_{\sK}} \nonumber \\   
\le & \frac{1}{n} \E \sup_{\bv \in \cC\left(\epsilon, \sE_{|_{X_{1:n}}}, \ell_\infty\right)}\sup_{ f \in \sE} \left( \left\| \sum_{i=1}^n \sigma_i \sK(v_i, \cdot)\right\|_{\fH_{\sK}}  +   \frac{1}{n} \sum_{i=1}^n \left\| \sK(f(X_i), \cdot) - \sK(v_i, \cdot) \right\|_{\fH_{\sK}}\right) \nonumber \\ 
\le & \frac{1}{n} \E \max_{\bv \in \cC\left(\epsilon, \sE_{|_{X_{1:n}}}, \ell_\infty\right)}  \left\| \sum_{i=1}^n \sigma_i \sK(v_i, \cdot)\right\|_{\fH_{\sK}}  +   \sqrt{2 \tau_k \epsilon}\label{e12}
\end{align}
\endgroup

For any $\bv \in \cC\left(\epsilon, \sE_{|_{X_{1:n}}}, \ell_\infty\right)$, let, $Y_{\bv} =   \left\| \sum_{i=1}^n \sigma_i \sK(v_i, \cdot)\right\|_{\fH_{\sK}}$ and $K_{\bv} = ((\sK(v_i,v_j)) \in \Real^{n \times n}$. It is easy to observe that, $Y_{\bv}^2 = \bsigma^\top K_{\bv} \bsigma$ and $Y_{\bv} = \| K_{\bv}^{1/2} \bsigma\|$. By Theorem 2.1 of \cite{10.1214/ECP.v18-2865}, we note that,
\[\mathbb{P} \left(\left| \| K_{\bv}^{1/2} \bsigma\| - \|K_{\bv}^{1/2}\|_{\hs}\right| > t\right) \le 2 \exp\left\{-\frac{ct^2}{\|K_{\bv}^{1/2}\|^2 }\right\} = 2 \exp\left\{-\frac{ct^2}{\|K_{\bv}\| }\right\},\]
for some universal constant $c>0$.
From Perron–Frobenius theorem, we note that, 
\[ \|K_{\bv}\| \le \max_{1 \le i \le n} \sum_{j=1}^n \sK(v_i, v_j) \le B^2 n.\]

Hence,

\[\mathbb{P} \left(\left| \| K_{\bv}^{1/2} \bsigma\| - \|K_{\bv}^{1/2}\|_{\hs}\right| > t\right) \le 2 \exp\left\{-\frac{ct^2}{ n B^2 }\right\}.\]
This implies that, 
\[\exp (\lambda ( \| K_{\bv}^{1/2} \bsigma\| - \|K_{\bv}^{1/2}\|_{\hs} )) \le \exp\left\{-\frac{c^\prime \lambda^2}{ n }\right\},\]
for some absolute constant $c^\prime$, by applying Proposition 2.5.2 of \cite{vershynin2018high}. From Theorem 2.5 of \cite{boucheron2013concentration}, we observe that, 
\[ \E \max_{\bv \in \cC\left(\epsilon, \sE_{|_{X_{1:n}}}, \ell_\infty\right)} ( \| K_{\bv}^{1/2} \bsigma \| - \|K_{\bv}^{1/2}\|_{\hs} ) \lesssim \sqrt{n \log \cN\left(\epsilon, \sE_{|_{X_{1:n}}}, \ell_\infty\right)}. \]

From equation \eqref{e12}, we observe that,
\begingroup
\allowdisplaybreaks
\begin{align}
    \sR(\Phi \circ \sE, X_{1:n})  \le & \frac{1}{n} \E \max_{\bv \in \cC\left(\epsilon, \sE_{|_{X_{1:n}}}, \ell_\infty\right)}  \left\| \sum_{i=1}^n \sigma_i \sK(v_i, \cdot)\right\|_{\fH_{\sK}}  +   \sqrt{2 \tau_k \epsilon} \nonumber \\
    = &\frac{1}{n} \E \max_{\bv \in \cC\left(\epsilon, \sE_{|_{X_{1:n}}}, \ell_\infty\right)}  ( \| K_{\bv}^{1/2} \bsigma \| - \|K_{\bv}^{1/2}\|_{\hs}  + \|K_{\bv}^{1/2}\|_{\hs} )  +   \sqrt{2 \tau_k \epsilon} \nonumber\\
    \le  &\frac{1}{n} \E \max_{\bv \in \cC\left(\epsilon, \sE_{|_{X_{1:n}}}, \ell_\infty\right)}  ( \| K_{\bv}^{1/2} \bsigma \| - \|K_{\bv}^{1/2}\|_{\hs}) \nonumber \\
    & + \frac{1}{n}\max_{\bv \in \cC\left(\epsilon, \sE_{|_{X_{1:n}}}, \ell_\infty\right)}\|K_{\bv}^{1/2}\|_{\hs}   +   \sqrt{2 \tau_k \epsilon} \nonumber\\
    \lesssim & \sqrt{ \frac{\log \cN\left(\epsilon, \sE_{|_{X_{1:n}}}, \ell_\infty\right)}{n}} + \frac{B}{\sqrt{ n }} + \sqrt{\epsilon} \nonumber\\
    \lesssim & \sqrt{ \frac{ W_e L_e \log W_e \log \left(\frac{n \ell}{\epsilon }\right) }{n}}  + \sqrt{\epsilon} \label{e13}
\end{align}
\endgroup
We take $\epsilon = \sqrt{\frac{ W_e L_e (\log W_e  + L_e) \log (n\ell) }{n} }$ makes $\sR((\Phi \circ \sE), X_{1:n}) \lesssim \sqrt{ \frac{ W_e L_e (\log W_e  + L_e)  \log (n\ell) }{n}} $.
\end{proof}

\subsubsection{Proof of Lemma~\ref{lem_4.7}}
To prove Lemma~\ref{lem_4.7}, we need some supporting results, which we sequentially state and prove as follows. The first such result, i.e. Lemma~\ref{lem25} ensures that the kernel function is Lipschitz when it is considered as a map from a real vector space to the corresponding Hilbert space. 
\begin{lem}\label{lem25}
    Suppose assumption A\ref{a2} holds. Then, $\|\sK(x, \cdot) - \sK(y, \cdot)\|_{\fH_{\sK}}^2 \le 2 \tau_k \|x-y\|_2$.
\end{lem}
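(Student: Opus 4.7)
The plan is to exploit the reproducing property of the RKHS to expand the squared norm into four kernel evaluations, and then apply the Lipschitz assumption A\ref{a2}(b) to each evaluation.

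First, I would use the reproducing property $\langle \sK(x,\cdot), \sK(y,\cdot) \rangle_{\fH_{\sK}} = \sK(x,y)$ to write
\[
\|\sK(x, \cdot) - \sK(y, \cdot)\|_{\fH_{\sK}}^2 = \sK(x,x) - \sK(x,y) - \sK(y,x) + \sK(y,y).
\]
Grouping the terms, this equals $\bigl(\sK(x,x) - \sK(x,y)\bigr) + \bigl(\sK(y,y) - \sK(y,x)\bigr)$.

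Next, I would apply the Lipschitz bound from A\ref{a2}(b) to each grouped difference:
\[
\sK(x,x) - \sK(x,y) \le \tau_k(\|x-x\|_2 + \|x-y\|_2) = \tau_k \|x-y\|_2,
\]
and similarly $\sK(y,y) - \sK(y,x) \le \tau_k \|x-y\|_2$. Summing yields the claimed bound of $2\tau_k \|x-y\|_2$.

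There is no real obstacle here; the only subtlety is that the quantity is automatically nonnegative (as a squared norm), which justifies bounding the signed sum $\sK(x,x) - \sK(x,y) + \sK(y,y) - \sK(y,x)$ by the sum of absolute values from A\ref{a2}(b) without losing the sign information. The symmetry $\sK(x,y) = \sK(y,x)$ of the reproducing kernel is used implicitly in the expansion.
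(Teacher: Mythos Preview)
Your proposal is correct and follows essentially the same approach as the paper: expand the squared RKHS norm via the reproducing property into $(\sK(x,x) - \sK(x,y)) + (\sK(y,y) - \sK(x,y))$ and bound each difference by $\tau_k\|x-y\|_2$ using A\ref{a2}(b). The paper's proof is terser but identical in substance.
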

\begin{proof}We observe the following:
\begin{align*}
    \|\sK(x, \cdot) - \sK(y, \cdot)\|_{\fH_{\sK}}^2 = &  \sK(x,x) + \sK(y,y) - 2 \sK(x,y) \\
    = &  (\sK(x,x) - \sK(x,y)) + (\sK(y,y) -  \sK(x,y)) \\
    \le &  2 \tau_k \|x-y\|_2.
\end{align*}
\end{proof}
Lemma~\ref{lem26} states that the difference between the estimated and actual squared $\mmd$-dissimilarity scales as $\cO(1/n)$ for estimates \eqref{est1} and  $\cO(1/n + 1/m)$ for estimates \eqref{est2}.
\begin{lem}\label{lem26}
     Suppose assumption A\ref{a2} holds. Then, for any $E \in \sE$,
     \begin{enumerate}
         \item[(a)] $\left|\widehat{\mmd}_{\sK}^2(E_\sharp \hmu_n, \nu) - \mmd_{\sK}^2(E_\sharp \hmu_n, \nu)\right| \le \frac{2B^2}{n}$.
         \item[(b)] $\left|\widehat{\mmd}_{\sK}^2(E_\sharp \hmu_n, \hnu_m) - \mmd_{\sK}^2(E_\sharp \hmu_n, \nu)\right| \le 2 B^2 \left(\frac{1}{n} + \frac{1}{m}\right)$.
     \end{enumerate}
\end{lem}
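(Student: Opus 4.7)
For part (a), I would note that $\widehat{\mmd}^2_{\sK}(E_\sharp\hmu_n,\nu)$ and $\mmd^2_{\sK}(E_\sharp\hmu_n,\nu)$ share the $\nu$-only term $\E_{Z,Z'\sim\nu}\sK(Z,Z')$ and the cross term $\tfrac{2}{n}\sum_i \int\sK(E(X_i),z)\,d\nu(z)$ verbatim; they differ only in how the self-interaction piece of $E_\sharp\hmu_n$ is computed (U-statistic normalization $\tfrac{1}{n(n-1)}\sum_{i\neq j}$ vs.\ V-statistic normalization $\tfrac{1}{n^2}\sum_{i,j}$). A one-line rearrangement using $\sum_{i,j}=\sum_{i\neq j}+\sum_{i=j}$ and $\tfrac{1}{n(n-1)}-\tfrac{1}{n^2}=\tfrac{1}{n^2(n-1)}$ yields
\[
\widehat{\mmd}^2_{\sK}(E_\sharp\hmu_n,\nu) - \mmd^2_{\sK}(E_\sharp\hmu_n,\nu) \;=\; \tfrac{1}{n^2(n-1)}\sum_{i\neq j}\sK(E(X_i),E(X_j)) \;-\; \tfrac{1}{n^2}\sum_i \sK(E(X_i),E(X_i)),
\]
and the bound $|\sK|\le B^2$ from A\ref{a2} then controls each of the two sums by $B^2/n$, for a total of $2B^2/n$.

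For part (b), my plan is to telescope through $\widehat{\mmd}^2_{\sK}(E_\sharp\hmu_n,\nu)$: by the triangle inequality and part (a), the problem reduces to bounding $|\widehat{\mmd}^2_{\sK}(E_\sharp\hmu_n,\hnu_m) - \widehat{\mmd}^2_{\sK}(E_\sharp\hmu_n,\nu)|$ by $2B^2/m$. Expanding both estimator formulas, the $\hmu_n$-self-interaction terms cancel exactly, leaving
\[
\Bigl|\Bigl[\tfrac{1}{m(m-1)}\sum_{i\neq j}\sK(Z_i,Z_j) - \E_{\nu\times\nu}\sK\Bigr] - \tfrac{2}{n}\sum_i \Bigl[\tfrac{1}{m}\sum_j\sK(E(X_i),Z_j) - \int\sK(E(X_i),z)\,d\nu(z)\Bigr]\Bigr|,
\]
i.e., exactly the deviation of the $Z$-sample empirical averages from their population counterparts.

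\textbf{Main obstacle.} This residual is \emph{genuinely stochastic} and does not admit a deterministic $O(1/m)$ bound; the only uniform control available from $|\sK|\le B^2$ is an $O(1)$ constant. To match the stated $2B^2/m$ contribution, I would read the inequality in expectation over the $Z_j$'s (conditional on $X_{1:n}$): each centered empirical average has mean zero, so the conditional $L^2$ norm is a classical degree-$2$ U-statistic variance of order $1/m$, delivering the $2B^2/m$ bound in the \emph{second-moment} sense. In $L^1$, however, Cauchy--Schwarz only yields $O(1/\sqrt{m})$. Since the downstream application in Lemma~\ref{lem_4.7}(b) carries a $1/\sqrt{m}$ term anyway (consistent with the standard MMD concentration $\E\,\mmd^2(\hnu_m,\nu)\le B^2/m$ and its Cauchy--Schwarz consequence $\E\,\mmd(\hnu_m,\nu)\le B/\sqrt{m}$), I suspect the $1/m$ in the statement is either intended as this variance-scale bound or is a mild overstatement that does not affect the main theorem; I would flag this point and, if need be, replace the residual's contribution by an $O(1/\sqrt{m})$ $L^1$ bound before completing the argument.
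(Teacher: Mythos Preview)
Your part~(a) argument is exactly the paper's: the same algebraic identity
\[
\widehat{\mmd}^2_{\sK}(E_\sharp\hmu_n,\nu) - \mmd^2_{\sK}(E_\sharp\hmu_n,\nu)
= \frac{1}{n^2(n-1)}\sum_{i\neq j}\sK(E(X_i),E(X_j)) - \frac{1}{n^2}\sum_i \sK(E(X_i),E(X_i)),
\]
followed by the crude bound $|\sK|\le B^2$ on each sum.

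For part~(b), you have correctly spotted a genuine issue with the statement as written. The paper's entire proof of (b) reads ``Part~(b) follows similarly,'' which only makes sense if the intended comparison is between $\widehat{\mmd}^2_{\sK}(E_\sharp\hmu_n,\hnu_m)$ and $\mmd^2_{\sK}(E_\sharp\hmu_n,\hnu_m)$ (with $\hnu_m$, not $\nu$, in the second quantity). Under that reading, the cross terms $-\tfrac{2}{nm}\sum_{i,j}\sK(E(X_i),Z_j)$ match exactly, and the difference is the sum of two U-versus-V corrections, one in the $X$-block and one in the $Z$-block, each handled identically to part~(a) to give $\tfrac{2B^2}{n}+\tfrac{2B^2}{m}=2B^2\bigl(\tfrac1n+\tfrac1m\bigr)$. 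Your diagnosis that the residual you isolate---empirical $Z$-averages minus their $\nu$-expectations---admits no deterministic $O(1/m)$ bound is entirely correct; the $\nu$ in the stated right-hand quantity is almost certainly a typo for $\hnu_m$, and the downstream use in Lemma~\ref{lem28}(b) is consistent with that correction (the passage from $\hnu_m$ to $\nu$ is handled there separately via the $\E\,\mmd_{\sK}(\hnu_m,\nu)$ term). So your telescoping route is unnecessary once the statement is repaired: the paper's intended argument is simply ``apply the part~(a) computation twice, once per sample set.''
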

\begin{proof}We note that,
\begingroup
\allowdisplaybreaks
    \begin{align*}
    & \widehat{\mmd}_{\sK}^2(E_\sharp \hmu_n, \nu) - \mmd^2(E_\sharp \hmu_n, \nu)\\
    = &\frac{1}{n (n-1)} \sum_{i \neq j} \sK(E(X_i), E(X_j)) - \frac{1}{n^2} \sum_{i,j = 1}^n \sK(E(X_i), E(X_j))\\
        = & \frac{1}{n^2 (n-1)} \sum_{i \neq j} \sK(E(X_i), E(X_j)) - \frac{1}{n^2} \sum_{i = 1}^n \sK(E(X_i), E(X_i))
    \end{align*}
    \endgroup
    Thus, 
     \begin{align*}
         \left|\widehat{\mmd}_{\sK}^2(E_\sharp \hmu_n, \nu) - \mmd^2(E_\sharp \hmu_n, \nu)\right| \le  & \frac{1}{n^2 (n-1)} \times n(n-1) B^2 + \frac{1}{n^2} \times n B^2 =  \frac{2B^2}{n} .
    \end{align*}
    Part (b) follows similarly. 
\end{proof}
We also note that the $\mmd_{\sK}$-metric is bounded under A\ref{a2} as seen in Lemma~\ref{lem27}.
\begin{lem}\label{lem27}
    Under assumption A\ref{a2}, $\mmd_{\sK}(P,Q) \le 2B$, for any two distributions $P$ and $Q$.
\end{lem}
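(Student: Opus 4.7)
The plan is to exploit the reproducing property of the kernel to turn the uniform bound on $\sK$ into a uniform bound on every unit-norm function in $\fH_{\sK}$, and then apply the triangle inequality to the two integrals defining the IPM.

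First, I would observe that for any $x \in [0,1]^\ell$, the reproducing property gives $\sK(x,x) = \langle \sK(x,\cdot), \sK(x,\cdot) \rangle_{\fH_{\sK}} = \|\sK(x,\cdot)\|_{\fH_{\sK}}^2$, so by assumption A\ref{a2}(a), $\|\sK(x,\cdot)\|_{\fH_{\sK}} \le B$. Then for any $f \in \fH_{\sK}$ with $\|f\|_{\fH_{\sK}} \le 1$, Cauchy--Schwarz in the RKHS yields
\[
|f(x)| = |\langle f, \sK(x,\cdot) \rangle_{\fH_{\sK}}| \le \|f\|_{\fH_{\sK}} \, \|\sK(x,\cdot)\|_{\fH_{\sK}} \le B
\]
for every $x \in [0,1]^\ell$.

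Next, since $|f| \le B$ pointwise, the triangle inequality gives $\left|\int f \, dP - \int f \, dQ\right| \le \int |f| \, dP + \int |f| \, dQ \le 2B$. Taking the supremum over $\{f : \|f\|_{\fH_{\sK}} \le 1\}$ yields $\mmd_{\sK}(P,Q) \le 2B$, as claimed.

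There is really no obstacle here: everything is a one-line consequence of the reproducing property plus Cauchy--Schwarz. If one preferred, the same bound could be derived via the kernel mean embedding $\mu_P = \int \sK(x,\cdot) \, dP(x)$, noting $\mmd_{\sK}(P,Q) = \|\mu_P - \mu_Q\|_{\fH_{\sK}} \le \|\mu_P\|_{\fH_{\sK}} + \|\mu_Q\|_{\fH_{\sK}}$ and $\|\mu_P\|_{\fH_{\sK}}^2 = \iint \sK(x,y) \, dP(x) \, dP(y) \le B^2$, but the direct argument above is the shortest.
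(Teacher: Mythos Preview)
Your proof is correct and follows essentially the same approach as the paper: both use the reproducing property and Cauchy--Schwarz to bound $|f(x)| \le B$ for unit-norm $f$, then the triangle inequality on the two integrals. Your write-up is in fact more explicit than the paper's terse one-liner.
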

\begin{proof}
$ |f(x)| =  \langle \sK(x, \cdot), f\rangle \le  \|\sK(x, \cdot)\|_{\fH_{\sK}}    =  B.$ This implies that $\mmd_{\sK}(P,Q) = \sup_{\phi \in \Phi} (\int \phi dP - \int \phi dQ) \le 2B$
\end{proof}
\begin{lem}\label{lem28}
    Suppose assumption A\ref{a2} holds. Then, 
    \begin{enumerate}
        \item[(a)] $\E \sup_{E \in \sE} \left|\widehat{\mmd}_{\sK}(E_\sharp \hmu_n, \nu) - \mmd_{\sK}(E_\sharp \mu, \nu)\right| \lesssim  \sqrt{ \frac{ W_e L_e (\log W_e  + L_e)  \log (n\ell) }{n}}$.
        \item[(b)] $\E \sup_{E \in \sE} \left|\widehat{\mmd}_{\sK}(E_\sharp \hmu_n, \hnu_m) - \mmd_{\sK}(E_\sharp \mu, \nu)\right| \lesssim  \sqrt{ \frac{W_e L_e (\log W_e  + L_e)  \log (n\ell) }{n}} + \frac{1}{\sqrt{m}}$.
    \end{enumerate}
\end{lem}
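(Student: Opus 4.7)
The plan is to split the supremand into a plug-in (``bias'') piece and a uniform concentration piece, and to handle each using tools already in place in the excerpt. For part (a), I would bound
\[\left|\widehat{\mmd}_{\sK}(E_\sharp \hmu_n,\nu) - \mmd_{\sK}(E_\sharp \mu,\nu)\right| \le \left|\widehat{\mmd}_{\sK}(E_\sharp \hmu_n,\nu) - \mmd_{\sK}(E_\sharp \hmu_n,\nu)\right| + \left|\mmd_{\sK}(E_\sharp \hmu_n,\nu) - \mmd_{\sK}(E_\sharp \mu,\nu)\right|,\]
calling these terms (I) and (II). For part (b), I would insert the intermediate $\mmd_{\sK}(E_\sharp\mu,\hnu_m)$ to peel off a third term $\mmd_{\sK}(\hnu_m,\nu)$ along the same lines.

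For term (I), I would interpret $\widehat{\mmd}_{\sK}$ as $\sqrt{(\widehat{\mmd^2}_{\sK})_+}$, i.e., the square root of the $U$-statistic truncated at zero; the truncation is needed because the unbiased $U$-statistic can take negative values. Two elementary observations then suffice: (a) $|(\widehat{\mmd^2}_{\sK})_+ - \mmd^2_{\sK}| \le |\widehat{\mmd^2}_{\sK} - \mmd^2_{\sK}|$ (trivial when the $U$-statistic is non-negative; a one-line case check when negative), and (b) $|\sqrt{u}-\sqrt{v}|^2 \le |u-v|$ for $u,v \ge 0$. Chaining these with Lemma~\ref{lem26} yields, uniformly in $E \in \sE$, $\text{(I)} \le \sqrt{2B^2/n}$ for part (a) and $\sqrt{2B^2(1/n+1/m)}$ for part (b).

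For term (II), I would use the IPM formulation of $\mmd_{\sK}$, so that by the reverse triangle inequality applied to the $\mmd_{\sK}$ pseudo-metric,
\[\text{(II)} \le \mmd_{\sK}(E_\sharp\hmu_n,\, E_\sharp\mu) = \sup_{\phi\in\Phi}\left|\int \phi\circ E\, d\hmu_n - \int \phi\circ E\, d\mu\right|.\]
Taking the supremum over $E\in\sE$ gives $\sup_{E\in\sE} \text{(II)} \le \|\hmu_n-\mu\|_{\Phi\circ\sE}$; a standard symmetrization argument then upper bounds its expectation by $2\,\E\,\sR(\Phi\circ\sE, X_{1:n})$, which by Lemma~\ref{lem_4.6} is $\lesssim \sqrt{W_e L_e (\log W_e+L_e)\log(n\ell)/n}$. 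This dominates the $\sqrt{1/n}$ contribution from (I) since $W_e L_e \ge 1$, producing the stated bound for part (a). For part (b), the extra summand is controlled by Jensen's inequality and the standard RKHS mean-embedding variance calculation: $\E\,\mmd_{\sK}(\hnu_m,\nu) \le \sqrt{\E\,\mmd_{\sK}^2(\hnu_m,\nu)} = \sqrt{m^{-1}\bigl(\E\,\sK(Z,Z) - \|\E\,\sK(Z,\cdot)\|^2_{\fH_{\sK}}\bigr)} \le B/\sqrt{m}$, delivering the advertised $m^{-1/2}$ tail.

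The main obstacle is a small notational subtlety rather than a deep technical hurdle: the sign ambiguity of the unbiased $U$-statistic forces one to fix a convention for $\widehat{\mmd}_{\sK}$ and then verify that the truncation does not inflate the deviation from $\mmd_{\sK}^2$. Once that book-keeping is done, the bias bound (Lemma~\ref{lem26}) and the Rademacher complexity bound on $\Phi\circ\sE$ (Lemma~\ref{lem_4.6}) are already in place, and the remainder is routine assembly.
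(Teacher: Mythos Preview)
Your proposal is correct and follows essentially the same route as the paper: split into a plug-in bias piece (controlled by Lemma~\ref{lem26} together with $|\sqrt{u}-\sqrt{v}|\le\sqrt{|u-v|}$) and a uniform concentration piece (reverse triangle inequality for $\mmd_{\sK}$, symmetrization, then Lemma~\ref{lem_4.6}), with the extra $\mmd_{\sK}(\hnu_m,\nu)$ term in part (b) bounded by $B/\sqrt{m}$. If anything you are more careful than the paper, which never explicitly addresses the sign of the $U$-statistic when passing from $\widehat{\mmd^2}_{\sK}$ to $\widehat{\mmd}_{\sK}$; your positive-part convention and the one-line check that truncation cannot increase the deviation close that gap cleanly.
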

\begin{proof}\textbf{Proof of Part (a)}

    We begin by noting that, 
    \begingroup
    \allowdisplaybreaks
    \begin{align}
        &\E \sup_{E \in \sE} \left|\widehat{\mmd}_{\sK}(E_\sharp \hmu_n, \nu) - \mmd_{\sK}(E_\sharp \mu, \nu)\right| \nonumber\\
     \le  & \E \sup_{E \in \sE} \left|\mmd_{\sK}(E_\sharp \hmu_n, \nu) - \mmd_{\sK}(E_\sharp \mu, \nu)\right| + \E \sup_{E \in \sE} \left|\widehat{\mmd}_{\sK}(E_\sharp \hmu_n, \nu) - \mmd_{\sK}(E_\sharp \hmu_n, \nu)\right| \nonumber\\
      \le  & \E \sup_{E \in \sE} \mmd_{\sK}(E_\sharp \hmu_n, E_\sharp \mu) + 2 B \sqrt{\frac{1}{n} } \label{e18}\\
      = & \E \sup_{E \in \sE} \sup_{\phi \in \Phi} \left( \int \phi(E(x)) d\hmu_n(x) - \int \phi(E(x)) d\mu(x)\right) + 2 B \sqrt{\frac{1}{n} } \nonumber\\
      \le & 2 \sR(\Phi \circ \sE, \mu)  + 2 B \sqrt{\frac{1}{n} } \label{e19}\\
      \lesssim & \sqrt{ \frac{ W_e L_e (\log W_e  + L_e)  \log (n \ell) }{n}}. \label{e20}
    \end{align}
\endgroup
In the above calculations, \eqref{e18} follows from Lemma~\ref{lem26}. Inequality \eqref{e19} follows from symmetrization, whereas, \eqref{e20} follows from Lemma~\ref{lem_4.6}.

    \textbf{Proof of Part (b)}
    Similar to the calculations in part (a), we note the following:
    \begingroup
    \allowdisplaybreaks
     \begin{align*}
        &\E \sup_{E \in \sE} \left|\widehat{\mmd}_{\sK}(E_\sharp \hmu_n, \hnu_m) - \mmd_{\sK}(E_\sharp \mu, \nu)\right|\\
     \le  & \E \sup_{E \in \sE} \left|\mmd_{\sK}(E_\sharp \hmu_n, \hnu_m) - \mmd_{\sK}(E_\sharp \mu, \nu)\right| + \E \sup_{E \in \sE} \left|\widehat{\mmd}_{\sK}(E_\sharp \hmu_n, \hnu_m) - \mmd_{\sK}(E_\sharp \hmu_n, \nu)\right|\\
      \le  & \E \sup_{E \in \sE} \mmd_{\sK}(E_\sharp \hmu_n, E_\sharp \mu) + \E  \mmd_{\sK}( \hnu_m,\nu ) + 2 B \sqrt{\frac{1}{n} + \frac{1}{m} }\\
      \le & 2 \sR(\Phi \circ \sE, \mu) + \sR(\Phi, \nu) + 2 B \sqrt{\frac{1}{n} + \frac{1}{m} } \\
      \lesssim & \sqrt{ \frac{ W_e L_e (\log W_e  + L_e) \log (n\ell) }{n}} + \frac{1}{\sqrt{m}}.
    \end{align*}
    \endgroup
\end{proof}
We are now ready to prove Lemma~\ref{lem_4.7}. For ease of readability, we restate the Lemma as follows.
\lemsixteen*
    \begin{proof}
\textbf{Proof of part (a)}
\begingroup
\allowdisplaybreaks
We begin by noting the following:
\begin{align}
   &  \E \sup_{E \in \sE}  \left|\widehat{\mmd}_{\sK}^2(E_\sharp \hmu_n, \nu) - \mmd_{\sK}^2(E_\sharp \mu, \nu)\right| \nonumber\\
   = & \E \sup_{E \in \sE} \left| 2 \mmd_{\sK}(E_\sharp \mu, \nu) \left(\widehat{\mmd}_{\sK}(E_\sharp \hmu_n, \nu) - \mmd_{\sK}(E_\sharp \mu, \nu) \right) + \left(\widehat{\mmd}_{\sK}(E_\sharp \hmu_n, \nu) - \mmd_{\sK}(E_\sharp \mu, \nu) \right)^2 \right| \nonumber\\
   \le & 2 B \E \sup_{E \in \sE} \left| \widehat{\mmd}_{\sK}(E_\sharp \hmu_n, \nu) - \mmd_{\sK}(E_\sharp \mu, \nu) \right| +  \E \sup_{E \in \sE} \left|\widehat{\mmd}_{\sK}(E_\sharp \hmu_n, \nu) - \mmd_{\sK}(E_\sharp \mu, \nu)\right|^2 \label{e21}\\
   \lesssim & \sqrt{ \frac{\ell W_e L_e (\log W_e  + L_e)  \log n }{n}} .\label{e22}
\end{align}
\endgroup
Inequality \eqref{e21} follows from applying Lemma~\ref{lem27}, whereas, \eqref{e22} is a consequence of Lemma~\eqref{lem28}. 

\textbf{Proof of part (b)}
Similarly,
\begingroup
\allowdisplaybreaks
\begin{align*}
   &  \E \sup_{E \in \sE}  \left|\widehat{\mmd}_{\sK}^2(E_\sharp \hmu_n, \hnu_m) - \mmd_{\sK}^2(E_\sharp \mu, \nu)\right| \\
   = & \E \sup_{E \in \sE} \left| 2 \mmd_{\sK}(E_\sharp \mu, \nu) \left(\widehat{\mmd}_{\sK}(E_\sharp \hmu_n, \hnu_m) - \mmd_{\sK}(E_\sharp \mu, \nu) \right) + \left(\widehat{\mmd}_{\sK}(E_\sharp \hmu_n, \hnu_m) - \mmd_{\sK}(E_\sharp \mu, \nu) \right)^2 \right|\\
   \le & 2 B \E \sup_{E \in \sE} \left| \widehat{\mmd}_{\sK}(E_\sharp \hmu_n, \hnu_m) - \mmd_{\sK}(E_\sharp \mu, \nu) \right| +  \E \sup_{E \in \sE} \left|\widehat{\mmd}_{\sK}(E_\sharp \hmu_n, \hnu_m) - \mmd_{\sK}(E_\sharp \mu, \nu)\right|^2\\
   \lesssim & \sqrt{ \frac{\ell W_e L_e (\log W_e  + L_e) \log n }{n}} + \frac{1}{\sqrt{m}}.
\end{align*}
\endgroup
\end{proof}

\subsection{Proofs from Section~\ref{sec_excess_risk}}
In this section, we prove the main result of this paper, i.e. Theorem~\ref{main}.

\subsection{Proofs from Section \ref{implications}}

To begin our analysis, we first show the following:
\begin{thm}\label{them_E1}
    Under assumptions, A\ref{a1}--\ref{a3}, $V(\mu, \nu, \hG_n,\hE) \to 0$, almost surely.
\end{thm}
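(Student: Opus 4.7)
The plan is to upgrade the expected-excess-risk bounds from Theorem~\ref{main} (with $\dopt = 0$) to almost-sure convergence by combining the oracle inequality~(\ref{oracle1}) with a concentration-of-measure argument.

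First, for each $n$ I choose the architectures $\sG = \cR\cN(L_g, W_g)$ and $\sE = \cR\cN(L_e, W_e)$ exactly as prescribed in Theorem~\ref{main}, so that the resulting schedule of $\epsilon_g, \epsilon_e$ sends the misspecification error $\dmis$ to $0$ by Lemma~\ref{lem_4.2}, and so that the expected generalization terms in Lemmas~\ref{lem_4.4}, \ref{lem_w1}, and \ref{lem_4.7} decay polynomially in $n$. This already gives $\E V(\mu, \nu, \hG^n, \hE^n) \to 0$.

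Second, I upgrade to almost sure convergence by a bounded-differences argument. Under A\ref{a3}, the loss $c$ is bounded by $B_c$; under A\ref{a2}, the kernel is bounded by $B^2$. Therefore, replacing one data point $X_i$ by $X_i'$ changes each of the random variables
\[
U_n := \|\hmu_n - \mu\|_\sF, \qquad V_n := \sup_{E \in \sE} \bigl|\widehat{\text{diss}}(E_\sharp \hmu_n, \nu) - \text{diss}(E_\sharp \mu, \nu)\bigr|
\]
by at most $c_0/n$ for a constant $c_0$ depending only on $B_c$, $B$, $\tau_k$, $\lambda$ (the sup over $\sE$ and the $U$-statistic form of $\widehat{\mmd}_{\sK}^2$ do not affect the stability constant). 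McDiarmid's inequality then yields
\[
\fP\bigl(|U_n - \E U_n| > t\bigr) \le 2 \exp\!\bigl(-2 n t^2 / c_0^2\bigr),
\]
and similarly for $V_n$. Taking $t_n = n^{-1/4}$, both tail probabilities are summable, so Borel--Cantelli gives $U_n - \E U_n \to 0$ and $V_n - \E V_n \to 0$ almost surely. Combined with the expectation bound, $U_n \to 0$ and $V_n \to 0$ almost surely.

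Finally, applying the oracle inequality~(\ref{oracle1}) with $\dopt = 0$,
\[
V(\mu, \nu, \hG^n, \hE^n) \le \dmis + 2 U_n + 2\lambda V_n,
\]
and since each term on the right tends to $0$ almost surely, the conclusion follows. The same argument handles both $\text{diss}=\sW_1$ and $\text{diss}=\mmd_{\sK}^2$; the only difference is the value of the stability constant $c_0$. I expect the main subtlety to be verifying the $O(1/n)$ bounded-differences constant uniformly over the data-dependent family $\sE$ (in particular, for the $U$-statistic estimator of $\mmd_{\sK}^2$, one must track both the diagonal and off-diagonal contributions when swapping one sample), but once this is in place the Borel--Cantelli step is routine.
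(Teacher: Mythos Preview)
Your proposal is correct and follows essentially the same route as the paper: apply the oracle inequality~(\ref{oracle1}), use McDiarmid/bounded differences on the two generalization terms (with the $O(1/n)$ stability constant coming from boundedness of $c$ and of the dissimilarity), and upgrade the in-expectation bound of Theorem~\ref{main} to almost-sure convergence via Borel--Cantelli. The paper carries this out explicitly only for $\text{diss}=\sW_1$ and one of the two estimators, deferring the $\mmd_{\sK}^2$ case to ``similar calculations,'' whereas you correctly flag the $U$-statistic bookkeeping for the MMD estimator as the only point requiring care.
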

\begin{proof}
For simplicity, we consider the estimator \eqref{est1}. A similar proof holds for estimator \eqref{est2}. Consider the oracle inequality \eqref{oracle1}. We only consider the case, when, $\text{diss} = \sW_1$, the case when, $\text{diss} = \mmd^2_{\sK}$ can be proved similarly.

 We note that $\sF$ is a bounded function class, with bound $B_c$. Thus, a simple application of the bounded difference inequality yields that with probability at least $1-\delta/2$,
    \[\|\hmu_n - \mu\|_{\sF} \le \E \|\hmu_n - \mu\|_{\sF}  + \theta_1\sqrt{\frac{\log (1/\delta)}{n}},\]
    for some positive constant $\theta_1$. The fourth term in \eqref{est1} can be written as:
    \[\sup_{E \in \sE} |\sW_1(E_\sharp \hmu_n, \nu) - \sW_1(E_\sharp \mu, \nu)|. \]
    Suppose that $\hmu^\prime_n$ denotes the empirical distribution on $(X_1, \dots, X_{i-1}, X_i^\prime, \dots, X_n)$. Then replacing $\hmu_n$ with $\hmu^\prime_n$, yields an error at most, 
    \begin{align*}
         \sup_{E \in \sE} |\sW_1(E_\sharp \hmu_n, \nu) - \sW_1(E_\sharp \hmu^\prime_n, \nu)| \le \sup_{E \in \sE} \sW_1(E_\sharp \hmu_n, E_\sharp \hmu^\prime_n) \le  \sup_{E \in \sE} \sup_{f: \|f\|_{\text{Lip}}\le 1} \frac{1}{n}|f(E(X_i)) - f(E(X_i^\prime)| \lesssim  \frac{1}{n},
    \end{align*}
    since by construction, $E$'s are chosen from bounded ReLU functions. Again by a simple application of bounded difference inequality, we get, 
    \[2 \lambda \sup_{E \in \sE} |\sW_1(E_\sharp \hmu_n, \nu) - \sW_1(E_\sharp \mu, \nu)| \le 2 \lambda \E \sup_{E \in \sE} |\sW_1(E_\sharp \hmu_n, \nu) - \sW_1(E_\sharp \mu, \nu)| + \theta_2 \sqrt{\frac{\log (1/\delta)}{n}},\]
    with probability at least $1-\delta/2$. Hence, by union bound, with probability at least $1-\delta$
    \begin{align*}
      & 2 \|\hmu_n - \mu\|_{\sF} + 2 \lambda \sup_{E \in \sE} |\sW_1(E_\sharp \hmu_n, \nu) - \sW_1(E_\sharp \mu, \nu)| \\
      \le & 2\E \|\hmu_n - \mu\|_{\sF} + 2 \lambda \E \sup_{E \in \sE} |\sW_1(E_\sharp \hmu_n, \nu) - \sW_1(E_\sharp \mu, \nu)| + \theta_3 \sqrt{\frac{\log (1/\delta)}{n}},
    \end{align*}
    for some absolute constant $\theta_3$. Since, all other terms in \eqref{oracle1} are bounded independent of the random sample, with probability at least $1-\delta$,
    \[V(\mu, \nu, \hG, \hE) \le \E V(\mu, \nu, \hG, \hE) + \theta_3 \sqrt{\frac{\log (1/\delta)}{n}}.\]
    From the above, $\prob(|V(\mu, \nu, \hG, \hE) - \E V(\mu, \nu, \hG, \hE)|> \epsilon) \le e^{-\frac{n\epsilon^2}{\theta_3}}$. this implies that $\sum_{n \ge 1} \prob(|V(\mu, \nu, \hG, \hE) - \E V(\mu, \nu, \hG, \hE)|> \epsilon) < \infty$. A simple application of the first Borel-Cantelli Lemma yields (see Proposition 5.7 of \cite{karr1993probability}) that this implies that $|V(\mu, \nu, \hG, \hE) - \E V(\mu, \nu, \hG, \hE)| \to 0$, almost surely. Since, $\lim_{n \to \infty} \E V(\mu, \nu, \hG, \hE) = 0$, the result follows. 
\end{proof}
\subsubsection{Proof of Proposition~\ref{prop2}}
\cortweenty*
\begin{proof}
    Let $\text{diss}=\sW_1$. From Theorem \ref{them_E1}, it is clear that, $\sW_1(\hE_\sharp \mu, \nu) \to 0$, almost surely. Since convergence in Wasserstein distance characterizes convergence in distribution, $\hE_\sharp \mu \xrightarrow{d}\nu$, almost surely. 

    When, $\text{diss}=\mmd^2_{\sK}$, we can similarly say that $\mmd^2_{\sK}(\hE_\sharp \mu, \nu) \to 0$, almost surely. From Theorem 3.2 (b) of \cite{schreuder2021statistical}, we conclude that $\hE_\sharp \mu \xrightarrow{d}\nu$, almost surely. 
\end{proof}

\subsubsection{Proof of Proposition~\ref{prop3}}
\cortweentyone*
\begin{proof}
    The proof follows from observing that $0 \le \|id(\cdot) - \hG \circ \hE(\cdot)\|_{\fL_2(\mu)}^2 \le V(\mu, \nu, \hG, \hE) $ and applying Theorem \ref{them_E1}.
\end{proof}
\subsubsection{Proof of Theorem~\ref{th1}}
\thtweentytwo*
\begin{proof}
    We begin by observing that,
    \begingroup
    \allowdisplaybreaks
    \begin{align}
        \sW_1(\hG_\sharp \nu, \mu) \le & \sW_1(\hG_\sharp\nu,(\hG \circ \hE)_\sharp \mu) + \sW_1(\hG \circ \hE)_\sharp \mu, \mu)\label{app1}
    \end{align}
    \endgroup
    We first note that 
    \begin{align}
        TV(\hG_\sharp\nu,(\hG \circ \hE)_\sharp \mu) = &  \sup_{B \in \sB\left(\Real^d \right)} | (\hG_\sharp\nu)(B)-((\hG \circ \hE)_\sharp \mu)(B)| \nonumber \\
        = & \sup_{B \in \sB\left(\Real^d\right)} | \nu(\hG^{-1}(B))- (\hE_\sharp \mu)(\hG^{-1}(B))| \nonumber \\
        \le & \sup_{B \in \sB\left(\Real^\ell \right)} | \nu(B)- (\hE_\sharp \mu)(B)| \label{33.1}\\
        = & TV(\nu, \hE_\sharp\mu) \to 0, \text{ almost surely.} \nonumber
    \end{align}
    Here \eqref{33.1} follows from the fact that $\left\{\hG^{-1}(B): B \in \Real^d\right\} \subseteq \Real^\ell$, since $\hat{G}$'s are measurable. Thus, $ TV(\hG_\sharp\nu,(\hG \circ \hE)_\sharp \mu) \to 0$, almost surely. Since convergence in TV implies convergence in distribution, this implies that $\sW_1(\hG_\sharp\nu,(\hG \circ \hE)_\sharp \mu) \to 0$, almost surely.

    We also note that, from Proposition \ref{prop3}, $\E_{X \sim \mu} \| X - \hG \circ \hE(X)\|^2 \to 0$, almost surely. This implies that $\| X - \hG \circ \hE(X)\| \xrightarrow{P} 0$, almost surely, which further implies that $\hG \circ \hE(X) \xrightarrow{d} X$, almost surely. Hence, $\sW_1(\hG \circ \hE)_\sharp \mu, \mu) \to 0$, almost surely. Plugging these in \eqref{app1} gives us the desired result.
\end{proof}

\thtwo*
\begin{proof}
    We note that from the proof of Theorem \ref{th1}, equation \eqref{app1} holds and $\sW_1(\hG^n \circ \hE^n)_\sharp \mu, \mu) \to 0$, almost surely. We fix an $\omega$ in the sample space, for which, $\sW_1(\hG^n_\omega \circ \hE^n_\omega)_\sharp \mu, \mu) \to 0$ and $(\hE^n_\omega)_\sharp \mu \xrightarrow{d} \nu$. 
    Here we use the subscript $\omega$ to show that $\hG^n$ and $\hE^n$ might depend on $\omega$. Clearly, the set of all $\omega$'s, for which this convergence holds, has probability 1. 

    By Skorohod's theorem, we note that we can find a sequence of random variables $\{Y_n\}_{n \in \mathbb{N}}$ and $Z$, such that $Y_n$ follows the distribution $\hE^n_\sharp \mu$ and $Z\sim \nu$, such that $Y_n \xrightarrow{a.s.} Z$. Since $\{\hG^n_\omega\}_{n \in \mathbb{N}}$ are uniformly equicontinuous, for any $\epsilon>0$, we can find $\delta>0$, such that if $|y_n - z| < \delta$, $|\hG^n_\omega(y_n) - \hG^n_\omega(z)|< \epsilon $. Thus, $\hG^n_\omega(Y_n) - \hG^n_\omega(Z) \xrightarrow{a.s.} 0$. Since this implies that $\hG^n_\omega(Y_n) - \hG^n_\omega(Z) \xrightarrow{d} 0$, it is easy to see that, $\sW_1(\hG^n_\omega(Y_n), \hG^n_\omega(Z)) \to 0$. Now, since, $\sW_1(\hG^n_\omega(Y_n), \hG^n_\omega(Z)) = \sW_1((\hG^n_\omega)_\sharp\nu,(\hG^n_\omega \circ \hE^n_\omega)_\sharp \mu)$, we conclude that $\sW_1((\hG^n_\omega)_\sharp\nu,(\hG^n_\omega \circ \hE^n_\omega)_\sharp \mu) \to 0$, as $n \to \infty$. Thus, with probability one, the RHS of \eqref{app1} goes to $0$ as $n \to \infty$. Hence, $\sW_1(\hG^n_\sharp \nu, \mu) \to 0$, almost surely.
\end{proof}
\subsubsection{Proof of Corollary~\ref{cor_16}}
\corsixteen*
\begin{proof}
    Denoting $\hG$ as either of the estimators \eqref{est1} and \eqref{est2}, it is easy to see that,
    \begingroup
    \allowdisplaybreaks
    \begin{align*}
        \sW_1(\hG_\sharp \nu, \mu) \le & \sW_1(\hG_\sharp\nu,(\hG \circ \hE)_\sharp \mu) + \sW_1(\hG \circ \hE)_\sharp \mu, \mu) \\
        \le & L \sW_1(\nu,\hE_\sharp \mu) + \sW_1(\hG \circ \hE)_\sharp \mu, \mu) \\
        \lesssim & \sW_1(\nu,\hE_\sharp \mu) + \int \|\hG \circ \hE(x) - x\|_2^2 d\mu(x)\\
    \end{align*}
\endgroup
\end{proof}
\section{Supporting Results for Approximation Guarantees}
\begin{lem}\label{lem:b1}
    (Proposition 2 of \citet{yarotsky2017error}) The function $f(x) = x^2$ on the segment $[0,1]$ can be approximated with any error by a ReLU network, $\text{sq}_m(\cdot)$, such that,
    \begin{enumerate}
        \item $\cL(\text{sq}_m),\,  \cW(\text{sq}_m)\le c_1 m$. 
        \item $\text{sq}_m\left(\frac{k}{2^m}\right) = \left(\frac{k}{2^m}\right)^2$, for all $k =0, 1, \dots, 2^m$.
        \item $\|\text{sq}_m - x^2\|_{\fL_\infty([0,1])} \le \frac{1}{2^{2m + 2}}$.
    \end{enumerate}
\end{lem}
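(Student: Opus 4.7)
The plan is to follow Yarotsky's classical sawtooth construction. Define the tent map $g(x) = 2\,\relu(x) - 4\,\relu(x-1/2) + 2\,\relu(x-1)$ on $[0,1]$, which is the piecewise-linear ``hat'' equal to $2x$ on $[0,1/2]$ and $2(1-x)$ on $[1/2,1]$. Let $g_s$ denote the $s$-fold self-composition of $g$; inductively one checks that $g_s$ is a sawtooth with $2^{s-1}$ identical teeth of slope $\pm 2^s$, vanishing at every point of the form $k/2^s$.

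The key identity, verified by induction on $m$, is that the function
\[
f_m(x) := x - \sum_{s=1}^{m} \frac{g_s(x)}{2^{2s}}
\]
is exactly the piecewise linear interpolant of $y=x^2$ at the nodes $\{k/2^m : k=0,1,\dots,2^m\}$. Granted this, property~(2) is immediate, and property~(3) follows from the standard estimate for the linear interpolation error of a $C^2$ function: on each subinterval $[k/2^m,(k+1)/2^m]$ of length $2^{-m}$, the error of linear interpolation of $x^2$ is at most $\tfrac{1}{8}\|(x^2)''\|_\infty 2^{-2m} = 2^{-2m-2}$.

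For property~(1), I would implement $\operatorname{sq}_m$ as a single deep ReLU network that iteratively computes the triple $(x,\, g_s(x),\, a_s)$ at layer $s$, where $a_s = \sum_{r=1}^s g_r(x)/2^{2r}$ is the running accumulator. The update $g_{s+1} = g(g_s)$ costs $O(1)$ neurons and $O(1)$ weights via the three-term formula for $g$ above; passing $x$ and updating $a_{s+1} = a_s + g_{s+1}(x)/2^{2(s+1)}$ also costs $O(1)$ per layer. Composing $m$ such blocks and finally outputting $x - a_m$ yields a network with $\cL(\operatorname{sq}_m)\le c_1 m$ and $\cW(\operatorname{sq}_m)\le c_1 m$ for a suitable absolute constant $c_1$.

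The only mildly delicate step is the inductive identity $f_m = $ linear interpolant of $x^2$ at dyadic nodes; I would prove it by observing that $f_m$ is piecewise linear with breakpoints exactly at $\{k/2^m\}$ (since adding $g_{s}/2^{2s}$ refines the partition from step $s{-}1$ to step $s$), and then checking $f_m(k/2^m) = (k/2^m)^2$ by induction on $m$ using $g_{m+1}(k/2^{m+1}) = 0$ when $k$ is even and $g_{m+1}(k/2^{m+1}) = 1$ when $k$ is odd, together with the identity $\big((k{+}1)/2^{m+1}\big)^2 - \tfrac12\big((k/2^{m+1})^2 + ((k{+}2)/2^{m+1})^2\big) = -2^{-2(m+1)}$. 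The rest is bookkeeping.
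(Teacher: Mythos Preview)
Your proposal is correct and faithfully reproduces Yarotsky's sawtooth construction, which is exactly the argument behind the cited Proposition~2 of \citet{yarotsky2017error}; the paper itself does not supply a proof but simply quotes the result from that reference. Since your approach coincides with the source the paper relies on, there is nothing further to compare.
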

\begin{lem}
    Let $\text{sq}_m(\cdot)$ be taken as in Lemma~\ref{lem:b1}, then, $\|\text{sq}_m - x^2\|_{\sH^\beta} \le \frac{1}{2^{m-1}}.$
\end{lem}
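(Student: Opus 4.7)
The plan is to exploit the structure of $\text{sq}_m$ as the piecewise linear interpolant of $x^2$ on a dyadic grid. By Lemma~\ref{lem:b1}, $\text{sq}_m$ agrees with $x^2$ at every dyadic node $k/2^m$ for $k = 0,1,\dots,2^m$, and since the Yarotsky construction produces a ReLU network whose only breakpoints lie at these dyadic nodes, $\text{sq}_m$ coincides with the piecewise linear interpolant of $x^2$ on the partition $\{k/2^m\}_{k=0}^{2^m}$ of $[0,1]$. Setting $g := \text{sq}_m - x^2$, I will bound the sup norm and the Hölder seminorm of $g$ separately.

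First, Lemma~\ref{lem:b1}(3) immediately gives $\|g\|_\infty \le 2^{-(2m+2)}$. Next I would compute $g'$ explicitly on each sub-interval $I_k := [k/2^m,(k+1)/2^m]$: the slope of the linear interpolant on $I_k$ equals $((k+1)^2 - k^2)/2^m = (2k+1)/2^m$, so
\[
g'(x) \;=\; \frac{2k+1}{2^m} - 2x \;=\; 2\!\left(\frac{2k+1}{2^{m+1}} - x\right), \qquad x \in I_k.
\]
Since $(2k+1)/2^{m+1}$ is the midpoint of $I_k$ and $|I_k| = 2^{-m}$, this yields $|g'(x)| \le 2^{-m}$ almost everywhere on $[0,1]$. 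Because $g$ is continuous across breakpoints, integrating $g'$ produces the global Lipschitz estimate $|g(x) - g(y)| \le 2^{-m} |x-y|$ for all $x,y \in [0,1]$.

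With these two ingredients, I assemble the Hölder norm for $\beta \in (0,1]$ (the only range in which the claim is nontrivial, as $\text{sq}_m$ is merely piecewise linear and has no classical higher derivatives). For any $x \neq y$ in $[0,1]$,
\[
\frac{|g(x) - g(y)|}{|x-y|^\beta} \;\le\; 2^{-m}\, |x-y|^{1-\beta} \;\le\; 2^{-m},
\]
using $|x-y| \le 1$ and $1 - \beta \ge 0$. Adding the sup-norm contribution yields
\[
\|g\|_{\sH^\beta} \;\le\; \|g\|_\infty + \sup_{x \neq y} \frac{|g(x)-g(y)|}{|x-y|^\beta} \;\le\; \frac{1}{2^{2m+2}} + \frac{1}{2^m} \;\le\; \frac{1}{2^{m-1}}.
\]

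The only real subtlety is verifying that the Yarotsky construction underlying Lemma~\ref{lem:b1} really does produce a piecewise linear function with breakpoints only at dyadic rationals; this is well-known and follows from the explicit sawtooth structure, so no additional work is required. The rest is routine bookkeeping. In the companion usage (downstream in the proof of Theorem~\ref{approx} via the $\mathrm{prod}$ construction), this Hölder norm bound will propagate through the polarization identity $xy = \tfrac14((x+y)^2 - (x-y)^2)$ to give refined approximation rates beyond the raw sup-norm estimate, which is the reason the lemma is phrased in terms of $\sH^\beta$ rather than merely $\fL_\infty$.
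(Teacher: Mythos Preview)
Your proof is correct and follows essentially the same approach as the paper: both identify $\text{sq}_m$ as the piecewise linear interpolant of $x^2$ on the dyadic grid, compute the derivative of $g=\text{sq}_m-x^2$ on each subinterval to obtain $|g'|\le 2^{-m}$, and combine this with the sup-norm bound $2^{-(2m+2)}$ to conclude. Your version is actually slightly more careful than the paper's, since you explicitly integrate across breakpoints to obtain the \emph{global} Lipschitz bound (the paper only writes down the $\sH^\beta$ norm restricted to a single subinterval and then asserts the global bound), and you correctly flag that the statement is only meaningful for $\beta\in(0,1]$.
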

\begin{proof}
    We begin by noting that, $\text{sq}_m(x) = \left(\frac{(k+1)^2}{2^m} - \frac{k^2}{2^m}\right)\left(x-\frac{k}{2^m}\right) + \left(\frac{k}{2^m}\right)^2$, whenever, $x \in \big[\frac{k}{2^m}, \frac{k+1}{2^m}\big)$. Thus, on $(\frac{k}{2^m}, \frac{k+1}{2^m}))$,
    \begin{align*}
        \|\text{sq}_m - x^2\|_{\sH^\beta} = &\|\text{sq}_m - x^2\|_{\fL_\infty\left((\frac{k}{2^m}, \frac{k+1}{2^m})\right)} + \left\| \frac{(k+1)^2}{2^m} - \frac{k^2}{2^m} - 2x\right\|_{\fL_\infty\left((\frac{k}{2^m}, \frac{k+1}{2^m})\right)} =  \frac{1}{2^{2m + 2}} + \frac{1}{2^m} \le  \frac{1}{2^{m-1}}.
    \end{align*}
    This implies that, $\|\text{sq}_m - x^2\|_{\sH^{\beta}} \le \frac{1}{2^{m-1}}.$
\end{proof}
\begin{lem}\label{lem:b3}
    Let $M > 0$, then we can find a ReLU network $\text{prod}^{(2)}_m$, such that,
    \begin{enumerate}
        \item $\cL(\text{prod}^{(2)}_m), \cW(\text{prod}^{(2)}_m) \le c_2 m$, for some absolute constant $c_2$.
        \item $\|\text{prod}^{(2)}_m - xy\|_{\fL_\infty ([-M,M] \times [-M,M])}  \le \frac{M^2}{2^{2m+1}}.$
    \end{enumerate}
\end{lem}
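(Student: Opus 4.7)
The plan is to reduce multiplication to squaring via the polarization identity and invoke Lemma~\ref{lem:b1}. Concretely, for any $x,y \in [-M,M]$ one has $xy = M^2(a^2 - b^2)$ where
\[ a := \frac{|x+y|}{2M} \in [0,1], \qquad b := \frac{|x-y|}{2M} \in [0,1], \]
since $(x+y)^2 - (x-y)^2 = 4xy$. Using the identity $|t| = \relu(t) + \relu(-t)$, both $a$ and $b$ can be computed from $(x,y)$ by a single hidden ReLU layer of constant width with a constant number of weights.

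The candidate network is
\[ \text{prod}^{(2)}_m(x,y) := M^2 \bigl(\text{sq}_m(a) - \text{sq}_m(b)\bigr), \]
which places two parallel copies of the network $\text{sq}_m$ from Lemma~\ref{lem:b1} on the inputs $a$ and $b$, followed by an affine readout that multiplies by $\pm M^2$. Since $\cL(\text{sq}_m), \cW(\text{sq}_m) \le c_1 m$ by Lemma~\ref{lem:b1}, the preprocessing layer and the readout add only $O(1)$ to each of depth and width, so the resulting network satisfies $\cL(\text{prod}^{(2)}_m), \cW(\text{prod}^{(2)}_m) \le c_2 m$ for a suitable absolute constant $c_2$.

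For the error bound, the triangle inequality together with the uniform bound $\|\text{sq}_m - t^2\|_{\fL_\infty([0,1])} \le 2^{-(2m+2)}$ from Lemma~\ref{lem:b1} yields
\[ \bigl|\text{prod}^{(2)}_m(x,y) - xy\bigr| \le M^2 \bigl(|\text{sq}_m(a) - a^2| + |\text{sq}_m(b) - b^2|\bigr) \le \frac{2 M^2}{2^{2m+2}} = \frac{M^2}{2^{2m+1}}, \]
uniformly over $(x,y) \in [-M,M] \times [-M,M]$, which is exactly the claimed bound.

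There is no substantive obstacle here: the construction is the standard polarization trick combined with the univariate squaring network from Lemma~\ref{lem:b1}. The only item requiring a little care is ensuring that the arguments fed to $\text{sq}_m$ lie in its domain $[0,1]$, which is precisely why we take absolute values and divide by $2M$ before squaring; the outer factor $M^2$ then absorbs the rescaling and produces the stated error.
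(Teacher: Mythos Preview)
Your proof is correct and follows essentially the same approach as the paper: define $\text{prod}^{(2)}_m(x,y) = M^2\bigl(\text{sq}_m(|x+y|/2M) - \text{sq}_m(|x-y|/2M)\bigr)$ via the polarization identity, then bound the error by $2M^2 \cdot 2^{-(2m+2)} = M^2/2^{2m+1}$ using Lemma~\ref{lem:b1}. If anything, your write-up is more careful than the paper's in spelling out why the inputs to $\text{sq}_m$ lie in $[0,1]$ and how the absolute value is realized by ReLU.
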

\begin{proof}
    Let $\text{prod}^{(2)}_m(x,y) = M^2 \left(\text{sq}_m\left(\frac{|x+y|}{2M}\right) - \text{sq}_m\left(\frac{|x-y|}{2M}\right)\right)$. Clearly, $\text{prod}^{(2)}_m(x,y) = 0$, if $xy = 0$. We note that, $\cL(\text{prod}^{(2)}_m) \le c_1 m + 1 \le c_2 m$ and $\cW(\text{prod}^{(2)}_m) \le 2 c_1 m + 2 \le c_2 m$, for some absolute constant $c_2$. Clearly,
    \begin{align*}
        \|\text{prod}^{(2)}_m - xy\|_{\fL_\infty ([-M,M] \times [-M,M])} \le 2 M^2 \|\text{sq} - x^2\|_{\fL([0,1])} \le \frac{M^2}{2^{2m+1}}.
    \end{align*}
\end{proof}
\begin{lem}
   For any $m \ge 3$, we can construct a ReLU network $\text{prod}^{(d)}_m : \Real^d \to \Real$, such that for any $x_1, \dots, x_d \in [-1,1]$, $\| \text{prod}_m^{(d)}(x_1, \dots, x_d) - x_1\dots x_d\|_{\fL_\infty([-1,1]^d)} \le \frac{d^3}{2^{2m+2}}$.
\end{lem}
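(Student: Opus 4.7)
The plan is to construct $\text{prod}_m^{(d)}$ iteratively, absorbing the variables one at a time using the binary product network from Lemma~\ref{lem:b3}, interleaved with a tiny clipping gadget that keeps intermediate values in $[-1,1]$. Concretely, set $p_1 = x_1$ and, for $k = 2, \dots, d$, define
$p_k = \text{prod}_m^{(2)}\!\bigl(\mathrm{clip}(p_{k-1}),\, x_k\bigr),$
where $\mathrm{clip}(y) = \relu(y+1) - \relu(y-1) - 1$ realizes the projection onto $[-1,1]$ (check by cases: on $[-1,1]$ it equals $y$, above $1$ it saturates to $1$, below $-1$ to $-1$). Finally take $\text{prod}_m^{(d)}(x_1,\dots,x_d) := p_d$.

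For the error analysis, write $\pi_k = x_1 \cdots x_k$ and $\epsilon_k = \|p_k - \pi_k\|_{\fL_\infty([-1,1]^k)}$. Since $\pi_{k-1} \in [-1,1]$, the clipping step can only reduce the approximation error, so $|\mathrm{clip}(p_{k-1}) - \pi_{k-1}| \le \epsilon_{k-1}$, while guaranteeing $\mathrm{clip}(p_{k-1}) \in [-1,1]$. Both inputs to $\text{prod}_m^{(2)}$ then lie in $[-1,1]$, and Lemma~\ref{lem:b3} (with $M = 1$) yields $|p_k - \mathrm{clip}(p_{k-1}) \cdot x_k| \le 1/2^{2m+1}$. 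Combining this with the trivial bound $|\mathrm{clip}(p_{k-1}) \cdot x_k - \pi_k| \le |x_k|\,\epsilon_{k-1} \le \epsilon_{k-1}$ produces the clean linear recursion $\epsilon_k \le \epsilon_{k-1} + 1/2^{2m+1}$. Iterating from $\epsilon_1 = 0$ gives $\epsilon_d \le (d-1)/2^{2m+1} = 2(d-1)/2^{2m+2}$, and since $d^3 \ge 2(d-1)$ for every $d \ge 1$ (trivial for $d=1$; and $d^3 - 2d + 2 > 0$ for $d \ge 2$), the claimed bound $\epsilon_d \le d^3/2^{2m+2}$ follows.

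The only subtle point is the clipping step: without it one would have to invoke Lemma~\ref{lem:b3} with $M_k = 1 + \epsilon_{k-1}$, producing the quadratic recursion $\epsilon_k \le \epsilon_{k-1} + (1+\epsilon_{k-1})^2/2^{2m+1}$ whose cumulative error cannot be tamed uniformly in $d$ without extra hypotheses tying $d$ to $2^m$. The ReLU clipper, which costs only a constant number of weights and two layers per stage, sidesteps this entirely and simultaneously makes the upstream size accounting $\cL(\text{prod}_m^{(d)}), \cW(\text{prod}_m^{(d)}) \le c_3 m$ (used in the proof of Theorem~\ref{approx}) routine, since each of the $d-1$ stages contributes only $O(m)$ to both depth and width by Lemma~\ref{lem:b3}.
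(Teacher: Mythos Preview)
Your proof is correct and takes a genuinely different route from the paper.  The paper defines the iterate without any clipping,
\[
\text{prod}_m^{(k)}(x_1,\dots,x_k)=\text{prod}_m^{(2)}\bigl(\text{prod}_m^{(k-1)}(x_1,\dots,x_{k-1}),\,x_k\bigr),
\]
and deals with the possible growth of the intermediate values by instantiating Lemma~\ref{lem:b3} with the larger domain parameter $M=d-1$.  It first argues that $|\text{prod}_m^{(k)}|\le d-1$ for every $k\le d$, so both inputs always remain in $[-M,M]$, and then runs the recursion $\epsilon_k\le \epsilon_{k-1}+M^2/2^{2m+1}$; with $M=d-1$ this produces a bound of order $d^3/2^{2m}$, which is exactly the source of the cubic factor in the statement.

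Your clipping gadget lets you keep $M=1$ throughout, turning the recursion into $\epsilon_k\le\epsilon_{k-1}+1/2^{2m+1}$ and giving the much sharper $(d-1)/2^{2m+1}$ before you bound it up to $d^3/2^{2m+2}$.  So your argument is both simpler and quantitatively stronger; the projection/contraction step (``clipping can only reduce the error since $\pi_{k-1}\in[-1,1]$'') is exactly the right observation.  Your closing paragraph slightly misidentifies the alternative: one does not need a varying $M_k=1+\epsilon_{k-1}$, since the paper's fixed choice $M=d-1$ also works, but your route avoids the separate argument that intermediate values stay in $[-(d-1),d-1]$.  The size accounting is the same in both versions: $d-1$ sequential stages, each of depth and weight count $O(m)$, give $\cL,\cW=O(dm)$, i.e.\ $\le c_3 m$ with $c_3$ depending on $d$, which is how the constant is used downstream in Theorem~\ref{approx}.
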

\begin{proof}
    Let $M = 1$ and $d \ge 2$. We define $\text{prod}_m^{(k)}(x_1, \dots, x_k) = \text{prod}_m^{(2)} (\text{prod}_m^{(k-1)}(x_1, \dots, x_{k-1}), x_d)$, $k \ge 3$. Clearly $\cW(\text{prod}_m^{(d)}), \cL(\text{prod}_m^{(d)}) \le c_3 d m $, for some absolute constant $c_3$. We also note that, $|\text{prod}_m^{(d)}(x_1, \dots, x_d)| \le \frac{M^2}{2^{2m + 1}} + x_d |\text{prod}_m^{(d-1)}(x_1, \dots, x_{d-1})| \le \frac{M^2}{2^{2m + 1}} + M |\text{prod}_m^{(d-1)}(x_1, \dots, x_{d-1})| \le  \frac{M^2}{2^{2m + 1}} +  \frac{M^3}{2^{2m + 1}} + \dots +  \frac{M^{d-1}}{2^{2m + 1}} + M^d \le  \frac{M^2}{2^{2m + 1}} + (d-2)M^d  = d-2 + \frac{1}{2^{2m+1}} \le d-1$. From induction, it is easy to see that, $\text{prod}_m^{(k)} \le d-1$. Taking $M = d-1$, we get that, 
    \begin{align*}
       & \| \text{prod}_m^{(d)}(x_1, \dots, x_d) - x_1\dots x_d\|_{\fL_\infty([-1,1]^d)}\\
       = &  \| \text{prod}_m^{(2)} (\text{prod}_m^{(d-1)}(x_1, \dots, x_{d-1}), x_d) - x_1\dots x_d\|_{\fL_\infty([-1,1]^d)}\\
       \le &  \|\text{prod}_m^{(d-1)}(x_1, \dots, x_{d-1}) - x_1\dots x_{d-1}\|_{\fL_\infty([-1,1]^d)} + \frac{M^2}{2^{2m + 2}}\\
       \le & \frac{d M^2}{2^{2m + 2}}\\
       = & \frac{d^3}{2^{2m + 2}}.
    \end{align*}
\end{proof}
\section{Supporting Results from the Literature}
This section lists some of the supporting results from the literature, used in the paper.

 \begin{lem}\label{kt_og} \citep{kolmogorov1961} 
The $\epsilon$-covering number of $\sH^\beta ([0,1]^d, \Real, 1)$ can be bounded as,
    \[\log \cN \left((\epsilon; \sH^\beta ([0,1]^d), \|\cdot \|_{\infty}\right) \lesssim \epsilon^{-d/\beta}.\]
\end{lem}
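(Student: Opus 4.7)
The plan is to construct an explicit $\epsilon$-net for $\sH^\beta([0,1]^d, \mathbb{R}, 1)$ via local polynomial approximation followed by coefficient quantization --- the classical Kolmogorov--Tikhomirov strategy. First, I would partition $[0,1]^d$ into axis-aligned cubes $\{Q_i\}_{i=1}^N$ of side-length $h \asymp \epsilon^{1/\beta}$, so that $N \asymp \epsilon^{-d/\beta}$. On each cube $Q_i$ with center $x_i$, any $f \in \sH^\beta([0,1]^d, \mathbb{R}, 1)$ is approximated uniformly by its degree-$\lfloor\beta\rfloor$ Taylor polynomial $P_i(x) = \sum_{|\bs| < \lfloor\beta\rfloor} \frac{\partial^{\bs} f(x_i)}{\bs!}(x-x_i)^{\bs}$ with sup-error at most $C h^\beta \asymp \epsilon$ by the H\"older property (this is exactly the Taylor bound \eqref{apr6_1} already used in the proof of Theorem~\ref{approx}). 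Stitching the local polynomials together yields a piecewise-polynomial function $\tilde f$ with $\|\tilde f - f\|_\infty \lesssim \epsilon$.

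The next step is to quantize. Each $P_i$ is determined by $K = \binom{d+\lfloor\beta\rfloor}{\lfloor\beta\rfloor} = O_{d,\beta}(1)$ coefficients, each drawn from a bounded interval (since $\|f\|_{\sH^\beta}\le 1$). Quantizing every coefficient to a grid of spacing $\eta \asymp \epsilon$ costs an extra $O(\epsilon)$ in sup-error while producing at most $(1/\eta)^{K}$ candidates per cube. Multiplying over the $N$ cubes yields the rough bound $\log \cN(\epsilon;\sH^\beta, \|\cdot\|_\infty) \lesssim NK\log(1/\eta) \asymp \epsilon^{-d/\beta}\log(1/\epsilon)$, which is one logarithmic factor away from the target.

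Removing this log factor is the heart of the argument, and this is where the plan gets delicate. Rather than quantizing all $K$ coefficients independently on every cube, I would expand any $f \in \sH^\beta$ in a multi-resolution basis of compactly supported B-splines (or Daubechies wavelets) $\{\psi_{j,\bk}\}$ at dyadic scale $2^{-j}$. Standard Besov/H\"older characterizations give $|c_{j,\bk}| \lesssim 2^{-j(\beta+d/2)}$ for the coefficients of $f$ in this basis, so it suffices to retain levels $j \le J$ with $2^{-J\beta}\asymp \epsilon$ and quantize each retained $c_{j,\bk}$ at scale-dependent resolution $\eta_j \asymp 2^{-j\beta}$. The aggregate bit-budget $\sum_{j\le J} 2^{jd}\log(1/\eta_j)$ is a geometric series dominated by its top term $2^{Jd}\asymp \epsilon^{-d/\beta}$, with each top-scale coefficient requiring only $O(1)$ bits; this yields the sharp bound $\log\cN\lesssim \epsilon^{-d/\beta}$.

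The main obstacle is exactly this log-removal step: the single-scale Taylor quantization double-counts because the polynomials on adjacent cubes are forced to be nearly consistent by the H\"older condition on the overlapping derivatives. The multi-resolution encoding makes this redundancy explicit by decoupling the free parameters across scales so that the finest scale dominates the counting. Verifying the coefficient decay $|c_{j,\bk}| \lesssim 2^{-j(\beta+d/2)}$ and checking the geometric-series estimate are the technical steps that need care; everything else reduces to standard Taylor-remainder bookkeeping already used earlier in the paper.
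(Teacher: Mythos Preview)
The paper does not prove this lemma at all; it is listed under ``Supporting Results from the Literature'' with a bare citation to \cite{kolmogorov1961}. So there is no in-paper proof to compare your proposal against.

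Your sketch is a correct outline of the classical argument and would stand as an independent proof. The first two paragraphs (local Taylor approximation on a grid of side $h\asymp\epsilon^{1/\beta}$ followed by naive coefficient quantization) are standard and yield the bound up to a $\log(1/\epsilon)$ factor, exactly as you say. Your proposed log-removal via a multi-resolution spline/wavelet expansion with scale-dependent quantization is one valid modern route; the original Kolmogorov--Tikhomirov paper instead removes the log by encoding only the \emph{differences} of the quantized derivative values between adjacent grid cells, which are $O(h^{\beta-|\bs|})$ by the H\"older condition and hence require only $O(1)$ bits per cell at the finest precision. Both devices exploit the same redundancy you identified (neighboring local polynomials are nearly determined by one another), so your plan is sound; just be aware that if you carry it out via wavelets you will need the spline/wavelet system to have at least $\lfloor\beta\rfloor$ vanishing moments to obtain the coefficient decay $|c_{j,\bk}|\lesssim 2^{-j(\beta+d/2)}$ you invoke.
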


 \begin{lem}\label{lem_anthony_bartlett} (Theorem 12.2 of \cite{anthony1999neural})
     Assume for all $f \in \cF$, $\|f\|_{\infty} \leq M$. Denote the pseudo-dimension of $\cF$ as $\text{Pdim}(\cF)$, then for $n \geq \text{Pdim}(\cF)$, we have for any $\epsilon$ and any $X_1, \ldots, X_n$,
\[
\cN\epsilon, \sF_{|_{X_{1:n}}}, \ell_\infty) \leq \left( \frac{2eM  n}{\epsilon\text{Pdim}(\cF)}\right)^{\text{Pdim}(\cF)}.
\]
 \end{lem}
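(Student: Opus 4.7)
The plan is to reproduce the classical Pollard/Anthony--Bartlett style argument that reduces an $\ell_\infty$ covering bound on $\cF_{|_{X_{1:n}}}$ to a combinatorial sign-pattern count governed by the pseudo-dimension $d := \text{Pdim}(\cF)$.

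First, I would \emph{discretize the range}. Fix $\epsilon>0$, let $K = \lceil 2M/\epsilon\rceil$, and set $T = \{t_j = -M + j\epsilon : 0 \le j \le K\}$. For each $f \in \cF$ associate the binary matrix $\sigma(f) \in \{0,1\}^{n \times (K+1)}$ defined by $\sigma(f)_{i,j} = \mathbbm{1}[f(X_i) \ge t_j]$. If $\sigma(f) = \sigma(g)$, then for every $i$ the values $f(X_i)$ and $g(X_i)$ fall in the same half-open interval $[t_j, t_{j+1})$, hence $|f(X_i) - g(X_i)| < \epsilon$. Choosing one representative from each equivalence class of the relation $\sigma(f) = \sigma(g)$ therefore produces an $\epsilon$-cover of $\cF_{|_{X_{1:n}}}$ in the $\ell_\infty$ metric, so it suffices to upper bound the number of distinct signatures $|\{\sigma(f) : f \in \cF\}|$.

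Next, I would \emph{invoke the pseudo-dimension via Sauer--Shelah}. By definition, $d = \text{Pdim}(\cF)$ equals the VC dimension of the subgraph class $\cG = \{S_f : f \in \cF\}$ with $S_f = \{(x,t) : f(x) \ge t\}$. The $n(K+1)$ points $\{(X_i, t_j)\}_{i,j}$ form a ground set on which the trace of $\cG$ is in bijection with $\{\sigma(f) : f \in \cF\}$. Sauer--Shelah therefore yields
\[
\left|\{\sigma(f) : f \in \cF\}\right| \;\le\; \sum_{k=0}^{d} \binom{n(K+1)}{k}.
\]
Whenever $n(K+1) \ge d$ (which follows from $n \ge d$ together with $K \ge 1$), the standard inequality $\sum_{k=0}^d \binom{N}{k} \le (eN/d)^d$ gives
\[
\cN(\epsilon,\cF_{|_{X_{1:n}}},\ell_\infty) \;\le\; \left(\frac{e\, n (K+1)}{d}\right)^d \;\le\; \left(\frac{2 e M n}{\epsilon\, d}\right)^d,
\]
which is exactly the claimed bound.

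The proof is essentially bookkeeping once the two main ingredients are in place, so I do not expect a genuinely hard step. The two points that require care are (i) choosing the endpoint convention for the thresholds $t_j$ and the inequalities defining $\sigma(f)$ so that equal signatures really force $\ell_\infty$-closeness by $\epsilon$ (not $2\epsilon$), and (ii) verifying that the hypothesis $n \ge d$ is sufficient to apply $\sum_{k\le d}\binom{N}{k} \le (eN/d)^d$ after replacing $N$ by $n(K+1)$, and that the $K+1 \le 2M/\epsilon + 1$ can be absorbed into the claimed constant $2eMn/(\epsilon d)$. Both are routine manipulations and are handled exactly as in Chapter 12 of Anthony and Bartlett; I would simply transcribe that argument for self-containedness.
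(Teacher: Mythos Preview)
The paper does not prove this lemma; it appears in the appendix section ``Supporting Results from the Literature'' and is simply quoted from Anthony and Bartlett's textbook without argument. Your proposal reproduces exactly the standard proof given there (discretize the range into $\sim 2M/\epsilon$ thresholds, identify the resulting sign matrix with the trace of the subgraph class on the $n(K+1)$ point--threshold pairs, apply Sauer--Shelah), so there is nothing substantive to compare. The only delicate spot is the one you already flagged in point~(ii): with $K=\lceil 2M/\epsilon\rceil$ one has $K+1\le 2M/\epsilon+2$ rather than $K+1\le 2M/\epsilon$, so the final display does not hold literally as written; the usual fix is to drop the trivial threshold $t_0=-M$ (which contributes no information since $\|f\|_\infty\le M$), reducing the grid to $nK$ points, and to note that for $\epsilon>2M$ the bound is vacuous anyway. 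This is pure bookkeeping and does not affect the validity of the approach.
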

 \begin{lem} \label{lem_bartlett}(Theorem 6 of \cite{bartlett2019nearly})
     Consider the function class computed by a feed-forward neural network architecture with $W$ many weight parameters and $U$ many computation units arranged in $L$ layers. Suppose that all non-output units have piecewise-polynomial activation functions with $p+1$ pieces and degrees no more than $d$, and the output unit has the identity function as its activation function. Then the VC-dimension and pseudo-dimension are upper-bounded as
\[
\text{VCdim}(\cF), \text{Pdim}(\cF) \leq C \cdot LW \log(pU) + L^2 W \log d.
\]
 \end{lem}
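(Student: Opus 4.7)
The plan is to bound the pseudo-dimension (which dominates the VC-dimension) by controlling the number of distinct sign patterns of the function $f(x_i; w) - t_i$ over choices of parameter vector $w \in \Real^W$, when $m$ query points $(x_1,t_1),\ldots,(x_m,t_m)$ are fixed. The central idea is a Warren/Milnor-type partition argument: cut the parameter space $\Real^W$ into regions on which each unit's activation is a fixed polynomial piece, and then count the number of output sign patterns on each region separately. The final count must be at least $2^m$ for $m$ to be shatterable, which will yield the stated upper bound after solving a transcendental inequality.

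First I would set up notation. For each layer $\ell \in [L]$, let $U_\ell$ be the number of units and $W_\ell$ the number of weights feeding into layer $\ell$, so that $\sum_\ell U_\ell \le U$ and $\sum_\ell W_\ell = W$. For each unit $u$ at layer $\ell$ and each query point $x_i$, the pre-activation is a polynomial in the weights of layers $1,\ldots,\ell$. Because each activation is piecewise polynomial with $p+1$ pieces and degree at most $d$, the degree of each pre-activation polynomial (as a function of the parameters) is bounded by $d^{\ell-1}$, and the $p$ breakpoints contribute $p$ polynomial boundaries per $(u,x_i)$ pair. I would then define, inductively over $\ell$, a partition $\mathcal{P}_\ell$ of $\Real^W$ refining $\mathcal{P}_{\ell-1}$ such that on each cell the piece chosen by every unit at every sample is fixed. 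Passing from $\mathcal{P}_{\ell-1}$ to $\mathcal{P}_\ell$ involves the sign patterns of $m p U_\ell$ polynomials of degree at most $d^{\ell-1}$ in the $W_{1:\ell} := \sum_{j\le \ell} W_j$ active variables. By the Warren--Milnor bound, each cell of $\mathcal{P}_{\ell-1}$ is subdivided into at most $\bigl( e\, m p U_\ell\, d^{\ell-1} / W_{1:\ell}\bigr)^{W_{1:\ell}}$ subcells.

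Next, within any cell of $\mathcal{P}_L$, the network output at each $x_i$ is a single polynomial in $w$ of degree at most $d^L$, so the signs of the $m$ expressions $f(x_i;w)-t_i$ realize at most $\bigl( e\, m\, d^L / W \bigr)^W$ patterns on that cell (again by Warren). Multiplying the subdivision bounds across $\ell=1,\ldots,L$ with this final factor, the total number of sign patterns is at most
\[
\prod_{\ell=1}^{L} \left(\frac{e\, m p U_\ell\, d^{\ell-1}}{W_{1:\ell}}\right)^{W_{1:\ell}} \cdot \left(\frac{e\, m\, d^L}{W}\right)^{W}.
\]
Taking logs, the exponent sums to $O\bigl( LW \log(mpU) + L^2 W \log d \bigr)$ after absorbing the $W_{1:\ell} \le W$ bookkeeping and using $\sum_\ell (\ell-1) W_{1:\ell} = O(L^2 W)$. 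For shattering we need $2^m$ to be at most this count, and the standard trick (Lemma~2 of Bartlett--Harvey--Liaw--Mehrabian, or direct manipulation: if $2^m \le (am)^b$ then $m \le 2b\log(ab)$) converts the implicit bound $m \le C\bigl( LW \log(pU) + L^2 W \log d + LW\log m\bigr)$ into the explicit $m \le C\bigl( LW \log(pU) + L^2 W \log d\bigr)$.

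The delicate point, and the main obstacle, is the accumulated-degree bookkeeping across layers: a naive bound on the polynomial degree of pre-activations by $d^L$ at every layer would give $L^2$ in the wrong place and yield a factor of $L^3$ instead of $L^2$. Getting the sharp $L^2 W \log d$ term requires assigning the degree $d^{\ell-1}$ precisely to layer $\ell$ and observing that the sum $\sum_\ell W_\ell \cdot (\ell-1)$ telescopes to at most $L\cdot W$ (not $L^2 \cdot W$) once one uses that only $W_\ell$ new variables appear at layer $\ell$; the $L^2$ emerges only after combining with the final depth-$L$ degree bound. A secondary subtlety is that the activations are only piecewise polynomial, so one must verify that refining along polynomial breakpoints keeps the post-activation a polynomial of the claimed degree on each cell, which follows because on a cell each activation is literally a fixed degree-$d$ polynomial of its argument.
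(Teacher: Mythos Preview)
The paper does not prove this lemma: it appears in the ``Supporting Results from the Literature'' section and is simply quoted verbatim from \cite{bartlett2019nearly}, so there is no in-paper argument to compare against. Your sketch is essentially the proof from \cite{bartlett2019nearly} itself---the layered Warren/Milnor partition of parameter space, the per-layer degree bound $d^{\ell-1}$, and the conversion of the implicit inequality $2^m \le \prod_\ell(\cdots)^{W_{1:\ell}}$ into an explicit bound on $m$---and is correct at that level.

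One remark on your final paragraph: the bookkeeping you describe for the $L^2 W\log d$ term is slightly garbled. The exponent in the Warren bound at layer $\ell$ is $W_{1:\ell}$ (the number of active variables), not $W_\ell$, so the relevant sum is $\sum_{\ell}(\ell-1)W_{1:\ell}\le W\sum_{\ell}(\ell-1)=O(L^2W)$, and the $L^2$ already appears here rather than ``only after combining with the final depth-$L$ degree bound.'' Your earlier displayed product and the stated $O(LW\log(mpU)+L^2W\log d)$ are correct, so this confusion is cosmetic rather than a real gap.
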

\end{document}